\documentclass[10pt,twocolumn,letterpaper]{article}

\usepackage{cvpr}              

\usepackage{url}
\usepackage[font=small,skip=2pt]{caption} 
\usepackage{lineno}
\usepackage{subcaption}
\usepackage{pgffor}
\usepackage[toc,page,header]{appendix}
\usepackage{minitoc}
\usepackage{wrapfig}
\usepackage[utf8]{inputenc} 
\usepackage[T1]{fontenc}    
\usepackage{url}            
\usepackage{booktabs}       
\usepackage{amsfonts}       
\usepackage{nicefrac}       
\usepackage{microtype}      
\usepackage{xcolor}         
\usepackage{amsmath}
\usepackage{algorithm}
\usepackage{algpseudocode}
\usepackage{amssymb}
\usepackage{booktabs}
\usepackage{multirow}
\usepackage{graphicx}
\usepackage{float}
\usepackage{enumitem}
\usepackage{tcolorbox}
\usepackage{amsthm}
\theoremstyle{plain}
\theoremstyle{plain}
\usepackage{amsmath}
\usepackage{amssymb}
\usepackage{mathtools}
\usepackage{amsthm}
\newtheorem{theorem}{Theorem}

\newtheorem{lemma}[theorem]{Lemma}

\theoremstyle{definition}

\newtheorem{assumption}[theorem]{Assumption}
\newtheorem{remark}[theorem]{Remark}
\usepackage{graphicx}
\usepackage{subcaption}
\usepackage[table]{xcolor} 
\usepackage{titletoc}
\usepackage{pifont} 
\newcommand{\AppendixTOC}[1][2]{%
  \begingroup
  \setcounter{tocdepth}{#1}%
  \printcontents[app]{l}{1}{}%
  \endgroup
}

\renewcommand{\eqref}[1]{(\ref{#1})}

\definecolor{cvprblue}{rgb}{0.21,0.49,0.74}
\usepackage[pagebackref,breaklinks,colorlinks,allcolors=cvprblue]{hyperref}

\def\paperID{\#\#\#\#} 
\def\confName{CVPR}
\def\confYear{2026}

\title{Fed-ADE: Adaptive Learning Rate for Federated Post-adaptation under Distribution Shift}
\author{Heewon Park$^1$\thanks{Equal contribution.}, Mugon Joe$^2$\footnotemark[1], Miru Kim$^1$\footnotemark[1], Kyungjin Im$^{2}$, Minhae Kwon$^{1}$\thanks{Corresponding author. {\tt minhae.kwon@skku.edu}}\\
$^1$ Department of Electrical and Computer Engineering, Sungkyunkwan University, Republic of Korea\\
$^2$ Soongsil University, Seoul, Republic of Korea \\
}

\begin{document}
\maketitle
\begin{abstract}
Federated learning (FL) in post-deployment settings must adapt to non-stationary data streams across heterogeneous clients without access to ground-truth labels. A major challenge is learning rate selection under client-specific, time-varying distribution shifts, where fixed learning rates often lead to underfitting or divergence. We propose Fed-ADE (\textit{\textbf{Fed}erated \textbf{A}daptation with \textbf{D}istribution Shift \textbf{E}stimation}), an unsupervised federated adaptation framework that leverages lightweight estimators of distribution dynamics. Specifically, Fed-ADE employs uncertainty dynamics estimation to capture changes in predictive uncertainty and representation dynamics estimation to detect covariate-level feature drift, combining them into a per-client, per-timestep adaptive learning rate. We provide theoretical analyses showing that our dynamics estimation approximates the underlying distribution shift and yields dynamic regret and convergence guarantees. Experiments on image and text benchmarks under diverse distribution shifts (label and covariate) demonstrate consistent improvements over strong baselines. These results highlight that distribution shift-aware adaptation enables effective and robust federated post-adaptation under real-world non-stationarity. The code is available at~\url{https://github.com/h2w1/Fed-ADE}
\end{abstract}

\section{Introduction}
\label{sec:intro}

Machine learning models are increasingly deployed across edge devices such as smartphones, IoT sensors, and autonomous systems, where each device continuously receives a non-stationary data stream~\cite{decentralized1,decentralized2,BMIL-ASAP,BMIL-Cluvar}. FL has emerged as a key paradigm to support such decentralized learning under privacy constraints, enabling model collaboration without raw data sharing~\cite{fedavg,fedprox,BMIL-Partial,ewha-FL1,FedAlt/Sim,yang2024fedas,ewha-FL2,FedRep,ditto}. In practice, models are pre-trained on centralized datasets and deployed to client devices, but their performance often degrades quickly due to distribution shifts in the real-time data. These shifts appear in multiple forms, such as label distribution shift and covariate shift, and evolve in client-specific and heterogeneous ways~\cite{shift1,BMIL-ICT,FedPOE,BMIL-iot,choi-tt,ko-tt,BMIL-offon}.

This environment introduces two major sources of heterogeneity: (i) shift heterogeneity, where each client experiences different temporal dynamics of distribution shift~\cite{BMIL-iot,FedPOE,shift1,shift2,shift3,shift4,BMIL-JCN}, and (ii) data heterogeneity, where clients’ local datasets already differ in scale, domain, or semantics~\cite{FedRep,ditto,yang2024fedas,FedAlt/Sim,BMIL-SL}. Together, these factors make federated post-deployment adaptation highly challenging, as a single global model or fixed training learning rate is insufficient to cope with diverse and evolving conditions.

A critical yet underexplored aspect of this problem is the choice of learning rate. In online adaptation, learning rate selection directly governs both stability and responsiveness. Fixed learning rates fail to reflect data distribution shifts, leading to underfitting when the learning rate is too small, or divergence when the learning rate is too large~\cite{ATLAS,lr1,lr2,young-opt}. The difficulty is amplified in FL, where hundreds of heterogeneous clients undergo distinct, time-varying distribution shifts without access to ground-truth labels.

\begin{figure*}
\centering
\includegraphics[width=\textwidth]{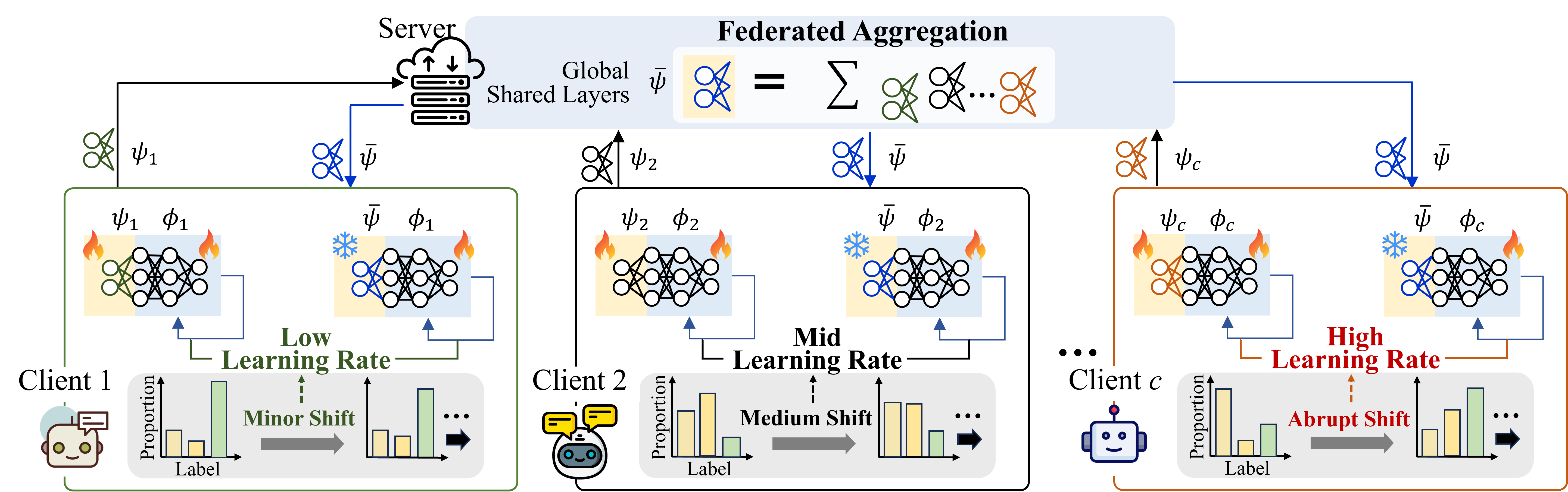}      
\caption{Overview of the Fed-ADE. Each client receives a pre-trained model, splits it into shared ($\psi_c$) and personalized ($\phi_c$) layers, and adapts to unlabeled, distribution-shifting data using an adaptive learning rate.}
\label{fig:overview}
\vspace{-0.3cm}
\end{figure*}

To address this, we propose Fed-ADE, a lightweight and unsupervised adaptation framework. Figure~\ref{fig:overview} shows the overview of the proposed Fed-ADE system. Fed-ADE introduces two estimators of distribution dynamics: \emph{Uncertainty dynamics estimation}, which captures predictive uncertainty changes, and \emph{representation dynamics estimation}, which detects embedding-level drift. By combining these signals, Fed-ADE assigns a per-client, per-timestep adaptive learning rate that reflects local distribution. Unlike prior approaches relying on multi-model ensembles or costly hyperparameter search, Fed-ADE requires no extra communication or supervision.
Furthermore, we provide theoretical analyses showing that dynamics estimation approximates underlying distribution evolution and yields dynamic regret guarantees. Empirically, we validate Fed-ADE across image and text benchmarks under label and covariate shift environments, where it consistently improves adaptation accuracy and efficiency compared to strong baselines.

In summary, our contributions are as follows.
\begin{itemize}
    \item We propose Fed-ADE, a novel framework for unsupervised federated post-adaptation under heterogeneous and time-varying distribution shifts, addressing real-time data distribution shifts.
    \item We introduce two lightweight estimators: Uncertainty dynamics estimation, which captures predictive uncertainty changes, and representation dynamics estimation, which detects embedding-level drift. These estimators enable per-client, per-timestep adaptive learning rates without requiring ground-truth labels.
    \item We provide theoretical analyses showing that distribution shift estimation approximates true distribution shift and that the resulting learning dynamics satisfy dynamic regret bound and convergence guarantees.
    \item We conduct extensive experiments on four image and a text benchmarks under label and covariate shift, demonstrating that Fed-ADE consistently outperforms state-of-the-art baselines while remaining computationally efficient.
\end{itemize}

\section{Related Works}
\label{related_work}
\paragraph{Federated Learning for Decentralized Data Environments}
FL is a widely adopted framework for collaboratively training machine learning models across multiple decentralized clients while preserving data privacy~\cite{fedavg,fedprox,yang2024fedas,FedRep,scaffold,MOON}. Early approaches such as FedAvg~\cite{fedavg} proposed aggregating locally updated model parameters on a central server without sharing raw data. However, this paradigm encounters challenges caused by data heterogeneity, as each client may hold data drawn from distinct distributions~\cite{fedprox,scaffold,MOON}. To mitigate these issues, personalized FL methods have been introduced to better adapt models to the local characteristics of each client’s data. Among them, partial-sharing strategies train only a subset of layers collaboratively via FL, while the remaining layers are updated locally~\cite{FedAlt/Sim,yang2024fedas,FedRep,BMIL-iot}. These methods enhance personalization and local adaptation, yet they commonly assume stationary or globally synchronized data distributions across clients. In contrast, our work introduces a novel system that supports post-deployment adaptation by employing an unsupervised, per-client adaptive learning rate mechanism, which dynamically adjusts to real-time distributional shifts.

\vspace{-0.2cm}
\paragraph{Distribution Shift Adaptation}
Distribution shift arises in various forms, including label and covariate shift~\cite{dist1,dist2,dist3,BMIL-OASIS,dist4,BMIL-POA}. Online label shift adaptation adjusts predictions without true labels, as in ATLAS~\cite{ATLAS} and FTH~\cite{FTH}. Covariate shift is commonly addressed by domain adaptation methods such as UDA~\cite{UDA} and UNIDA~\cite{UNIDA}, which mainly assume centralized settings and are not readily applicable to federated or online scenarios. In federated environments, adaptation approaches such as Fed-POE~\cite{FedPOE}, ATP~\cite{FL-TTA}, and FedSPL~\cite{FL-TTA2} mitigate client heterogeneity or adapt models to unlabeled target clients through ensemble strategies, adaptation-specific rules, or pseudo-labeling. However, these methods often introduce additional computational overhead or auxiliary components during adaptation.
\section{Preliminaries}
\label{problem_formulation}
\label{sec3.1}

\paragraph{Pre-training ($t=0$)}
Let $\theta$ be the global model trained at the server using pre-collected data $(\mathbf{x}_{\mathcal{G}}^0,\mathbf{y}_{\mathcal{G}}^0)$, where $\mathbf{x}_{\mathcal{G}}^0$ follows feature distribution $\mathbf{Q}_{{\mathcal{G}},\mathbf{x}}^{0}$ and $\mathbf{y}_{\mathcal{G}}^0$ follows label distribution $\mathbf{Q}_{{\mathcal{G}},\mathbf{y}}^{0}$. We also denote by $\mathbf{Q}_{{\mathcal{G}},\mathbf{x}\mid \mathbf{y}}^{0}$ the conditional distribution of features given labels, and by $\mathbf{Q}_{{\mathcal{G}},\mathbf{y}\mid \mathbf{x}}^{0}$ the conditional distribution of labels given features.
Each label $\mathbf{y}_{\mathcal{G}}^0$ is an index $i$, where $i$ is an element of the class index set $\mathcal{I}$. In the distribution $\mathbf{Q}_{\mathbf{y}_{\mathcal{G}}^0}$, the $i$-th component, $[\mathbf{Q}_{\mathbf{y}_{\mathcal{G}}^0}]_i$, represents the proportion of class $i \in \mathcal{I}$ in the label set $\mathbf{y}_{\mathcal{G}}^0$.

\vspace{-0.2cm}
\paragraph{Online Distribution Shift ($0 < t \le T$)} Consider a set of federated clients $\mathcal{C}$ collaboratively performing post-deployment adaptation. Each client model $\theta_c$ is initialized from the pre-trained global model $\theta$. At time $t$, client $c \in \mathcal{C}$ observes an unlabeled data stream $\mathbf{x}_c^t$ drawn from a client and time-dependent distribution $\mathbf{Q}_{c,\mathbf{x}}^t$, while the corresponding labels $\mathbf{y}_c^t$ follow $\mathbf{Q}_{c,\mathbf{y}}^t$ but remain unobserved. For brevity, we use $\mathbf{Q}_c^t$ to denote the overall data distribution at time $t$.  

A temporal distribution shift occurs when $\mathbf{Q}_c^t \neq \mathbf{Q}_c^{t-1}$. 
Typical cases include:  
(1) label distribution shift, where $\mathbf{Q}_{c,\mathbf{y}}^t$ changes while $\mathbf{Q}_{c,\mathbf{x}\mid\mathbf{y}}^t$ stays approximately stable and  
(2) covariate shift, where $\mathbf{Q}_{c,\mathbf{x}\mid\mathbf{y}}^t$ changes while $\mathbf{Q}_{c,\mathbf{y}}^t$ is approximately constant. Following prior works~\cite{ATLAS,FTH,sim5}, we model the shift of $\mathbf{Q}_c^t$ as
\begin{equation}
\label{eqn:label_shift}
    \mathbf{Q}_c^t = (1 - \omega(t))\,\mathbf{Q}_c^0 + \omega(t)\,\mathbf{Q}_c^T,
\end{equation}
where $\omega(t) \in [0,1]$ is a weighting function interpolating between the initial distribution $\mathbf{Q}_c^0$ and a later distribution $\mathbf{Q}_c^T$.
In the federated setting, we assume that $\mathbf{Q}_c^0$ aligns with the pre-training data distribution, whereas $\mathbf{Q}_c^T$ represents each client’s distinct distribution evolved after deployment, reflecting client-specific environments and behaviors.  

In this setting, an unsupervised federated post-adaptation framework is required to adjust to the continuously changing data distributions with limited client samples, while maintaining robust global performance.

\vspace{-0.2cm}
\paragraph{Learning Objective with Unsupervised Risk Estimation} Under the federated setting with online distribution shift, each client $c \in \mathcal{C}$ aims to update its local model $\theta_c$ using only unlabeled data. 
We adopt an unsupervised risk estimation framework that enables model updates without access to ground-truth labels. 
Let the expected risk for client $c$ at time $t$ be defined as
\begin{align} 
\label{eq:expected risk}
\mathcal{F}_c^t(\theta_c) &\triangleq \mathbb{E}_{\mathbf{y}_c^t \sim \mathbf{Q}_{c,\mathbf{y}}^t} \bigg[\mathcal{L}\Big(\mathcal{H}(\theta_c; \mathbf{x}_c^t), \mathbf{y}_c^t\Big)\bigg] \notag  \\ 
& = \sum_{i \in \mathcal{I}}[\mathbf{Q}_{c,\mathbf{y}}^t]_i \cdot \mathcal{F}_c^{t,i}(\theta_c),
\end{align}
where $\mathcal{H}(\cdot)$ denotes the softmax prediction of the model, $\mathcal{L}(\cdot)$ is the loss function, and $\mathcal{F}_c^{t,i}(\theta_c)$ is the class-wise risk for class $i$, which can be expressed as $\mathbb{E}_{\mathbf{y}_c^t \sim [\mathbf{Q}_{c,\mathbf{y}}^t]_i}\big[\mathcal{L}\big(\mathcal{H}(\theta_c; \mathbf{x}_c^t), \mathbf{y}_c^t\big)\big]$. The expected risk can be decomposed as
\begin{equation}
\label{eq:risk_decomp}
\mathcal{F}_c^t(\theta_c)
= \sum_{i \in \mathcal{I}} [\mathbf{Q}_{c,\mathbf{y}}^t]_i \, \mathcal{F}_c^{0,i}(\theta_c),
\end{equation}
where $\mathcal{F}_c^{0,i}(\theta_c)$ denotes the class-wise risk computed from the initial data $(\mathbf{x}_c^0, \mathbf{y}_c^0)$, which is assumed to follow the same distribution as the global pre-training data. 
Hence, $\mathcal{F}_c^{0,i}(\theta_c)$ can be approximated by its empirical counterpart $\widehat{\mathcal{F}}_c^{0,i}(\theta_c)$ using $(\mathbf{x}_c^0, \mathbf{y}_c^0)$.  

Since labels are not observable during post-adaptation, the current label distribution $\mathbf{Q}_{c,\mathbf{y}}^t$ must be estimated. We employ the Black-box Shift Estimation (BBSE) method~\cite{BBSE} for this purpose, which corrects the pseudo-label distribution via $\mathbf{Q}_{c,\mathbf{y}}^t \approx \mathbf{M}^{-1} \mathbf{Q}_{c,\hat{\mathbf{y}}}^t.$ Here, $\mathbf{M}$ is a confusion matrix computed on the server using pre-training data $(\mathbf{x}_{\mathcal{G}}^0, \mathbf{y}_{\mathcal{G}}^0)$ and shared with clients, while $\mathbf{Q}_{c,\hat{\mathbf{y}}}^t$ denotes the prediction distribution over the unlabeled batch $\mathbf{x}_c^t$.\footnote{Details of BBSE and the derivation of the risk estimator are provided in Appendix~\ref{appendix:proof_risk_estimation}.} Using these estimates, the unsupervised risk can be approximated as
\begin{align}
\label{eq:estimated_risk}
\widehat{\mathcal{F}}_c^t(\theta_c)
= \sum_{i \in \mathcal{I}}
\Big[\widehat{\mathbf{M}}^{-1} \mathbf{Q}_{c,\hat{\mathbf{y}}}^t\Big]_i
\cdot \widehat{\mathcal{F}}_c^{0,i}(\theta_c),
\end{align}
where $\widehat{\mathbf{M}}$ is empirical estimator of the confusion matrix.  

Finally, the federated learning objective based on the estimated unsupervised risk in~\eqref{eq:estimated_risk} is to jointly optimize all client models as follows.
\begin{align}
\label{FL_obj}
\{\theta_c^{*}\}_{c=1}^{C}
= \arg\min_{\theta_c}
\sum_{c=1}^{C} \widehat{\mathcal{F}}_c^t(\theta_c)
\end{align}

\paragraph{Personalized Federated Learning with Layer Decoupling} To achieve the goal in~\eqref{FL_obj} under heterogeneous client distributions (i.e., $\mathbf{Q}_c^t$ may differ across $c$ and shift over time), we adopt a partial-sharing FL method that decouples each client model $\theta_c$ into shared layers $\psi_c$ and personalized layers $\phi_c$, where only the shared layers $\psi_c$ are communicated with the server while the personalized layers $\phi_c$ are retained locally~\cite{FedAlt/Sim,yang2024fedas,FedRep,BMIL-iot,BMIL-SL,LG-FedAvg}.
Specifically, we adopt two step local update method~\cite{FedAlt/Sim,yang2024fedas,FedRep,BMIL-iot}. Each client participates in $R$ rounds of federated communication.
In round $r\in\{1,\ldots,R\}$, a subset of clients $\mathcal{C}^{(r)}\subset\mathcal{C}$ updates both shared and personalized layers locally with the learning rate $\eta_c^{t}$, and the risk estimator $\widehat{\mathcal{F}}_c^t\big(\{\psi_c,\phi_c\}\big) $ obtained in \eqref{eq:estimated_risk} as follows.
\begin{equation}
\label{eq:local_joint_update}
\{\psi_c,\phi_c\}\;\leftarrow\;\{\psi_c,\phi_c\}\;-\;\eta_c^t\,\nabla_{\psi_c,\phi_c}\,\widehat{\mathcal{F}}_c^t\big(\{\psi_c,\phi_c\}\big)
\end{equation}
Then, each clients sends $\psi_c$ to the server, which aggregates them using a weighted average based on each client’s local dataset size $N_{c}^{t}$ as follows.
\begin{equation}
\label{eq:aggregate}
\bar{\psi} \;=\; \frac{1}{\sum_{c\in \mathcal{C}^{(r)}} N_c^{t}} \sum_{c\in \mathcal{C}^{(r)}} N_c^{t}\,\psi_c
\end{equation}
The server broadcasts $\bar{\psi}$ to all clients, who set $\psi_c\leftarrow\bar{\psi}$ and thus hold $\theta_c=\{\bar{\psi},\phi_c\}$. After receiving $\bar{\psi}$, clients freeze the shared layers and adapt only the personalized layers as follows.
\begin{equation}
\label{eq:local_personal_update}
\phi_c\;\leftarrow\;\phi_c\;-\;\eta_c^t\,\nabla_{\phi_c}\,\widehat{\mathcal{F}}_c^t\big(\{\psi_c,\phi_c\}\big) 
\end{equation} 
With this update strategy, clients can preserve aggregated global knowledge in $\bar{\psi}$ while capturing the client-specific structure in $\phi_c$.
\begin{algorithm}[t]
\caption{\textbf{Fed-ADE}}
\label{alg:fedade}
\begin{algorithmic}[1]
\State \textbf{Inputs:} Total rounds $R$, and learning rate bounds $[\eta_{\min}, \eta_{\max}]$
\State \textbf{Initialize:} Local model $\{\psi_c,\phi_c\}$, previous summaries $(\mathbf{q}_c^{t-1},\mathbf{z}_c^{t-1})$
\For{$r = 1,2,\dots,R$} 
    \State Server samples participants $\mathcal{C}^{(r)}\subset\mathcal{C}$
    \For{$c \in \mathcal{C}^{(r)}$ \textbf{in parallel}} 
        \State Build unsupervised risk $\widehat{\mathcal{F}}_c^{t}(\psi_c,\phi_c)$ as~\eqref{eq:estimated_risk}
        \State Compute batch summaries $\mathbf{q}_c^{t}$ and $\mathbf{z}_c^{t}$
        \State Estimate dynamics $\mathcal{S}_{\mathrm{unc},c}^{t}$ as~\eqref{eq:S_lab}, $\mathcal{S}_{\mathrm{rep},c}^{t}$ as~\eqref{eq:S_rep}
        \State Combine shift dynamic signals $ \mathcal{S}_c^{t}$ as~\eqref{eq:S_combined}
        \State Set adaptive learning rate $\eta_c^{t}$ as~\eqref{eq:eta_map}
        
        \State Update local model $\{\psi_c,\phi_c\}$ by~\eqref{eq:local_joint_update}
        
        \State Send $\psi_c$ to server
    \EndFor
    \State Server aggregates to obtain $\bar{\psi}$ via~\eqref{eq:aggregate}
    \State Server broadcasts $\bar{\psi}$
    \For{$c \in \mathcal{C}^{(r)}$ \textbf{in parallel}}
        \State Replace shared layers $\psi_c\leftarrow\bar{\psi}$
        \State Update personalized layers $\phi_c$ as ~\eqref{eq:local_personal_update}
        \State Cache $(\mathbf{q}_c^{t-1},\mathbf{z}_c^{t-1})\leftarrow(\mathbf{q}_c^{t},\mathbf{z}_c^{t})$
    \EndFor
\EndFor
\end{algorithmic}
\end{algorithm}


\section{Distribution Shift Driven Federated Post-adaptation}
\label{sec:SEAD}
In this section, we introduce Fed-ADE, a federated post-adaptation framework designed to address heterogeneous and time-varying distribution shifts across clients. 
Fed-ADE adaptively controls learning rates of each client based on the degree of local distribution shift, enabling fast yet stable post-deployment adaptation. The overall process of Fed-ADE is shown in Algorithm~\ref{alg:fedade}.

\subsection{Distribution Shift Adaptive Learning Rate}
The effectiveness of adaptation in~\eqref{eq:local_joint_update} and~\eqref{eq:local_personal_update} 
hinges on selecting a learning rate $\eta_c^{t}$ that reflects how rapidly the local data distribution shifts at client $c$ at timestep $t$.  
To capture this temporal shift, Fed-ADE introduces a distribution dynamics signal $\mathcal{S}_c^{t}$ that quantifies the magnitude of the observed distribution shift.  
The learning rate is then adaptively scheduled according to this signal as follows.
\begin{equation}
\label{eq:eta_map}
\eta_{c}^{t}
= \eta_{\min} + (\eta_{\max}-\eta_{\min})\,\mathcal{S}_c^{t}
\end{equation}
The adaptive rate $\eta_c^{t}$ is bounded within $[\eta_{\min}, \eta_{\max}]$. 
A higher $\mathcal{S}_c^{t}$ indicates stronger distribution shift, resulting in a larger $\eta_c^{t}$ for faster adaptation, 
whereas a smaller $\mathcal{S}_c^{t}$ implies stability, leading to a lower $\eta_c^{t}$ for conservative updates.  
This mechanism allows Fed-ADE to automatically balance adaptivity and stability in real-time across heterogeneous clients.

\subsection{Distribution Shift Dynamics Signal}
We next describe how the dynamics signal $\mathcal{S}_c^{t}$ is constructed.  
At each timestep $t$, client $c$ estimates the extent of its local distribution shift using two complementary components:  
(i) an uncertainty-based estimator $\mathcal{S}_{\mathrm{unc}}^{t}$ that captures predictive uncertainty changes, and
(ii) a representation-based estimator $\mathcal{S}_{\mathrm{rep}}^{t}$ that measures embedding-level drift.  
These signals are combined to form a unified signal as follows.
\begin{equation}
\label{eq:S_combined}
\mathcal{S}_{c}^{t}
= \frac{1}{2}\big(\mathcal{S}_{\mathrm{unc}}^{t} + \mathcal{S}_{\mathrm{rep}}^{t}\big)\in [0,1]
\end{equation}
Here, larger $\mathcal{S}_c^{t}$ indicates stronger local drift, prompting a corresponding increase in the adaptive learning rate.

\paragraph{Uncertainty Dynamics Estimation}
To capture temporal changes in predictive uncertainty, each client summarizes its model outputs over the current data batch $\mathbf{x}_c^t$ using the mean softmax vector as
\begin{equation}
\label{eq:batch_q}
\mathbf{q}_{c}^{t}
= \frac{1}{|\mathbf{x}_c^t|}\sum_{x\in \mathbf{x}_c^t}\mathcal{H}(\{\psi_c,\phi_c\};x),
\end{equation}
where $\mathbf{q}_{c}^{t}$ represents the aggregated predictive distribution.  
This serves as an entropy-based proxy for uncertainty, providing a label-free summary of class-level belief of the model.  
Batch-level averaging normalizes for batch size and mitigates per-sample stochasticity.  
The temporal change in predictive uncertainty is then measured via the cosine distance between consecutive batches as follows.
\begin{equation}
\label{eq:S_lab}
\mathcal{S}_{\mathrm{unc}}^{t} = 1 - \operatorname{cos} \big(\mathbf{q}_{c}^{t-1},\,\mathbf{q}_{c}^{t}\big)\in [0,1]
\end{equation}
This yields values in $[0,1]$ since both $\mathbf{q}_{c}^{t-1}$ and $\mathbf{q}_{c}^{t}$ are normalized probability vectors.  
Smaller $\mathcal{S}_{\mathrm{unc},c}^{t}$ indicates stable predictions with low uncertainty drift, whereas values closer to $1$ reflect stronger predictive changes.  
Computation requires storing only $\mathbf{q}_{c}^{\,t-1}$, incurring $O(|\mathcal{I}|)$ memory overhead.

\paragraph{Representation Dynamics Estimation}
To capture shifts in the embedding space, Fed-ADE summarizes the feature representation as an $\ell_2$-normalized batch mean, which ensures that cosine distance reflects only directional changes rather than scale differences, as
\begin{equation}
\label{eq:z_mean}
\mathbf{z}_{c}^{t}
= \frac{1}{|\mathbf{x}_c^t|}\sum_{x\in \mathbf{x}_c^t}\frac{h_{\psi_c}(x)}{\|h_{\psi_c}(x)\|_2}
\in \mathbb{R}^d,
\end{equation}
where $h_{\psi_c}(\cdot)$ denotes the feature extractor parameterized by $\psi_c$, and $\mathbf{z}_{c}^{t}$ represents the batch-level latent feature vector.  
The magnitude of the feature-level shift is then computed as follows.
\begin{equation}
\label{eq:S_rep}
\mathcal{S}_{\mathrm{rep}}^{t}
= \tfrac{1}{2}\big(1 - \operatorname{cos}\!\big(\mathbf{z}_{c}^{t-1},\,\mathbf{z}_{c}^{t}\big)\big)
\in [0,1]
\end{equation}
The scaling value $\tfrac{1}{2}$ normalizes the cosine distance from its natural range $[0,2]$ to $[0,1]$, aligning it with the uncertainty dynamics signal $\mathcal{S}_{\mathrm{unc}}^{t}$.
This estimator is fully label-free, operates locally, and only requires storing $\mathbf{z}_{c}^{t-1}$, incurring $O(d)$ memory overhead.

\section{Theoretical Analyses}
\label{Theoretical Analyses}
We provide theoretical analyses underpinning Fed-ADE. We first establish that the cumulative uncertainty and representation dynamics surrogates based on cosine similarity accurately approximate their respective true distribution shifts in unlabeled federated settings. 
These cumulative quantities are then used to derive a dynamic regret bound, in which each client’s adaptive learning rate $\eta_c^{t}$ is determined by its instantaneous combined shift signal $\mathcal{S}_c^{t}$, achieving the standard min–max optimal rate under non-stationarity. The analysis that verifies the adaptive updates in Fed-ADE satisfy convergence guarantees under non-stationary environments is provided in Appendix~\ref{appendix: convergnece}.

\subsection{Cumulative Shift Surrogates and Error Bounds}
\paragraph{Cumulative Uncertainty Dynamics}
The true cumulative deviation of the predictive-distribution marginal 
$\sum_{t=1}^{T}\|\mathbf Q_{c,\mathbf y}^{t}-\mathbf Q_{c,\mathbf y}^{t-1}\|_1$ 
is unobservable in practice. 
We instead use the surrogate 
$\bar{\mathcal S}_{\mathrm{unc}}\triangleq\sum_{t=1}^{T}\mathcal S_{\mathrm{unc}}^{t}$. 
Under standard calibration, concentration, and non-degeneracy assumptions, 
$\bar{\mathcal S}_{\mathrm{unc}}$ tracks the true predictive-distribution path length up to a small additive error.

\begin{theorem}[Error Bound: Cumulative Uncertainty Dynamics]
\label{th:estimate_uncertainty}
Assume $\epsilon$-calibration with respect to $\mathbf Q_{c,\mathbf y}^{t}$, i.e., 
$\|\mathcal H(\{\psi_c,\phi_c\};\mathbf x_c^{t})-\mathbf y_c^{t}\|_2\le\epsilon_t$ in expectation. 
Then, the cumulative estimation error is bounded as
\begin{equation}\label{eq:unc_error_bound}
\Big| \bar{\mathcal S}_{\mathrm{unc}} 
- \textstyle\sum_{t=1}^{T}\|\mathbf Q_{c,\mathbf y}^{t}-\mathbf Q_{c,\mathbf y}^{t-1}\|_1 \Big|
\;\le\;
K_{\cos}\sum_{t=2}^{T}(\epsilon_t+\epsilon_{t-1}),
\end{equation}
where $K_{\cos}$ is the Lipschitz constant of the cosine similarity function.
\end{theorem}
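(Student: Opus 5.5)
The plan is to compare, at each timestep, the observed surrogate $\mathcal S_{\mathrm{unc}}^{t}=1-\cos(\mathbf q_c^{t-1},\mathbf q_c^{t})$ with an \emph{oracle} surrogate built from the true label marginals, $\mathcal S_{\star}^{t}\triangleq 1-\cos(\mathbf Q_{c,\mathbf y}^{t-1},\mathbf Q_{c,\mathbf y}^{t})$. The triangle inequality then splits the per-step error into two pieces:
\begin{equation*}
\big|\mathcal S_{\mathrm{unc}}^{t}-\|\mathbf Q_{c,\mathbf y}^{t}-\mathbf Q_{c,\mathbf y}^{t-1}\|_1\big|
\le \big|\mathcal S_{\mathrm{unc}}^{t}-\mathcal S_{\star}^{t}\big|
+\big|\mathcal S_{\star}^{t}-\|\mathbf Q_{c,\mathbf y}^{t}-\mathbf Q_{c,\mathbf y}^{t-1}\|_1\big|.
\end{equation*}
The first piece is a statistical (calibration) error. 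The second is a purely geometric mismatch between the cosine distance and the $\ell_1$ path length on the simplex. Summing over $t$ and using $|\sum_t a_t|\le\sum_t|a_t|$ reduces the theorem to bounding each piece.

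\textbf{Calibration term.} I would treat $1-\cos(\cdot,\cdot)$ as $K_{\cos}$-Lipschitz in each argument on the region where the batch means stay away from the origin. This is where the non-degeneracy assumption mentioned before the theorem enters: it keeps $\|\mathbf q_c^t\|_2$ and $\|\mathbf Q_{c,\mathbf y}^t\|_2$ bounded below, and both are probability vectors, so $\|\cdot\|_2\ge |\mathcal I|^{-1/2}$ automatically. This gives
\begin{equation*}
|\mathcal S_{\mathrm{unc}}^{t}-\mathcal S_{\star}^{t}|\le K_{\cos}\big(\|\mathbf q_c^{t}-\mathbf Q_{c,\mathbf y}^{t}\|_2+\|\mathbf q_c^{t-1}-\mathbf Q_{c,\mathbf y}^{t-1}\|_2\big).
\end{equation*}
To control each summand, I would write $\mathbf q_c^t-\mathbf Q_{c,\mathbf y}^t$ as the batch average of $\mathcal H(\{\psi_c,\phi_c\};x)-\mathbf y$, with labels one-hot and their batch average concentrating on $\mathbf Q_{c,\mathbf y}^t$. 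Convexity of the norm (Jensen) then converts the $\epsilon_t$-calibration hypothesis into $\mathbb E\|\mathbf q_c^t-\mathbf Q_{c,\mathbf y}^t\|_2\le\epsilon_t$. Summing over $t=2,\dots,T$ yields exactly $K_{\cos}\sum_{t=2}^{T}(\epsilon_t+\epsilon_{t-1})$.

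\textbf{Boundary and geometric terms.} At $t=1$, the previous summary $\mathbf q_c^0$ comes from the pre-trained model on data aligned with $\mathbf Q_c^0$, which the setting assumes matches the pre-training distribution. I would therefore argue that the $t=1$ contribution is absorbed, which is why the sum on the right-hand side starts at $t=2$.

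The main obstacle is the geometric piece $|\mathcal S_\star^t-\|\Delta\mathbf Q^t\|_1|$, which the stated bound leaves no room for. My first attempt would be to exploit the interpolation model \eqref{eqn:label_shift}. Under it, every increment $\mathbf Q_{c,\mathbf y}^{t}-\mathbf Q_{c,\mathbf y}^{t-1}=(\omega(t)-\omega(t-1))(\mathbf Q_{c,\mathbf y}^T-\mathbf Q_{c,\mathbf y}^0)$ lies along a fixed direction. Both the cosine distance and the $\ell_1$ norm are then functions of the single scalar $|\omega(t)-\omega(t-1)|$, and I would invoke the non-degeneracy normalization to identify the two measures along this segment. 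If an exact identification fails, the fallback is to show that the discrepancy is of higher order in the increment, and to fold it into the constant $K_{\cos}$ under the stated assumptions. I expect this identification to be the delicate point, because it is exactly where the theorem's two-sided form depends on the paper's implicit convention that the cosine surrogate measures the true path length.
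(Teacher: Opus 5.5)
Your calibration step is the right idea and coincides with the paper's Steps 1--3. Your justification via $\|\mathbf p\|_2\ge|\mathcal I|^{-1/2}$ is actually better than the paper's claim that probability vectors are unit-norm. Up to the repairs noted below, it controls $\big|\bar{\mathcal S}_{\mathrm{unc}}-\sum_{t}\big(1-\cos(\mathbf Q_{c,\mathbf y}^{t-1},\mathbf Q_{c,\mathbf y}^{t})\big)\big|$. The genuine gap is the geometric piece you flagged. Neither of your fixes can close it, because the statement as written is false. On the simplex, $1-\cos(\mathbf a,\mathbf b)=\big(\|\mathbf a-\mathbf b\|_2^2-(\|\mathbf a\|_2-\|\mathbf b\|_2)^2\big)/(2\|\mathbf a\|_2\|\mathbf b\|_2)\le\tfrac{|\mathcal I|}{2}\|\mathbf a-\mathbf b\|_2^2$. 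So under the interpolation model the cosine distance between consecutive marginals is quadratic in $\omega(t)-\omega(t-1)$. By contrast, $\|\mathbf Q_{c,\mathbf y}^{t}-\mathbf Q_{c,\mathbf y}^{t-1}\|_1=|\omega(t)-\omega(t-1)|\,\|\mathbf Q_{c,\mathbf y}^{T}-\mathbf Q_{c,\mathbf y}^{0}\|_1$ is linear in it. Hence the two cannot be identified along the segment, and the discrepancy is first order rather than higher order. Since it involves no $\epsilon_t$, it also cannot be absorbed into the coefficient $K_{\cos}$ of $\sum_t(\epsilon_t+\epsilon_{t-1})$. Concretely, take the linear schedule $\omega(t)=t/T$ with calibration approaching perfection ($\mathbf q_c^t\to\mathbf Q_{c,\mathbf y}^t$, $\epsilon_t\to0$). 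Then $\bar{\mathcal S}_{\mathrm{unc}}=O(1/T)$, whereas the $\ell_1$ path length equals $\|\mathbf Q_{c,\mathbf y}^{T}-\mathbf Q_{c,\mathbf y}^{0}\|_1$ and the right-hand side tends to $0$. Even more simply, for $T=1$, $\mathbf Q_{c,\mathbf y}^{0}=(1,0)$, $\mathbf Q_{c,\mathbf y}^{1}=(\tfrac12,\tfrac12)$ and $\mathbf q_c^t=\mathbf Q_{c,\mathbf y}^t$, the right-hand side is an empty sum while the left-hand side is $1/\sqrt2$.

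The paper's own proof has exactly this hole. Its Step 4 invokes the one-sided bound $1-\cos(\mathbf Q_{c,\mathbf y}^{t-1},\mathbf Q_{c,\mathbf y}^{t})\le\|\mathbf Q_{c,\mathbf y}^{t-1}-\mathbf Q_{c,\mathbf y}^{t}\|_1$ and then substitutes. But $|x-A|\le E$ together with $A\le B$ does not give $|x-B|\le E$. Even that one-sided bound fails with many classes: for $|\mathcal I|=100$, $\mathbf p$ uniform and $\mathbf q$ obtained by moving the mass of five classes onto a sixth, $1-\cos(\mathbf p,\mathbf q)\approx0.123>0.1=\|\mathbf p-\mathbf q\|_1$. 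What your argument can honestly deliver is the bound with the oracle cosine path length $\sum_t\big(1-\cos(\mathbf Q_{c,\mathbf y}^{t-1},\mathbf Q_{c,\mathbf y}^{t})\big)$ in place of the $\ell_1$ path length, holding in expectation. It holds only after two further repairs.

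\begin{enumerate}
\item The $t=1$ term is not absorbed by the alignment of $\mathbf q_c^0$ with $\mathbf Q_c^0$. At best that removes $\epsilon_0$, leaving an extra $K_{\cos}\epsilon_1$ that a sum starting at $t=2$ does not cover.
\item Your Jensen step actually yields $\mathbb E\|\mathbf q_c^t-\mathbf Q_{c,\mathbf y}^t\|_2\le\epsilon_t+\mathbb E\|\bar{\mathbf y}^t-\mathbf Q_{c,\mathbf y}^t\|_2$, where $\bar{\mathbf y}^t$ is the batch label histogram. The sampling term must therefore be assumed away or carried through.
\end{enumerate}

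Any genuine $\ell_1$ statement would need an explicit additional geometric term on the right-hand side.
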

\begin{proof}
See Appendix~\ref{appendix:shift_estimation}.
\end{proof}

\paragraph{Cumulative Representation Dynamics}
For representation-level drift, we define
$\bar{\mathcal S}_{\mathrm{rep}}\triangleq\sum_{t=1}^{T}\mathcal S_{\mathrm{rep}}^{t}$. 
Under standard concentration of normalized feature means and the Lipschitz continuity of the shared layers $h_{\psi_c}$, which extract local representations, 
$\bar{\mathcal S}_{\mathrm{rep}}$ accurately tracks the temporal evolution of local feature representations.

\begin{theorem}[Error Bound: Cumulative Representation Dynamics]
\label{th:estimate_representation}
Assume $\|\mathbf z_c^{t}-\bar{\mathbf z}_c^{t}\|_2\le\epsilon_t'$, where 
$\bar{\mathbf z}_c^{t}=\mathbb{E}[h_{\psi_c}(x)/\|h_{\psi_c}(x)\|_2]$, 
and that $h_{\psi_c}$ is $K_h$-Lipschitz. 
Then, the cumulative estimation error is bounded as
\begin{align}\label{eq:rep_error_bound}
\Big| \bar{\mathcal S}_{\mathrm{rep}} 
- \textstyle\sum_{t=1}^{T}\tfrac{1}{2}\!\left(1-\frac{\langle 
\bar{\mathbf z}_c^{t-1},\bar{\mathbf z}_c^{t}\rangle}
{\|\bar{\mathbf z}_c^{t-1}\|_2\,\|\bar{\mathbf z}_c^{t}\|_2}\right) \Big|
\notag \\
\le\;
K_h \sum_{t=2}^{T}(\epsilon_t'+\epsilon_{t-1}'),
\end{align}
where $\epsilon_t'$ denotes the feature-level approximation error.
\end{theorem}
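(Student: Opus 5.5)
The plan is a per-timestep Lipschitz comparison followed by a telescoping sum, as for Theorem~\ref{th:estimate_uncertainty}. Write $g(\mathbf u,\mathbf v)\triangleq\tfrac12\big(1-\cos(\mathbf u,\mathbf v)\big)$, so that $\mathcal S_{\mathrm{rep}}^{t}=g(\mathbf z_c^{t-1},\mathbf z_c^{t})$. The comparison quantity in \eqref{eq:rep_error_bound} is $g(\bar{\mathbf z}_c^{t-1},\bar{\mathbf z}_c^{t})$. The triangle inequality gives
\begin{equation*}
\Big|\bar{\mathcal S}_{\mathrm{rep}}-\textstyle\sum_{t}g(\bar{\mathbf z}_c^{t-1},\bar{\mathbf z}_c^{t})\Big|\le\textstyle\sum_{t}\big|g(\mathbf z_c^{t-1},\mathbf z_c^{t})-g(\bar{\mathbf z}_c^{t-1},\bar{\mathbf z}_c^{t})\big|,
\end{equation*}
so it suffices to bound each summand by a constant times $\epsilon_t'+\epsilon_{t-1}'$.

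\textbf{Step 1: the $t=1$ term.} At $t=1$ the previous summary $\mathbf z_c^{0}$ is the initial cached value. I will adopt the convention (matching the summation range $t=2,\dots,T$ in the bound) that this term is either computed from the population feature means or vanishes. Then only $t\ge 2$ contributes.

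\textbf{Step 2: Lipschitz bound for $g$.} For $t\ge2$, insert the mixed term $g(\bar{\mathbf z}_c^{t-1},\mathbf z_c^{t})$ and bound the difference in each argument separately. Use
\begin{equation*}
\Big\|\frac{\mathbf u}{\|\mathbf u\|}-\frac{\mathbf u'}{\|\mathbf u'\|}\Big\|_2\le\frac{2\|\mathbf u-\mathbf u'\|_2}{\|\mathbf u\|_2}
\end{equation*}
and $|\langle \hat{\mathbf a},\hat{\mathbf b}\rangle-\langle \hat{\mathbf a}',\hat{\mathbf b}\rangle|\le\|\hat{\mathbf a}-\hat{\mathbf a}'\|_2$ for unit vectors. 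This gives
\begin{equation*}
|g(\mathbf z_c^{t-1},\mathbf z_c^{t})-g(\bar{\mathbf z}_c^{t-1},\bar{\mathbf z}_c^{t})|\le \frac{1}{m}\big(\|\mathbf z_c^{t}-\bar{\mathbf z}_c^{t}\|_2+\|\mathbf z_c^{t-1}-\bar{\mathbf z}_c^{t-1}\|_2\big)\le\frac{\epsilon_t'+\epsilon_{t-1}'}{m},
\end{equation*}
where $m$ is a lower bound on the norms $\|\bar{\mathbf z}_c^{t}\|_2$ and $\|\mathbf z_c^{t}\|_2$.

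\textbf{Step 3: identifying the constant with $K_h$.} The constant $1/m$ must be absorbed into the stated $K_h$. Here the $K_h$-Lipschitz continuity of $h_{\psi_c}$ enters: the normalized embeddings vary at most $K_h$-proportionally, and the batch means inherit this through averaging. So I will define $K_h$ as the product of the network Lipschitz constant and the cosine Lipschitz factor $1/m$. This reading is consistent with the analogous $K_{\cos}$ in Theorem~\ref{th:estimate_uncertainty}.

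Summing the per-step bound over $t=2,\dots,T$ then yields \eqref{eq:rep_error_bound}.

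\textbf{Main obstacle.} The main obstacle is Step 3: the cosine similarity is not globally Lipschitz. Near the origin the normalized batch means can be short when the features are spread out. The proof therefore needs a non-degeneracy assumption $\|\bar{\mathbf z}_c^{t}\|_2\ge m>0$, which I would state explicitly. I would also need $\epsilon_t'<m/2$ so that the empirical means $\mathbf z_c^{t}$ also stay bounded away from zero. I would try to justify this via the non-degeneracy assumption mentioned before Theorem~\ref{th:estimate_uncertainty}, and fold $1/m$ into the constant written as $K_h$.
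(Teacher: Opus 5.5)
Your proposal follows the same route as the paper's proof. The paper asserts the per-timestep bound $\big|\mathcal S_{\mathrm{rep}}^{t}-\tfrac12\big(1-\cos(\bar{\mathbf z}_c^{t-1},\bar{\mathbf z}_c^{t})\big)\big|\le K_h(\epsilon_{t-1}'+\epsilon_t')$, citing smoothness of the cosine distance ``on the unit sphere'', and then sums over $t=2,\dots,T$. Your Step~2 actually proves this per-step bound. Your assumption $\|\bar{\mathbf z}_c^{t}\|_2\ge m>0$ and your convention for the $t=1$ summand supply what the paper leaves implicit: batch means of unit vectors are not unit vectors, and without non-degeneracy no uniform constant works. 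The paper also silently drops the $t=1$ term.

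The one step that does not go through as written is Step~3.
\begin{itemize}
\item The Lipschitz constant of $h_{\psi_c}$ plays no role in your estimate, because the hypothesis already controls $\|\mathbf z_c^{t}-\bar{\mathbf z}_c^{t}\|_2$ directly. What you have proved is the bound with constant $1/m$.
\item Defining $K_h$ as (network Lipschitz constant)$\times(1/m)$ gives the stated inequality only if that Lipschitz constant is at least $1$.
\item In fact no constant proportional to the Lipschitz constant of $h_{\psi_c}$ can work. Replacing $h_{\psi_c}$ by $\delta h_{\psi_c}$ with $\delta>0$ leaves every normalized feature unchanged, hence both the hypothesis and the left-hand side. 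The Lipschitz constant, however, scales by $\delta$, so the right-hand side tends to $0$.
\end{itemize}
So set $K_h\triangleq 1/m$, the constant you derived in Step~2 and the analogue of $K_{\cos}$. Drop the heuristic about embeddings varying ``$K_h$-proportionally''; the paper's appeal to the Lipschitz continuity of $h_{\psi_c}$ is unfounded for the same reason.

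Finally, the extra requirement $\epsilon_t'<m/2$ is unnecessary. Apply your normalization inequality with $\mathbf u=\bar{\mathbf z}_c^{t}$ in the denominator; then you only need $\mathbf z_c^{t}\neq\mathbf 0$ for the cosine to be defined.
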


\begin{proof} See Appendix~\ref{appendix:feature_estimation}.
\end{proof}
\vspace{-0.2cm}
\paragraph{Combined Cumulative Shift Surrogate}
We define the total cumulative shift surrogate as $\bar{\mathcal S}_c=\tfrac{1}{2}\big(\bar{\mathcal S}_{\mathrm{unc}}+\bar{\mathcal S}_{\mathrm{rep}}\big)$.
Let $\mathcal V_c$ denote the corresponding combined true shift measure. 
Then, by Theorems~\ref{th:estimate_uncertainty} and~\ref{th:estimate_representation}, and applying the triangle inequality, we obtain the following.
\begin{align*}
\label{eq:combined_error_bound}
\big|\bar{\mathcal S}_c - \mathcal V_c\big|
\;\le\;&\;
\tfrac{1}{2}\Big[ K_{\cos}\!\sum_{t=2}^{T}(\epsilon_t+\epsilon_{t-1}) 
+K_h\!\sum_{t=2}^{T}(\epsilon_t'+\epsilon_{t-1}')
\Big]
\end{align*}
Hence, the combined surrogate $\bar{\mathcal S}_c$ closely tracks the true distribution shifts and can be directly used in the subsequent dynamic regret analysis.

\subsection{Dynamic Regret Analyses}
At each timestep $t$, client $c$ updates its model parameters $\{\psi_c, \phi_c\}$ and incurs a loss 
$\mathcal{F}_c^t(\{\psi_c,\phi_c\}) = \mathcal{L}\big(\mathcal{H}(\{\psi_c,\phi_c\}, \mathbf{x}_c^t), \mathbf{y}_c^t\big)$. 
The \emph{dynamic regret} over $T$ rounds is defined as
\begin{align*}
\mathrm{Reg}_T 
\triangleq 
\sum_{t=1}^T 
\mathcal{F}_c^t(\{\psi_c,\phi_c\})
- 
\sum_{t=1}^T 
\min_{\{\psi_c^t,\phi_c^t\}} 
\mathcal{F}_c^t(\{\psi_c,\phi_c\}),
\end{align*}
where $\min_{\{\psi_c^t,\phi_c^t\}} \mathcal{F}_c^t$ denotes the instantaneous optimal loss at each round.
This formulation quantifies the additional cost incurred by non-stationarity in the local data distribution.

\paragraph{Dynamic Regret Bound} 
Prior analyses of online distribution shift~\cite{OLS1,OLS2,OLS3} provide an $\mathcal{O}(\sqrt{T})$ \emph{static regret} bound against a fixed comparator. 
In contrast, in non-stationary environments where client data distributions evolve over time, 
we establish a \emph{dynamic regret} bound that scales with the cumulative shift proxy $\bar{\mathcal{S}}_c$.

Let $G \triangleq \sup_{(\mathbf{x}_c^t, \mathbf{y}_c^t)} 
\|\nabla_{\{\psi_c, \phi_c\}}\mathcal{L}(\mathcal{H}(\{\psi_c,\phi_c\}, \mathbf{x}_c^t), \mathbf{y}_c^t)\|_2$ 
be an upper bound on the gradient norm, and 
$B \triangleq \sup_{(\mathbf{x}_c^t, \mathbf{y}_c^t)} 
|\mathcal{L}(\mathcal{H}(\{\psi_c,\phi_c\}, \mathbf{x}_c^t), \mathbf{y}_c^t)|$ 
an upper bound on the loss value. 
Under the assumptions of Lemma~\ref{lm:risk} (Appendix~\ref{appendix:proof_risk_estimation}), 
running gradient descent with a constant learning rate $\eta$ yields the following result.

\begin{theorem}[Dynamic Regret Bound]
\label{th:regretbound}
Let $\mathrm{Reg}_T$ denote the dynamic regret over $T$ rounds. 
Let $\sigma>0$ be the minimum singular value of the confusion matrix $\mathbf{M}$, 
$\Gamma$ a projection constant, and $|\mathcal{I}|$ the number of classes. 
Then, for learning rate $\eta$, the expected regret satisfies
\begin{align}
\mathbb{E}[\mathrm{Reg}_T]
&\le 
2\!\left(\frac{|\mathcal{I}| G^2}{\sigma^2} + B^2\right)\eta T 
+ \frac{\Gamma^2}{\eta} 
+ \frac{4(\Gamma+1)\sqrt{B\bar{\mathcal{S}}_c T}}{\eta} \notag\\
&\approx\mathcal{O}\!\left(\eta T + \frac{1}{\eta} + \frac{\sqrt{\bar{\mathcal{S}}_c T}}{\eta}\right).
\end{align}
\end{theorem}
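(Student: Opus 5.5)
The plan is to follow the standard Zinkevich-style analysis of projected online gradient descent against a moving comparator. The one non-standard ingredient is that the gradient comes from the unsupervised estimator $\widehat{\mathcal F}_c^t$ in~\eqref{eq:estimated_risk} rather than the true risk. Write $\theta^t=\{\psi_c,\phi_c\}$ for the iterate at step $t$ and $u^t\in\arg\min\mathcal F_c^t$ for the instantaneous comparator. Assume all iterates and comparators lie in a convex set of diameter $\Gamma$ (this is the projection constant), and that $\mathcal F_c^t$ is convex, as in the assumptions of Lemma~\ref{lm:risk}.

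\textbf{Step 1: per-step inequality.} Expand $\|\theta^{t+1}-u^t\|_2^2$ and use non-expansiveness of the projection. This gives
\[
\mathcal F_c^t(\theta^t)-\mathcal F_c^t(u^t)\le \frac{\|\theta^t-u^t\|_2^2-\|\theta^{t+1}-u^t\|_2^2}{2\eta}+\frac{\eta}{2}\|g^t\|_2^2+\langle \nabla\mathcal F_c^t(\theta^t)-g^t,\theta^t-u^t\rangle,
\]
where $g^t=\nabla\widehat{\mathcal F}_c^t(\theta^t)$.

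\textbf{Step 2: bound the gradient second moment.} Use Lemma~\ref{lm:risk} and the BBSE structure. The gradient $g^t$ is a weighted sum $\sum_i[\widehat{\mathbf M}^{-1}\mathbf Q^t_{c,\hat{\mathbf y}}]_i\nabla\widehat{\mathcal F}^{0,i}_c$. The weight vector has norm at most $\|\mathbf Q^t_{c,\hat{\mathbf y}}\|_2/\sigma\le 1/\sigma$, and each class gradient is bounded by $G$. Cauchy--Schwarz over the $|\mathcal I|$ classes then yields $\mathbb E\|g^t\|^2\lesssim |\mathcal I|G^2/\sigma^2$. The remaining bias/variance term $\langle\nabla\mathcal F-g,\theta-u\rangle$ is handled through the unbiasedness of the BBSE estimator in Lemma~\ref{lm:risk} together with a loss-level deviation of order $B$. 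These contributions, with constants doubled for safety, produce $2(|\mathcal I|G^2/\sigma^2+B^2)\eta T$.

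\textbf{Step 3: telescoping with a moving comparator.} Sum the first term over $t$. The telescoping fails only because the comparator changes, so
\[
\|\theta^{t+1}-u^{t}\|^2-\|\theta^{t+1}-u^{t+1}\|^2\le 2\Gamma\|u^{t+1}-u^t\|+\|u^{t+1}-u^t\|^2\le (2\Gamma+2)\|u^{t+1}-u^t\|
\]
once $\|u^{t+1}-u^t\|\le 1$ after normalization. Together with the initial term $\Gamma^2/\eta$, this gives
\[
\frac{\Gamma^2}{\eta}+\frac{(\Gamma+1)}{\eta}\sum_t\|u^{t+1}-u^t\|.
\]

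\textbf{Step 4: relate comparator path length to $\bar{\mathcal S}_c$ (main obstacle).} By~\eqref{eq:risk_decomp}, $\mathcal F_c^t$ depends on $t$ only through $\mathbf Q^t_{c,\mathbf y}$. Consecutive losses therefore satisfy $\sup_\theta|\mathcal F_c^{t}-\mathcal F_c^{t-1}|\le B\|\mathbf Q^t_{c,\mathbf y}-\mathbf Q^{t-1}_{c,\mathbf y}\|_1$. Turning function variation into minimizer movement needs a growth or regularity condition, which is presumably what Lemma~\ref{lm:risk} supplies. I would instead avoid minimizer path length altogether and use the standard restarting/blocking argument. Partition $[T]$ into blocks of length $\Delta$ and compare against the block-wise fixed comparator. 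The extra cost per block is at most $2\Delta$ times the loss variation within the block, so the total is at most $2\Delta B\,\mathcal V_c$. Optimizing $\Delta$ against the number of blocks $T/\Delta$, each costing at most $(\Gamma+1)/\eta$ from Step 3, gives a term of order $(\Gamma+1)\sqrt{B\mathcal V_c T}/\eta$ after absorbing constants, with the factor $4$ coming from the two Cauchy--Schwarz applications.

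\textbf{Step 5: replace $\mathcal V_c$ by $\bar{\mathcal S}_c$.} Use the combined error bound following Theorems~\ref{th:estimate_uncertainty} and~\ref{th:estimate_representation}, namely $\mathcal V_c\le\bar{\mathcal S}_c+$ (calibration errors). Treat the error term as lower order, or fold it into $\bar{\mathcal S}_c$ under the calibration assumption. Then take expectations to obtain the stated bound and its $\mathcal O(\eta T+1/\eta+\sqrt{\bar{\mathcal S}_cT}/\eta)$ form.

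I expect the delicate points to be the constant bookkeeping in Step 4 and the justification of convexity for the neural objective. The latter I would state explicitly as an assumption inherited from Lemma~\ref{lm:risk}.
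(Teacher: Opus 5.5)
Your Steps 1--4 are, up to constants, the paper's own argument. The paper splits the regret into a term (a), measured against a comparator that is constant on blocks of length $\tau$. That term is handled by convexity, unbiasedness of $\nabla\widehat{\mathcal F}_c^t$, and an OGD path-length lemma giving $\frac{2\eta|\mathcal I|G^2T}{\sigma^2}+\frac{2\Gamma P_T+\Gamma^2}{2\eta}$ with $P_T\le\Gamma T/\tau$. A term (b) is bounded by $2\tau$ times the variation of the losses following \cite{besbes}, and $\tau$ is then balanced.

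The genuine gap is your Step 5. The inequality you need there, (true variation) $\lesssim\bar{\mathcal S}_c+{}$(calibration error), is not what the proof of Theorem~\ref{th:estimate_uncertainty} establishes. Its Step 4 only gives $1-\cos(\mathbf Q^{t-1}_{c,\mathbf y},\mathbf Q^{t}_{c,\mathbf y})\le\|\mathbf Q^{t}_{c,\mathbf y}-\mathbf Q^{t-1}_{c,\mathbf y}\|_1$, which is the opposite direction. The direction you need is false, because cosine distance is second order in the perturbation. Take two classes, perfect calibration ($\epsilon_t=0$) and $\mathbf Q^t_{c,\mathbf y}=(\tfrac12+t\delta,\tfrac12-t\delta)$. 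The label path length is $2T\delta$, while $\bar{\mathcal S}_{\mathrm{unc}}=O(T\delta^2)$, and $\bar{\mathcal S}_{\mathrm{rep}}$ is likewise $O(T\delta^2)$ for a fixed extractor. With $\delta=1/(2T)$ the ratio is of order $T$, and there is no error term to absorb it.

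Even if you grant the two-sided combined bound as printed, two problems remain. You are left with an additive $K_{\cos}\sum_t(\epsilon_t+\epsilon_{t-1})+K_h\sum_t(\epsilon'_t+\epsilon'_{t-1})$ under the square root that the theorem does not contain, and ``folding it into $\bar{\mathcal S}_c$'' is not a proof. Moreover, the surrogate is random, so it cannot simply sit outside the expectation on the right-hand side.

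The paper never takes this step. Inside the proof it simply defines $\bar{\mathcal S}_c\triangleq\sum_{t\ge2}\sup_\theta|\mathcal F_c^t(\theta)-\mathcal F_c^{t-1}(\theta)|$, a deterministic variation budget. The link to the shift estimators appears only as an unproved remark. So the repair is to drop Step 5 and stop at Step 4 with $\bar{\mathcal S}_c$ read that way. Alternatively, under pure label shift, read it as $\sum_t\|\mathbf Q^t_{c,\mathbf y}-\mathbf Q^{t-1}_{c,\mathbf y}\|_1$. That reading is where your factor $B$ legitimately comes from via \eqref{eq:risk_decomp}, but \eqref{eq:risk_decomp} fails under covariate shift, whereas the functional definition does not.

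There are also smaller repairs needed in Step 4:
\begin{itemize}
\item Consecutive block comparators can be $\Gamma$ apart, so the normalization $\|u^{t+1}-u^t\|\le1$ is unjustified. Use $\|\theta-u\|^2-\|\theta-u'\|^2\le2\Gamma\|u-u'\|\le2\Gamma^2$ per switch, which gives the paper's $\Gamma^2T/(\eta\tau)$.
\item Balancing this against $2B\tau\bar{\mathcal S}_c$ yields $O(\Gamma\sqrt{B\bar{\mathcal S}_cT/\eta})$. This is what the paper's appendix actually derives, and it is what the $T^{2/3}$ rate needs. The stated form with $\sqrt{B\bar{\mathcal S}_cT}/\eta$ follows only when $\eta\le1$.
\item The integer rounding of $\tau$ must be handled, e.g.\ $\lceil x\rceil\le 2x$ for $x\ge1$, and $\tau=1$ otherwise, where term (b) vanishes.
\item The $2B^2\eta T$ term is pure slack. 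Your bias term has zero conditional mean rather than contributing anything of order $B^2$, and the paper does not derive this term either.
\end{itemize}
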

\begin{proof}See Appendix~\ref{appendix:proof_regretbound}.
\end{proof}

The three terms $\eta T$, $1/\eta$, and $\sqrt{\bar{\mathcal{S}}_c T}/\eta$ 
correspond to optimization error, projection cost, and adaptation cost due to distributional shifts, respectively. 
By tuning the learning rate $\eta$ with respect to $\bar{\mathcal{S}}_c$, 
Fed-ADE achieves the min–max optimal dynamic regret rate under non-stationarity.

\vspace{-0.2cm}
\paragraph{Min–max Optimality} 
Fed-ADE employs the adaptive learning rate rule~\eqref{eq:eta_map}, parameterized by $[\eta_{\min}, \eta_{\max}]$, 
where $\mathcal{S}_c^t$ denotes the instantaneous shift signal at round $t$. 
Choosing $\eta_{\min}$ and $\eta_{\max}$ such that 
$\eta_{\min}\le\eta^*\le\eta_{\max}$, 
with $\eta^*=\Theta(T^{-1/3}\bar{\mathcal{S}}_c^{1/3})$ being the theoretically optimal rate, 
ensures that $\eta_c^t$ remains within the valid range. 
This choice satisfies the convergence conditions in Theorem~\ref{th:convergence} 
and guarantees that Theorem~\ref{th:regretbound} attains min–max optimality as follows.
\begin{equation}
\mathbb{E}[\mathrm{Reg}_T]
=\mathcal{O}\!\left(\bar{\mathcal{S}}_c^{\frac{1}{3}}T^{\frac{2}{3}}\right).
\end{equation}
Further description for min-max optimality is provided in Appendix~\ref{appendix:min-max}.

\begin{remark}
The regret rate $\mathcal{O}(\bar{\mathcal{S}}_c^{1/3}T^{2/3})$ achieved by Fed-ADE 
matches the min–max optimal bound for non-stationary online learning under unsupervised label shift~\cite{besbes}. 
The adaptive rate bound $[\eta_{\min},\eta_{\max}]$ 
is designed to contain the optimal value estimated by $\bar{\mathcal{S}}_c$, 
aligning theoretical guarantees with practical adaptivity in dynamic federated environments.
\end{remark}
\section{Simulation Results}
\label{sec:simulation}
In this section, we conduct extensive experiments to validate the effectiveness of the proposed methods. We first briefly introduce the simulation setups and then present numerical results on various datasets. The detailed description of the simulation setups is provided in Appendix~\ref{appeindix:simulation setup}, and results of extra simulations are provided in Appendix~\ref{app:extra}.

\subsection{Simulation Settings}
\label{sec:simulation_setup}
\paragraph{Shift Scenarios and Datasets}
We evaluate Fed-ADE across three prototypical online distribution-shift settings—label shift and covariate shift—using standard image benchmarks (Tiny ImageNet~\cite{Tinyimagenet}, CIFAR-10~\cite{CIFAR10}, CIFAR-100~\cite{CIFAR10}, CIFAR-10-C~\cite{CIFARC}, CIFAR-100-C~\cite{CIFARC}) and the text benchmark LAMA~\cite{LAMA}. 

\begin{itemize}
  \item \textbf{Label shift} is simulated by gradually changing each client’s class prior over time, following one of four temporal schedules (linear (Lin.), sine (Sin.), square (Squ.), Bernoulli (Ber.) on Tiny ImageNet~\cite{Tinyimagenet}, CIFAR-10~\cite{CIFAR10}, and LAMA~\cite{LAMA} benchmarks.
 \item \textbf{Covariate shift} is modeled using corruption benchmarks CIFAR10-C~\cite{CIFARC} and CIFAR100-C~\cite{CIFARC}, where clients observe progressively changing corruption severity levels following one of four temporal schedules (Lin., Sin., Squ., Ber.).    
\end{itemize}

\begin{table*}[t]
\centering
\caption{Performance comparison of different online adaptation methods (average accuracy (\%) and average wall time (sec.))}
\scriptsize
\setlength{\tabcolsep}{2pt} 
\resizebox{\textwidth}{!}{%
\begin{tabular}{@{}c c | c c c c| c c c c c c}
\toprule
\multicolumn{2}{c|}{\textbf{}} & \multicolumn{4}{c|}{\textbf{Localized Learning}} & \multicolumn{6}{c}{\textbf{Federated Learning}} \\
\cmidrule(r){3-6} \cmidrule(l){7-12}
\textbf{Dataset} & \textbf{Shift} & FTH & ATLAS & UNIDA & UDA & Fed-POE & FedCCFA &FixLR(Low) & FixLR(Mid) & FixLR(High) & \textbf{Fed-ADE} \\
\midrule
\rowcolor{gray!15}  
\multicolumn{12}{c}{\textbf{(i) Label Shift Scenarios}} \\\midrule 

\multirow{4}{*}{\shortstack{Tiny\\ImageNet}}
& Lin. &  78.2{\tiny$\pm$1.0} & 76.5{\tiny$\pm$1.3} & 83.8{\tiny$\pm$0.1} & 74.5{\tiny$\pm$0.2} & 87.1{\tiny$\pm$0.2} & 84.7{\tiny$\pm$0.8} &87.5{\tiny$\pm$1.2} & 88.2{\tiny$\pm$1.1} & 86.1{\tiny$\pm$0.4} &\textbf{89.1{\tiny$\pm$0.1}}  \\
& Sin. &  77.9{\tiny$\pm$0.8} & 76.8{\tiny$\pm$1.1} & 83.2{\tiny$\pm$0.5} & 74.5{\tiny$\pm$0.2} & 87.5{\tiny$\pm$0.4} & 84.8{\tiny$\pm$0.7} &87.3{\tiny$\pm$1.4} & 88.0{\tiny$\pm$1.1}& 87.6{\tiny$\pm$0.6} & \textbf{88.9{\tiny$\pm$0.1}} \\
& Squ. & 77.2{\tiny$\pm$0.8} & 78.5{\tiny$\pm$1.4} &  83.3{\tiny$\pm$0.1} & 74.7{\tiny$\pm$0.3} & 86.4{\tiny$\pm$0.9} & 83.0{\tiny$\pm$0.9} & 87.4{\tiny$\pm$1.2} & 88.2{\tiny$\pm$0.7} & 86.4{\tiny$\pm$0.5} &\textbf{88.9{\tiny$\pm$0.1}} \\
& Ber. & 78.2{\tiny$\pm$1.1} & 77.6{\tiny$\pm$1.1} & 82.7{\tiny$\pm$0.3} & 73.8{\tiny$\pm$0.8} & 86.5{\tiny$\pm$0.7} & 83.8{\tiny$\pm$0.9} & 86.5{\tiny$\pm$1.7} & 87.8{\tiny$\pm$1.2} & 86.3{\tiny$\pm$0.6} &\textbf{88.7{\tiny$\pm$0.1}} \\\midrule

\multirow{4}{*}{CIFAR-10}
& Lin. & 31.4{\tiny$\pm$0.8} & 36.5{\tiny$\pm$4.3} &23.0{\tiny$\pm$0.3} & 33.3{\tiny$\pm$1.4} & 71.3{\tiny$\pm$3.2} & 65.8{\tiny$\pm$0.5} &70.6{\tiny$\pm$2.0} & 70.8{\tiny$\pm$2.1} & 63.8{\tiny$\pm$1.9}& \textbf{73.8{\tiny$\pm$0.6}} \\
& Sin. & 40.3{\tiny$\pm$0.9} & 43.7{\tiny$\pm$5.1} &22.9{\tiny$\pm$0.3} & 32.0{\tiny$\pm$1.3} & 71.4{\tiny$\pm$2.6} & 65.8{\tiny$\pm$0.8} & 69.4{\tiny$\pm$1.5} & 70.5{\tiny$\pm$1.6} & 64.3{\tiny$\pm$2.2} & \textbf{73.6{\tiny$\pm$0.5}}\\
& Squ. & 31.7{\tiny$\pm$0.7} & 32.3{\tiny$\pm$5.0} &23.1{\tiny$\pm$0.1} & 28.1{\tiny$\pm$1.4} & 70.6{\tiny$\pm$1.9} &65.3{\tiny$\pm$0.3} & 72.8{\tiny$\pm$2.1} & 71.6{\tiny$\pm$2.0} & 70.6{\tiny$\pm$2.5} & \textbf{72.2{\tiny$\pm$1.6}}\\
& Ber. & 30.6{\tiny$\pm$0.9} & 32.7{\tiny$\pm$5.9} &23.0{\tiny$\pm$0.1} & 28.5{\tiny$\pm$1.5} &69.6{\tiny$\pm$1.5} & 65.4{\tiny$\pm$0.4} & 68.3{\tiny$\pm$1.7} & 71.8{\tiny$\pm$1.6} & 70.0{\tiny$\pm$2.2} & \textbf{72.9{\tiny$\pm$2.2}} \\\midrule

\multirow{4}{*}{LAMA}
& Lin. & 68.3{\tiny$\pm$1.2} & 79.5{\tiny$\pm$3.2} & 31.2{\tiny$\pm$0.8} & 72.9{\tiny$\pm$2.0} &85.4{\tiny$\pm$1.3} & 
95.6{\tiny$\pm$0.1} & 86.7{\tiny$\pm$1.2} & 95.2{\tiny$\pm$2.0} & 24.6{\tiny$\pm$3.3} &  \textbf{95.8{\tiny$\pm$0.4}}\\
& Sin. & 74.7{\tiny$\pm$3.1} & 71.8{\tiny$\pm$5.0} & 31.1{\tiny$\pm$0.6} & 70.6{\tiny$\pm$6.8} & 84.0{\tiny$\pm$1.6} &91.6{\tiny$\pm$0.9} & 88.0{\tiny$\pm$0.8} & 94.7{\tiny$\pm$2.4} & 26.9{\tiny$\pm$4.1} &\textbf{95.8{\tiny$\pm$0.6}}\\
& Squ. & 70.5{\tiny$\pm$6.5} & 79.8{\tiny$\pm$0.9} & 31.2{\tiny$\pm$0.5} & 74.4{\tiny$\pm$0.2} & 84.2{\tiny$\pm$1.0} & 92.0{\tiny$\pm$0.1} & 88.6{\tiny$\pm$0.2} & 95.4{\tiny$\pm$1.2} & 26.9{\tiny$\pm$5.3} &\textbf{96.4{\tiny$\pm$0.6}}\\
& Ber. & 76.8{\tiny$\pm$0.2} & 78.0{\tiny$\pm$5.8} & 31.1{\tiny$\pm$0.5} & 70.9{\tiny$\pm$6.5} & 84.1{\tiny$\pm$0.6} & 91.1{\tiny$\pm$0.5} & 87.9{\tiny$\pm$0.3} & 94.3{\tiny$\pm$2.4} & 20.0{\tiny$\pm$4.7} & \textbf{95.9{\tiny$\pm$0.5}} \\ \midrule

\rowcolor{gray!15}  
\multicolumn{12}{c}{\textbf{(ii) Covariate Shift Scenarios}} \\\midrule 
\multirow{4}{*}{\shortstack{CIFAR-10\\CIFAR-10-C}}
& Lin. &  23.7{\tiny$\pm$0.2} & 13.9{\tiny$\pm$0.2} & 43.3{\tiny$\pm$0.6} & 45.7{\tiny$\pm$0.8} & 44.5{\tiny$\pm$0.8} & 43.1{\tiny$\pm$0.5} &63.4{\tiny$\pm$0.2} & 63.9{\tiny$\pm$0.3} & 40.6{\tiny$\pm$2.1} & \textbf{64.4{\tiny$\pm$0.2}} \\
& Sin. &  22.9{\tiny$\pm$0.2} & 13.2{\tiny$\pm$0.2} & 44.8{\tiny$\pm$0.1} & 43.5{\tiny$\pm$0.3} & 44.7{\tiny$\pm$1.2} &43.3{\tiny$\pm$0.4} &63.5{\tiny$\pm$0.3} & 63.7{\tiny$\pm$0.3} & 38.1{\tiny$\pm$3.1} & \textbf{64.8{\tiny$\pm$0.1}} \\
& Squ. & 23.8{\tiny$\pm$0.1} & 14.1{\tiny$\pm$0.4} & 42.2{\tiny$\pm$0.1} & 43.3{\tiny$\pm$0.2} & 48.5{\tiny$\pm$1.2} & 41.6{\tiny$\pm$0.7} & 62.7{\tiny$\pm$2.1} & 64.5{\tiny$\pm$2.1} & 39.6{\tiny$\pm$2.8} &\textbf{65.4{\tiny$\pm$0.2}} \\
& Ber. &  23.6{\tiny$\pm$0.2} & 14.2{\tiny$\pm$0.3} & 42.7{\tiny$\pm$0.2} & 42.3{\tiny$\pm$0.1} & 48.7{\tiny$\pm$0.9} & 42.0{\tiny$\pm$0.5} &64.1{\tiny$\pm$2.1} & 64.4{\tiny$\pm$2.1} & 40.8{\tiny$\pm$2.2} & \textbf{65.8{\tiny$\pm$0.2}}\\\midrule

\multirow{4}{*}{\shortstack{CIFAR-100\\CIFAR-100-C}}
& Lin. &  9.2{\tiny$\pm$0.1} & 3.5{\tiny$\pm$0.1} & 34.3{\tiny$\pm$0.1} & 35.6{\tiny$\pm$0.2} & 27.3{\tiny$\pm$0.1} & 24.3{\tiny$\pm$0.3} &44.0{\tiny$\pm$0.2} & 43.4{\tiny$\pm$0.6} &41.4{\tiny$\pm$0.6} & \textbf{45.8{\tiny$\pm$0.2}} \\
& Sin. & 7.6 {\tiny$\pm$0.2} & 2.9{\tiny$\pm$0.1} & 34.1{\tiny$\pm$0.1} & 35.7{\tiny$\pm$0.1} & 27.5{\tiny$\pm$0.1} &24.5{\tiny$\pm$0.5} & 43.9{\tiny$\pm$0.1} &42.1{\tiny$\pm$0.2} &39.8{\tiny$\pm$0.3} & \textbf{46.7{\tiny$\pm$0.1}}  \\
& Squ. & 8.5{\tiny$\pm$0.1} & 3.4{\tiny$\pm$0.1} & 33.8{\tiny$\pm$0.2} & 35.3{\tiny$\pm$0.1} & 31.9{\tiny$\pm$0.1} & 25.1{\tiny$\pm$0.4} & 44.9{\tiny$\pm$0.3}& 42.8{\tiny$\pm$0.2}&34.3{\tiny$\pm$0.5} & \textbf{46.5{\tiny$\pm$0.3}}  \\
& Ber. & 8.6{\tiny$\pm$0.1} & 3.5{\tiny$\pm$0.1} & 33.0{\tiny$\pm$0.1} & 34.7{\tiny$\pm$0.2} & 31.3{\tiny$\pm$0.1} &  25.2{\tiny$\pm$0.4} & 45.3{\tiny$\pm$0.5}&43.5{\tiny$\pm$0.2} &31.1{\tiny$\pm$0.4} &\textbf{46.8{\tiny$\pm$0.1}}  \\\midrule\midrule




\multicolumn{2}{c|}{\textbf{Average Wall Time}}
& 2631.2{\tiny$\pm$71.3} & 1959.2{\tiny$\pm$22.6} & 1886.5{\tiny$\pm$3.7} & 1242.7{\tiny$\pm$7.8} & 130.7{\tiny$\pm$1.9} & 210.8{\tiny$\pm$16.6} & \textbf{109.5{\tiny$\pm$0.4}} & \textbf{109.5{\tiny$\pm$0.4}} & \textbf{109.5{\tiny$\pm$0.4}} & \textbf{109.5{\tiny$\pm$0.4}}\\
\bottomrule
\end{tabular}}
\vspace{-0.3cm}
\label{sim:performance_compare}
\end{table*}

\vspace{-0.2cm}
\paragraph{Baselines for Comparing Adaptation Methods} We compare the proposed Fed-ADE with the following baselines.
\begin{itemize}[leftmargin=*, topsep=0pt, itemsep=0pt]
    \item \textbf{Localized Learning:} We evaluate four centralized unsupervised online adaptation methods—FTH~\cite{FTH}, ATLAS~\cite{ATLAS}, UNIDA~\cite{UNIDA}, and UDA~\cite{UDA}. These methods perform adaptation locally without any client collaboration, thus lacking the ability to leverage shared knowledge across users.
    \item \textbf{Federated Learning}: We evaluate two federated adaptation methods—Fed-POE~\cite{FedPOE}, and FedCCFA~\cite{concept_neurips} that combine locally fine-tuned models with aggregated global models over time, using a fixed learning rate irrespective of shift magnitude.
    \item \textbf{Fed-ADE (fixed learning rate)}: We evaluate fixed learning rate (FixLR) variants that remove adaptivity from Fed-ADE and use constant stepsizes within the same bounds as Fed-ADE; For image benchmarks, we use $5\times 10^{-6}$ for FixLR(Low), $10^{-5}$ for FixLR(Mid), and $10^{-4}$ for FixLR(High). For LAMA, we use $10^{-4}$ for FixLR(Low), $10^{-3}$ for FixLR(Mid), and $10^{-2}$ for FixLR(High). 
    \item \textbf{Fed-ADE (ours):} Our method performs personalized federated post-adaptation by dynamically adjusting the local learning rate based on estimated distribution-shift magnitude. For all image benchmarks, we set $\eta_{\min}=5\times10^{-6}$ and $\eta_{\max}=10^{-4}$, while for the text-based LAMA benchmark we use $\eta_{\min}=10^{-3}$ and $\eta_{\max}=10^{-2}$.
\end{itemize}

\vspace{-0.2cm}
\paragraph{Evaluation Protocols} At each timestep, clients adapt using unlabeled data and are evaluated on the same data with labels, reporting accuracy averaged over clients and timesteps. We assume clients do not know the pre-training distribution and initialize with uniform priors. For simulation, We set the total number of timesteps to 100 and the total number of clients to 100.

\subsection{Simulation Results}
\paragraph{Performance of Adaptation Methods} 
Table~\ref{sim:performance_compare} summarizes the results across three prototypical shift scenarios—label and covariate—over diverse datasets. Overall, Fed-ADE achieves the highest accuracy in all settings while maintaining the lowest wall-clock time. Localized adaptation methods (FTH, ATLAS, UNIDA, UDA) exhibit limited robustness and incur substantially higher computation, whereas federated approaches such as Fed-POE and FedCCFA provide improved stability but still fall short of Fed-ADE in dynamic environments.

Under label distribution shift, Fed-ADE consistently surpasses all baselines across all temporal schedules (Lin., Sin., Squ., Ber.). On Tiny ImageNet and CIFAR-10, Fed-ADE provides steady improvements over FixLR and Fed-POE, with average gains of about $1\%$ and $4\%$, respectively. A similar pattern is observed on the text-based LAMA benchmark: despite its larger label space and noisier dynamics, Fed-ADE still maintains the highest accuracy across all temporal schedules, improving over the best federated baselines by roughly $2\%$ on average. 

In covariate shift scenarios, centralized domain adaptation methods (UDA, UNIDA) struggle in the federated setting, showing notably low robustness across all corruption severities. Fed-ADE consistently surpasses both FixLR and Fed-POE, achieving an average improvement rate of roughly $3\%$ over FixLR and more than $6\%$ over Fed-POE. 


In terms of computational cost, Fed-ADE delivers superior performance with exceptional efficiency. Its average wall time ($\approx 109$ sec) is $17$–$24\times$ faster than localized methods and roughly $2\times$ faster than FedCCFA, despite achieving consistently higher accuracy. These results indicate that Fed-ADE provides both accuracy and scalability under diverse online distribution shifts, achieving dynamic robustness without sacrificing computational efficiency—unlike existing approaches that specialize in only one shift type or incur substantial overhead.

\begin{figure}[t]
\centering

\begin{subfigure}{0.78\linewidth}
  \centering
  \includegraphics[width=\linewidth]{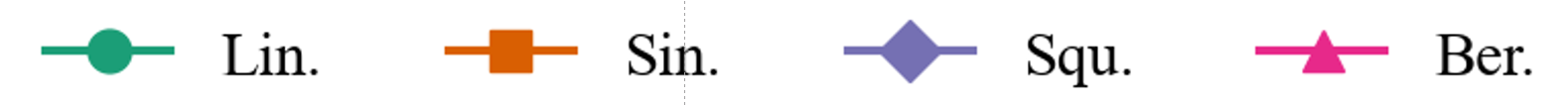} 
\end{subfigure}
\begin{subfigure}{0.48\linewidth}
  \centering
  \includegraphics[width=\linewidth]{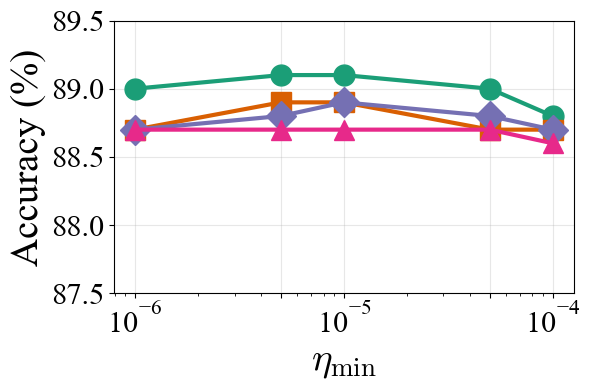}
  \caption{Tiny ImageNet $\eta_{\min}$}
\end{subfigure}\hfill
\begin{subfigure}{0.48\linewidth}
  \centering
  \includegraphics[width=\linewidth]{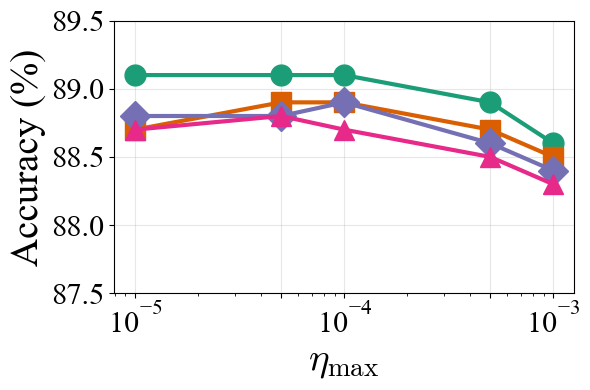}
  \caption{Tiny ImageNet $\eta_{\max}$}
\end{subfigure}
\begin{subfigure}{0.48\linewidth}
  \centering
  \includegraphics[width=\linewidth]{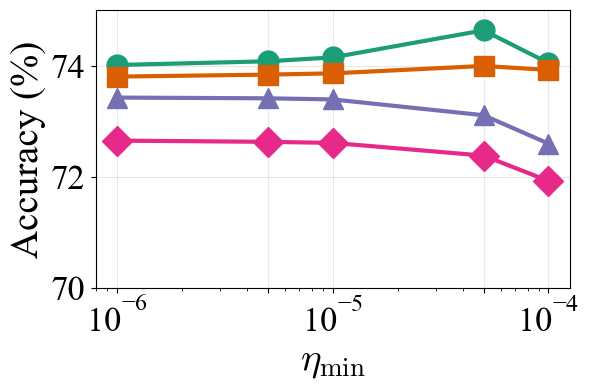}
  \caption{CIFAR-10 $\eta_{\min}$}
\end{subfigure}\hfill
\begin{subfigure}{0.48\linewidth}
  \centering
  \includegraphics[width=\linewidth]{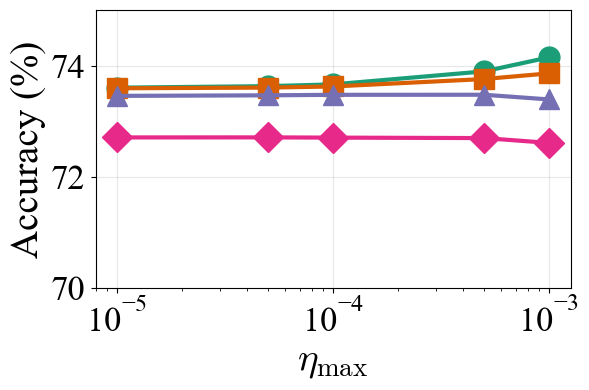}
  \caption{CIFAR-10 $\eta_{\max}$}
\end{subfigure}
\caption{Impact of learning rate bounds on Fed-ADE under label shift scenarios on Tiny-ImageNet and CIFAR-10 datasets.}
\label{fig:lr}
\vspace{-0.4cm}
\end{figure}

\begin{table}[t]
\caption{Performance with Pre-trained Model Trained on Various Distributions with CIFAR-10.}
\centering
\scriptsize
\renewcommand{\arraystretch}{1.2}
\setlength{\tabcolsep}{2pt} 
\resizebox{0.47\textwidth}{!}{%
\begin{tabular}{cc|ccccccc}
\toprule
\multicolumn{2}{c|}{\textbf{Distribution  Shift}} & \textbf{Fed-ADE} & FedCCFA & Fed-POE & UDA & UNIDA & ATLAS & FTH \\
\midrule
\multirow{4}{*}{\textbf{Gaussian}} 
& Lin. & \textbf{69.2$\pm$1.7}&62.3$\pm$1.2 & 65.7$\pm$2.1 & 20.1$\pm$2.0 & 20.2$\pm$1.8 & 25.3$\pm$2.2 & 22.3$\pm$1.7  \\
& Sin. & \textbf{68.3$\pm$1.6} &63.1$\pm$1.5  & 64.8$\pm$1.7 & 27.8$\pm$2.4 & 27.7$\pm$2.2 & 33.5$\pm$2.3 & 23.9$\pm$2.1\\
& Squ. & \textbf{66.4$\pm$1.5}& 61.9$\pm$2.6 & 61.4$\pm$2.2 & 20.3$\pm$2.1 & 19.8$\pm$1.7 & 23.1$\pm$2.0 & 23.0$\pm$1.8  \\
& Ber. & \textbf{67.1$\pm$1.3}& 63.0$\pm$2.5 & 60.9$\pm$2.5 & 18.2$\pm$2.3 & 17.7$\pm$1.9 & 22.5$\pm$2.1 & 21.6$\pm$1.9  \\
\midrule
\multirow{4}{*}{\shortstack{\textbf{Exponential}\\\textbf{Decay}}} 
& Lin. & \textbf{67.4$\pm$1.6}&54.1$\pm$3.3  & 61.1$\pm$1.8 & 23.4$\pm$3.1 & 25.4$\pm$2.8 & 26.1$\pm$1.5 & 22.3$\pm$1.9 \\
& Sin. & \textbf{67.6$\pm$1.6}& 54.2$\pm$2.5  & 61.4$\pm$2.5 & 23.9$\pm$2.9 & 24.5$\pm$2.3 & 25.6$\pm$2.1 & 21.5$\pm$1.7 \\
& Squ. & \textbf{66.9$\pm$1.5}& 52.3$\pm$3.2  & 55.3$\pm$2.7 & 19.2$\pm$4.3 & 21.7$\pm$2.0 & 26.6$\pm$2.2 & 21.4$\pm$1.7 \\
& Ber. & \textbf{66.7$\pm$1.1} & 53.0$\pm$2.8  & 57.5$\pm$2.3 & 21.8$\pm$2.8 & 19.3$\pm$3.3 & 25.3$\pm$2.7 & 22.5$\pm$2.2\\
\bottomrule
\end{tabular}
} 
\vspace{-0.3cm}
\label{tab:pretrain_dist_transposed_multirow}
\end{table}

\vspace{-0.3cm}
\paragraph{Impact of Learning Rate Selection} Figure~\ref{fig:lr} examines the sensitivity of Fed-ADE to the choice of learning rate bounds under label shift scenarios on CIFAR-10 and Tiny ImageNet. 
We vary $\eta_{\min}$ while fixing $\eta_{\max}=10^{-4}$, and vary $\eta_{\max}$ while fixing $\eta_{\min}=5\times10^{-6}$. 
Across both datasets, Fed-ADE consistently maintains high accuracy within a broad range of hyperparameter choices. 
While extreme values lead to slight performance degradation, the overall trend shows that our dynamics-driven adaptation is far less sensitive to the exact learning rate bounds compared to fixed-rate baselines. 
This robustness highlights that Fed-ADE reduces the need for extensive hyperparameter tuning, an essential property in federated and online learning where search budgets are limited.




\vspace{-0.3cm}
\paragraph{Impact of Distribution of Pre-training Data} In practical federated deployments, the exact distribution of the pre-training data on the server is often unknown to clients. To examine the robustness of Fed-ADE under such realistic conditions, we evaluate post-adaptation performance on CIFAR-10 when the pre-training data follows two non-uniform distributions: \emph{Gaussian} and \emph{Exponential Decay}. As shown in Table~\ref{tab:pretrain_dist_transposed_multirow}, Fed-ADE maintains consistently strong performance across all distribution settings and shift types, with only a minor variation compared to the uniform case (cf. Table~\ref{sim:performance_compare}). These results demonstrate that our dynamics-based adaptation effectively generalizes even when the pre-training distribution differs from the assumed prior, validating its robustness under practical deployment scenarios.

\vspace{-0.3cm}
\paragraph{Ablation Study on the Similarity Measure in Fed-ADE}
Table~\ref{tab:ablation_similarity} compares cosine similarity with KL divergence~\cite{KL1,KL2,KL3}, Wasserstein distance~\cite{wasser1,wasser2,wasser3}, and Bayesian Check Point Detection (CPD)~\cite{CPD1,CPD2,CPD3} under four dynamic label-shift schedules on CIFAR-10. As shown in Table~\ref{tab:ablation_similarity}, our cosine similarity-based Fed-ADE achieves the highest accuracy in all cases. The key reason is that cosine similarity provides a bounded and direction-based measure of drift, making it robust to noisy pseudo-labels and class imbalance that frequently arise under online shifts. In contrast, KL and Wasserstein depend on absolute probability differences and become unstable when class probabilities are near zero, while Bayesian CPD often overreacts to transient fluctuations. Thus, cosine similarity-based Fed-ADE offers the most stable shift signal, enabling Fed-ADE to adjust learning rates reliably across clients. The detailed description of this ablation study is provided in Appendix~\ref{app:cosin}.

\begin{table}[t]
\scriptsize
\caption{Ablation study on the choice of distribution-shift similarity measure in Fed-ADE under CIFAR-10 label-shift scenarios.}
\centering
\setlength{\tabcolsep}{2pt} 
\resizebox{\columnwidth}{!}{%
\begin{tabular}{lcccc}
\toprule
\textbf{Measure} & \textbf{Lin.} & \textbf{Sin.} & \textbf{Squ.} & \textbf{Ber.} \\
\midrule
Fed-ADE (Cosine similarity) & \textbf{73.8 $\pm$ 0.6} & \textbf{73.6 $\pm$ 0.5} & \textbf{72.2$\pm$ 1.6} & \textbf{72.9 $\pm$ 2.2} \\
KL divergence & 63.2 $\pm$ 3.8 & 62.4 $\pm$ 2.3 & 61.1 $\pm$ 3.4 & 61.4 $\pm$ 3.6 \\
Wasserstein & 70.8 $\pm$ 0.6 & 69.9 $\pm$ 1.1 & 68.8 $\pm$ 1.3 & 66.5 $\pm$ 2.1 \\
Bayesian CPD & 71.5 $\pm$ 0.5 & 70.8 $\pm$ 0.8 & 69.7 $\pm$ 2.4 & 69.0 $\pm$ 1.1\\
\bottomrule
\end{tabular}
}
\vspace{-0.1cm}
\label{tab:ablation_similarity}
\end{table}

\begin{table}[t]
\centering
\scriptsize
\vspace{-0.2cm}
\caption{Ablation of Fed-ADE estimators on CIFAR-10}
\label{tab:ablation_estimators}

\begin{tabular}{cc lcc}
\toprule
$S_{\text{unc}}$ & $S_{\text{rep}}$ & Variant 
& Label Shift (\%) 
& Covariate Shift (\%)  \\
\midrule
\ding{51} & \ding{51} & \textbf{Fed-ADE (full)} & $73.8\pm0.6$ & $64.4\pm0.2$ \\
\ding{55} & \ding{51} & w/o $S_{\text{unc}}$    & $71.3\pm0.3$ & $64.2\pm0.1$ \\
\ding{51} & \ding{55} & w/o $S_{\text{rep}}$    & $73.1\pm0.4$ & $64.0\pm0.3$ \\
\ding{55} & \ding{55} & Fixed LR                & $70.8\pm2.1$ & $63.9\pm0.3$ \\
\bottomrule
\end{tabular}%
\vspace{-0.3cm}
\end{table}
Table~\ref{tab:ablation_estimators} shows that both w/o $S_{\text{unc}}$ and w/o $S_{\text{rep}}$ underperform the full Fed-ADE across label and covariate shifts, while a fixed learning rate performs worst. This indicates that $S_{\text{unc}}$ and $S_{\text{rep}}$ capture complementary distribution dynamics and are both necessary for effective learning rate adaptation. 

\vspace{-0.3cm}
\paragraph{Ablation Study on Estimators in Fed-ADE}
Table~\ref{tab:ablation_estimators} shows that removing either estimator degrades performance under both label and covariate shifts. In particular, w/o $S_{\text{unc}}$ leads to a noticeable drop under label shift, while w/o $S_{\text{rep}}$ slightly reduces performance under covariate shift. The fixed learning-rate baseline performs worst and exhibits higher variance. These results indicate that $S_{\text{unc}}$ and $S_{\text{rep}}$ capture complementary aspects of distribution dynamics, and that combining both signals yields more stable and effective learning-rate adaptation.

\section{Conclusions} \label{conclusion}
We presented Fed-ADE, a novel framework for unsupervised federated post-adaptation under evolving distribution shifts. Our approach combines lightweight estimators for online distribution shifts with a theoretically grounded adaptive learning rate and a personalized federated update scheme, providing both provable dynamic regret and convergence guarantees and high practical efficiency. Beyond its methodological novelty, Fed-ADE is validated through extensive experiments on various benchmarks, covering major shift types where it consistently outperforms strong baselines. This breadth of validation highlights the robustness, versatility, and real-world readiness of our method, positioning Fed-ADE as a principled and practical foundation for advancing adaptive, shift-aware federated learning in dynamic and heterogeneous environments.

\section*{Acknowledgment}
This work was supported by the National Research Foundation of Korea (NRF) grant funded by the Korean government (MSIT) under Grant RS-2025-02214082 and RS-2023-00278812.


{
    \small
    \bibliographystyle{ieeenat_fullname}
    \bibliography{main}
}

\clearpage
\setcounter{page}{1}
\linenumbers                               
\setcounter{linenumber}{1}   

\onecolumn
\startcontents[app]
\renewcommand{\thesection}{\Alph{section}}
\setcounter{section}{0}

\clearpage
\par\noindent\rule{\textwidth}{5pt}
\begin{center}
    \Large
    \textbf{Appendix - Fed-ADE: Adaptive Learning Rate for Federated Post-adaptation}\\
    \textbf{under Distribution Shift}
\end{center}
\par\noindent\rule{\textwidth}{5pt}

\medskip
\medskip

{\Large \textbf{Contents}}
\medskip

\AppendixTOC[2]   

\newpage

\newpage

\newpage

\section{Notation Table}
\label{appendix: notation table}

\begin{table}[ht]
    \caption{Notation Table}
    \label{tab:notation}
    \scriptsize
    \centering
    \begin{tabular}{llll}
    \hline
    \textbf{Notation} & \textbf{Description} & \textbf{Notation} & \textbf{Description} \\ \hline
    $\mathbf{x}_G^0$ & Pre-training data at server & $\mathbf{y}_G^0$ & Pre-training labels at server \\
     $\mathbf{Q}_{{\mathcal{G}},\mathbf{x}}^{0}$ & Feature distribution of $\mathbf{x}_G^0$ & $\mathbf{Q}_{{\mathcal{G}},\mathbf{y}}^{0}$ & Label distribution of $\mathbf{y}_G^0$ \\
     $\mathcal{I}$ & Set of class indices & $i$ & Index for class \\
     $t$ & Timestep & $T$ & Total number of timesteps \\
     $\mathcal{C}$ & Set of FL clients & $c$ & Client index \\
     $\theta$ & Global model parameters & $\theta_c$ & Model parameters of client $c$ \\
     $\mathbf{Q}_c^t$& Overall data distribution at time $t$ & $\omega(t)$ & Weighting function controlling the distribution at time $t$\\
     $\mathcal{H}(\cdot)$ & Softmax prediction of the model& $\mathcal{L}(\cdot)$ & Loss function\\
     $\mathcal{F}_c^{t,i}(\theta_c)$ & Class-wise risk for class $i$& $\widehat{\mathcal{F}}_c^{0,i}(\theta_c)$& Empirical risk\\
     $\mathbf{M}$ & Confusion matrix computed using pre-training data & $\mathbf{Q}_{c,\hat{\mathbf{y}}}^t$&Predicted distribution over $\mathbf{x}_c^t$\\
    $R$ & Total number of communication rounds & $r$ & Communication round \\
    $\mathcal{C}^{(r)}$ & Selected client set in round $r$ & $N_c^t$ & Number of samples of client $c$ at $t$ \\
    $\psi_c$ & Shared layers of client $c$ & $\phi_c$ & Personalized layers of client $c$ \\
    $\bar{\psi}$ & Aggregated global shared layers & $\{\psi_c, \phi_c\}$ & Model split: shared/personalized \\
    $[\eta_{\min}, \eta_{\max}]$ & Min/max learning rate bounds & $\mathcal{S}_c^t$ & Distribution dynamics signal \\
    $\mathcal{S}_{\mathrm{unc},c}^{t}$ & Uncertainty dynamic signal & $\mathcal{S}_{\mathrm{rep},c}^{t}$ & Representation dynamic signal \\
    $\mathbf{q}_c^{t}$ & Aggregated predictive distribution of current data batch & $\mathbf{z}_c^{t}$ & Batch-level latent feature vector\\
    $h_{\psi_c}(\cdot)$&Feature extractor&$K_{\cos},K_{h},K_\psi, K_\phi$ & Lipschitz constant\\
    $B$ & Loss value upper bound & $\sigma$ & Min. singular value of $\mathbf{M}$ \\
    $\Gamma$ & Projection constant & $|\mathcal{I}|$ & Number of classes \\
    $\mathrm{Reg}_T$ & Dynamic regret over $T$ rounds & $\Delta \mathcal{L}^{(0)}$ & Initial loss gap for convergence \\
    \hline
    \end{tabular}
\end{table}

\section{Proof of Unbiased Risk Estimation}
\label{appendix:proof_risk_estimation}

\begin{lemma}[Unbiased Risk Estimator]
\label{lm:risk}
Given model parameters $\theta_c$ independent of the test-time data $(\mathbf{x}_c^t, \mathbf{y}_c^t)$, 
the estimator $\widehat{\mathcal{F}}_c^t$ in~\eqref{eq:estimated_risk} satisfies
\[
\mathbb{E}_{\mathbf{y}_c^t \sim \mathbf{Q}_{c,\mathbf{y}}^t}[\widehat{\mathcal{F}}_c^t(\theta_c)] = \mathcal{F}_c^t(\theta_c),
\]
provided that the confusion matrix $\mathbf{M}$, estimated at the server using sufficient pre-training data $(\mathbf{x}_G^0, \mathbf{y}_G^0)$ ensuring $\widehat{\mathbf{M}} = \mathbf{M}$, and that each client has sufficient local initial data $(\mathbf{x}_c^0, \mathbf{y}_c^0)$ such that  $\widehat{\mathcal{F}}_c^{0}(\theta_c) = \mathcal{F}_c^{0}(\theta_c)$ for all $i \in \mathcal{I}$.
\end{lemma}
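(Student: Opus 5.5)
The plan is to exploit linearity of expectation together with the BBSE identity relating the predicted-label marginal to the true label marginal through the confusion matrix. First I will make the BBSE identity explicit. Let $\mathbf{M}$ have entries $[\mathbf{M}]_{j,i} = \Pr(\hat{\mathbf{y}} = j \mid \mathbf{y} = i)$, computed under the pre-training distribution. Under the label-shift assumption that $\mathbf{Q}_{c,\mathbf{x}\mid\mathbf{y}}^t$ coincides with the pre-training class-conditional, and since $\theta_c$ is independent of $(\mathbf{x}_c^t,\mathbf{y}_c^t)$, the classifier's conditional prediction law given the label is unchanged. This gives
\begin{equation*}
\mathbb{E}\big[\mathbf{Q}_{c,\hat{\mathbf{y}}}^t\big] = \mathbf{M}\,\mathbf{Q}_{c,\mathbf{y}}^t .
\end{equation*}
The independence of $\theta_c$ from the test batch is exactly what allows the conditional $\Pr(\hat{\mathbf{y}}=j\mid\mathbf{y}=i)$ to be treated as a fixed, data-independent quantity. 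Because $\mathbf{M}$ is assumed invertible (its minimum singular value $\sigma>0$, as used later), multiplying by $\mathbf{M}^{-1}$ yields $\mathbb{E}[\mathbf{M}^{-1}\mathbf{Q}_{c,\hat{\mathbf{y}}}^t]=\mathbf{Q}_{c,\mathbf{y}}^t$.

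Second, I will substitute the hypotheses $\widehat{\mathbf{M}}=\mathbf{M}$ and $\widehat{\mathcal{F}}_c^{0,i}(\theta_c)=\mathcal{F}_c^{0,i}(\theta_c)$ into \eqref{eq:estimated_risk}. Both quantities are deterministic with respect to the test-time draw: $\mathbf{M}$ comes from server data, the class-wise risks come from client initial data, and $\theta_c$ is fixed. They can therefore be pulled out of the expectation, so that
\begin{equation*}
\mathbb{E}[\widehat{\mathcal{F}}_c^t(\theta_c)] = \sum_{i\in\mathcal{I}} \big[\mathbf{M}^{-1}\mathbb{E}[\mathbf{Q}_{c,\hat{\mathbf{y}}}^t]\big]_i\,\mathcal{F}_c^{0,i}(\theta_c) = \sum_{i\in\mathcal{I}} [\mathbf{Q}_{c,\mathbf{y}}^t]_i\,\mathcal{F}_c^{0,i}(\theta_c).
\end{equation*}

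Third, I will close the argument with the decomposition \eqref{eq:risk_decomp}. This step needs $\mathcal{F}_c^{t,i}=\mathcal{F}_c^{0,i}$, which again follows from invariance of the class-conditional feature distribution. Combined with \eqref{eq:expected risk}, the last sum equals $\mathcal{F}_c^t(\theta_c)$, which completes the proof.

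The main obstacle is justifying $\mathbb{E}[\mathbf{Q}_{c,\hat{\mathbf{y}}}^t]=\mathbf{M}\mathbf{Q}_{c,\mathbf{y}}^t$ rigorously. It requires that the client's initial distribution matches the global pre-training distribution, so that the server's $\mathbf{M}$ is the correct confusion matrix for client $c$. It also requires interpreting $\mathbf{Q}_{c,\hat{\mathbf{y}}}^t$ as the empirical histogram of hard predictions (or its soft counterpart, with $\mathbf{M}$ defined consistently), so that its expectation is linear in the label marginal. I would state these as the standing label-shift assumptions from Section~\ref{problem_formulation} and verify the identity componentwise using the law of total probability over $\mathbf{y}$.
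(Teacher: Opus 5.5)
Your proposal is correct and follows the same route as the paper. Both arguments pull the deterministic $\widehat{\mathbf{M}}^{-1}=\mathbf{M}^{-1}$ and $\widehat{\mathcal{F}}_c^{0,i}=\mathcal{F}_c^{0,i}$ out of the expectation, apply the BBSE identity $\mathbb{E}[\mathbf{Q}_{c,\hat{\mathbf{y}}}^t]=\mathbf{M}\mathbf{Q}_{c,\mathbf{y}}^t$, and conclude with the decomposition \eqref{eq:risk_decomp}. The only difference is that the paper simply posits that identity as a condition, whereas you derive it, together with $\mathcal{F}_c^{t,i}=\mathcal{F}_c^{0,i}$, from the label-shift assumption; this makes the hypotheses explicit rather than changing the argument.
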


\begin{proof}
The expected risk decomposes into class-specific components:
\[
\mathcal{F}_c^t(\theta_c) = \sum_{i \in \mathcal{I}} [\mathbf{Q}_{\mathbf{y}_c^t}]_i \cdot \mathcal{F}_c^{0,i}(\theta_c)
\]
where $\mathcal{I}$ denotes the class index set. Through BBSE~\cite{BBSE}, we approximate $\mathbf{Q}_{\mathbf{y}_c^t} \approx \mathbf{M}^{-1}\mathbf{Q}_{\widehat{\mathbf{y}}_c^t}$ using the confusion matrix $\mathbf{M}$ from pre-training and empirical predictions $\mathbf{Q}_{\widehat{\mathbf{y}}_c^t}$.

Expanding the estimator's expectation:
\[
\mathbb{E}[\widehat{\mathcal{F}}_c^t] = \sum_{i \in \mathcal{I}} [\widehat{\mathbf{M}}^{-1}\mathbb{E}[\mathbf{Q}_{\widehat{\mathbf{y}}_c^t}]]_i \cdot \widehat{\mathcal{F}}_c^{0,i}(\theta_c)
\]
Under condition $\mathbb{E}[\mathbf{Q}_{\widehat{\mathbf{y}}_c^t}] = \mathbf{M}\mathbf{Q}_{\mathbf{y}_c^t}$, yielding:
\[
\mathbb{E}[\widehat{\mathcal{F}}_c^t] = \sum_{i \in \mathcal{I}} [\mathbf{Q}_{\mathbf{y}_c^t}]_i \cdot \mathcal{F}_c^{0,i}(\theta_c) = \mathcal{F}_c^t(\theta_c)
\]

The estimation error $|\mathbb{E}[\widehat{\mathcal{F}}_c^t] - \mathcal{F}_c^t|$ is bounded by $\mathcal{O}(1/\sqrt{n_0})$ via concentration inequalities, where a number of pre-training data $n_0 = |(\mathbf{x}_G^0, \mathbf{y}_G^0)|$. This becomes negligible when $n_0 \geq \Omega(|\mathcal{I}|^2/\epsilon^2)$ for error tolerance $\epsilon$. Practical implementations may employ singular value thresholding for numerical stability when inverting $\mathbf{M}$.
\end{proof}

Complete derivation expanding the expectation operator:
\[
\mathbb{E}_{\mathbf{x}_c^t}[\widehat{\mathcal{F}}_c^t] = \sum_{i \in \mathcal{I}} \mathbf{M}^{-1}\mathbb{E}[\mathbf{Q}_{\widehat{\mathbf{y}}_c^t}] \cdot \mathcal{F}_c^{0,i}(\theta_c) = \sum_{i \in \mathcal{I}} [\mathbf{Q}_{\mathbf{y}_c^t}]_i \cdot \mathcal{F}_c^{0,i}(\theta_c)
\]
confirming the unbiasedness through the relationship $\mathbf{Q}_{\mathbf{y}_c^t} = \mathbf{M}^{-1}\mathbb{E}[\mathbf{Q}_{\widehat{\mathbf{y}}_c^t}]$ established via BBSE methodology.

\begin{remark}
The complexity function $\Omega(\cdot)$ in the sample complexity $n_0 \geq \Omega(|\mathcal{I}|^2/\epsilon^2)$ denotes an asymptotic lower bound, meaning the pre-training dataset size must grow at least quadratically with the number of classes to guarantee estimation accuracy. This aligns with information-theoretic limits for distribution estimation.
\end{remark}


\section{Proof of Theorem~\ref{th:estimate_uncertainty}}
\label{appendix:shift_estimation}
We restate the theorem for completeness. 
Let $\mathbf{Q}_{c,\mathbf{y}}^t$ denote the true predictive distribution marginal at timestep $t$, 
and $\mathbf{q}_c^t$ the empirical mean softmax vector defined in~\eqref{eq:batch_q}. 
Under $\epsilon$-calibration with respect to $\mathbf{Q}_{c,\mathbf{y}}^t$, 
we have $\|\mathbf{q}_c^t - \mathbf{Q}_{c,\mathbf{y}}^t\|_2 \le \epsilon_t$ in expectation. 
The cumulative surrogate of predictive dynamics is
\[
\bar{\mathcal{S}}_{\mathrm{unc}} = \sum_{t=1}^{T} \mathcal{S}_{\mathrm{unc}}^t 
= \sum_{t=1}^{T} \big( 1 - \operatorname{cos}(\mathbf{q}_c^{t-1}, \mathbf{q}_c^{t}) \big).
\]
Our goal is to show that $\bar{\mathcal{S}}_{\mathrm{unc}}$ approximates 
the cumulative true deviation $\sum_{t=1}^{T}\|\mathbf{Q}_{c,\mathbf{y}}^t - \mathbf{Q}_{c,\mathbf{y}}^{t-1}\|_1$ 
up to an additive error of order $\mathcal{O}\big(\sum_t \epsilon_t\big)$.

\paragraph{Step 1: Lipschitz continuity of cosine distance.}
The cosine similarity function $\operatorname{cos}(\cdot,\cdot)$ 
is $K_{\cos}$-Lipschitz continuous in each of its arguments on the unit sphere, i.e.,
\[
\big|\,\operatorname{cos}(\mathbf{a}, \mathbf{b}) - \operatorname{cos}(\mathbf{a}', \mathbf{b}')\,\big|
\le K_{\cos} \big(\|\mathbf{a}-\mathbf{a}'\|_2 + \|\mathbf{b}-\mathbf{b}'\|_2\big).
\]
Since both $\mathbf{q}_c^t$ and $\mathbf{Q}_{c,\mathbf{y}}^t$ are probability vectors normalized to unit norm, 
this Lipschitz property holds directly.

\paragraph{Step 2: Bounding the surrogate deviation.}
By the definition of $\mathcal{S}_{\mathrm{unc}}^t$, we have
\begin{align*}
&\big| \mathcal{S}_{\mathrm{unc}}^t 
- \big(1 - \operatorname{cos}(\mathbf{Q}_{c,\mathbf{y}}^{t-1}, \mathbf{Q}_{c,\mathbf{y}}^{t})\big) \big| \\
&\le K_{\cos}\big(\|\mathbf{q}_c^{t-1} - \mathbf{Q}_{c,\mathbf{y}}^{t-1}\|_2 
+ \|\mathbf{q}_c^{t} - \mathbf{Q}_{c,\mathbf{y}}^{t}\|_2\big)
\le K_{\cos}(\epsilon_{t-1} + \epsilon_t).
\end{align*}

\paragraph{Step 3: Summing over timesteps.}
Summing over $t = 2, \dots, T$ yields
\[
\Big|
\sum_{t=1}^{T}\mathcal{S}_{\mathrm{unc}}^t
- \sum_{t=1}^{T} \big(1 - \operatorname{cos}(\mathbf{Q}_{c,\mathbf{y}}^{t-1}, \mathbf{Q}_{c,\mathbf{y}}^{t})\big)
\Big|
\le K_{\cos}\sum_{t=2}^{T} (\epsilon_t + \epsilon_{t-1}).
\]

\paragraph{Step 4: Relating cosine distance to total variation.}
For normalized probability vectors, 
the cosine distance upper bounds the $\ell_1$ deviation:
\[
1 - \operatorname{cos}(\mathbf{Q}_{c,\mathbf{y}}^{t-1}, \mathbf{Q}_{c,\mathbf{y}}^{t})
\le \|\mathbf{Q}_{c,\mathbf{y}}^{t-1} - \mathbf{Q}_{c,\mathbf{y}}^{t}\|_1.
\]
Combining this with the bound above yields
\[
\Big| \bar{\mathcal{S}}_{\mathrm{unc}} 
- \sum_{t=1}^{T}\|\mathbf{Q}_{c,\mathbf{y}}^{t-1} - \mathbf{Q}_{c,\mathbf{y}}^{t}\|_1 \Big|
\le K_{\cos}\sum_{t=2}^{T}(\epsilon_t+\epsilon_{t-1}).
\]
\hfill $\square$

\newpage
\section{Proof of Theorem~\ref{th:estimate_representation}}
\label{appendix:feature_estimation}
Let $\mathbf{z}_c^{t}$ denote the empirical $\ell_2$-normalized batch-mean feature vector defined in~\eqref{eq:z_mean}, and 
$\bar{\mathbf{z}}_c^{t}=\mathbb{E}[h_{\psi_c}(x)/\|h_{\psi_c}(x)\|_2]$ 
be its expectation under the local distribution at time $t$. 
We assume $\|\mathbf{z}_c^{t}-\bar{\mathbf{z}}_c^{t}\|_2\le\epsilon_t'$ in expectation, and that the shared representation extractor $h_{\psi_c}$ is $K_h$-Lipschitz.

The cumulative surrogate of representation dynamics is defined as
\[
\bar{\mathcal{S}}_{\mathrm{rep}}
= \sum_{t=1}^{T} \mathcal{S}_{\mathrm{rep}}^{t}
= \tfrac{1}{2}\sum_{t=1}^{T}\!\big(1-\operatorname{cos}(\mathbf{z}_c^{t-1},\mathbf{z}_c^{t})\big).
\]
Since all feature vectors are $\ell_2$-normalized, the cosine distance 
$\tfrac{1}{2}(1-\operatorname{cos}(\mathbf{a},\mathbf{b}))$ 
is smooth on the unit sphere, and its deviation with respect to small perturbations of $\mathbf{a}$ and $\mathbf{b}$ is bounded linearly by their $\ell_2$ distance. 
Using the Lipschitz continuity of $h_{\psi_c}$, the deviation between the empirical mean $\mathbf{z}_c^{t}$ and its population counterpart $\bar{\mathbf{z}}_c^{t}$ thus induces a bounded perturbation in the cosine term.

For each timestep $t$, we have
\[
\Big|\mathcal{S}_{\mathrm{rep}}^{t}
-\tfrac{1}{2}\!\left(1-\operatorname{cos}(\bar{\mathbf{z}}_c^{t-1},\bar{\mathbf{z}}_c^{t})\right)\Big|
\le
K_h\big(\epsilon_{t-1}'+\epsilon_t'\big).
\]
Summing over all $t=2,\dots,T$ gives
\[
\Big|
\sum_{t=1}^{T}\mathcal{S}_{\mathrm{rep}}^{t}
-\sum_{t=1}^{T}\tfrac{1}{2}\!\left(1-\operatorname{cos}(\bar{\mathbf{z}}_c^{t-1},\bar{\mathbf{z}}_c^{t})\right)
\Big|
\le
K_h\sum_{t=2}^{T}(\epsilon_t'+\epsilon_{t-1}').
\]
Therefore, the cumulative surrogate $\bar{\mathcal{S}}_{\mathrm{rep}}$ satisfies
\[
\Big| \bar{\mathcal{S}}_{\mathrm{rep}}
- \sum_{t=1}^{T}\tfrac{1}{2}\!\left(1-\frac{
\langle\bar{\mathbf{z}}_c^{t-1},\bar{\mathbf{z}}_c^{t}\rangle}{
\|\bar{\mathbf{z}}_c^{t-1}\|_2\,\|\bar{\mathbf{z}}_c^{t}\|_2}
\right)\Big|
\le
K_h\sum_{t=2}^{T}(\epsilon_t'+\epsilon_{t-1}'),
\]
which shows that $\bar{\mathcal{S}}_{\mathrm{rep}}$ accurately approximates the temporal trajectory of the expected local feature representations, with error bounded by the feature-level estimation noise scaled by the Lipschitz constant $K_h$.
\hfill$\square$

\newpage
\section{Proof of Theorem~\ref{th:regretbound}}
\label{appendix:proof_regretbound}

\subsection{Regret Bound}
We present the proof of Theorem~\ref{th:regretbound} by decomposing the dynamic regret into two terms. Our approach introduces a reference sequence that evolves in a piecewise-constant manner across predefined intervals. The first term evaluates the algorithm’s performance relative to this sequence, while the second term quantifies how well the reference sequence itself approximates the optimal comparator. For the following proof, we use $\{\psi_c^t, \phi_c^t\}$ for model parameters to differentiate the model per timestep.

\begin{proof}
The regret bound can be split into two terms by introducing a reference sequence $\{\psi_c^t, \phi_c^t\}$ that only changes every $\tau$ steps. Specifically, let $\mathcal{J}_m = [(m-1)\tau + 1, m\tau]$ denote the $m$-th interval. For each interval, the comparator $\{\psi_c^t, \phi_c^t\}$ is chosen as the best fixed decision in that interval, i.e., $\{\psi_{c}^{\mathcal{J}_m}, \phi_{c}^{\mathcal{J}_m}\} = \arg\min_{\{\psi_c^t, \phi_c^t\}} \sum_{t \in \mathcal{J}_m} \mathcal{F}_c^t(\{\psi_c^t, \phi_c^t\})$ for $t \in \mathcal{J}_m$. Then,
\begin{align*}
    \mathbb{E}_{1:T}&\left[ \sum_{t=1}^T \mathcal{F}_c^t(\{\psi_c^t, \phi_c^t\}) - \sum_{t=1}^T \mathcal{F}_c^t(\{\psi_c^{*}, \phi_c^{*}\}) \right] \\
    &= \mathbb{E}_{1:T}\left[ \sum_{t=1}^T \mathcal{F}_c^t(\{\psi_c^t, \phi_c^t\}) - \sum_{m=1}^M \sum_{t \in \mathcal{J}_m} \mathcal{F}_c^t(\{\psi_{c}^{\mathcal{J}_m}, \phi_{c}^{\mathcal{J}_m}\}) \right] \tag{a} \\
    &+ \mathbb{E}_{1:T}\left[ \sum_{m=1}^M \sum_{t \in \mathcal{J}_m} \mathcal{F}_c^t(\{\psi_{c}^{\mathcal{J}_m}, \phi_{c}^{\mathcal{J}_m}\}) - \sum_{t=1}^T \mathcal{F}_c^t(\{\psi_c^{*}, \phi_c^{*}\}) \right] \tag{b}
\end{align*}
where $M = \left\lceil \frac{T}{\tau} \right\rceil \leq T/\tau + 1$ is the number of intervals. Next, we analyze term (a) and term (b) separately.

\paragraph{Analysis of term (a)} This term represents the regret of the algorithm compared to the piecewise-stationary reference sequence. The regret with respect to the expected risk $\mathcal{F}_c^t(\cdot)$ can be related to the unbiased empirical risk estimator $\widehat{\mathcal{F}}_c^t(\cdot)$:
\begin{align*}
    \text{term (a)} &= \mathbb{E}_{1:T}\left[ \sum_{t=1}^T \mathcal{F}_c^t(\{\psi_c^t, \phi_c^t\}) - \sum_{t=1}^T \mathcal{F}_c^t(\{\psi_c^t, \phi_c^t\}) \right] \\
    &\leq \mathbb{E}_{1:T}\left[ \sum_{t=1}^T \langle \nabla \mathcal{F}_c^t(\{\psi_c^t, \phi_c^t\}), (\{\psi_c^t, \phi_c^t\}) - (\{\psi_c^t, \phi_c^t\}) \rangle \right] \\
    &= \mathbb{E}_{1:T}\left[ \sum_{t=1}^T \langle \nabla \mathcal{F}_c^t(\{\psi_c^t, \phi_c^t\}) - \nabla \widehat{\mathcal{F}}_c^t(\{\psi_c^t, \phi_c^t\}), (\{\psi_c^t, \phi_c^t\}) - (\{\psi_c^t, \phi_c^t\}) \rangle \right] \\
    &\quad + \mathbb{E}_{1:T}\left[ \sum_{t=1}^T \langle \nabla \widehat{\mathcal{F}}_c^t(\{\psi_c^t, \phi_c^t\}), (\{\psi_c^t, \phi_c^t\}) - (\{\psi_c^t, \phi_c^t\}) \rangle \right], 
\end{align*}
where the first inequality uses the convexity of $\mathcal{F}_c^t(\cdot)$. The first term above is zero because $\widehat{\mathcal{F}}_c^t$ is an unbiased estimator, i.e., $\nabla \mathcal{F}_c^t(\{\psi_c^t, \phi_c^t\}) = \mathbb{E}_t[ \nabla \widehat{\mathcal{F}}_c^t(\{\psi_c^t, \phi_c^t\}) \mid 1:t-1 ]$.
For the model sequence $\{\{\psi_c^t, \phi_c^t\}\}_{t=1}^T$ generated by Fed-ADE, the following lemma holds:

\begin{lemma} \label{lemma:lrupper}
    Under the assumptions of Theorem~\ref{th:regretbound}, Fed-ADE with learning rate $\eta > 0$ in equation~\eqref{eq:local_joint_update} satisfies
\[
\sum_{t=1}^T \left\langle \nabla \widehat{\mathcal{F}}_c^t(\{\psi_c^t, \phi_c^t\}), (\{\psi_c^t, \phi_c^t\}) - (\{\psi_c^t, \phi_c^t\}) \right\rangle \leq \frac{2\eta |\mathcal{I}| G^2 T}{\sigma^2} + \frac{2\Gamma P_T + \Gamma^2}{2\eta},
\]
where $P_T = \sum_{t=2}^T \|(\{\psi_c^t, \phi_c^t\}) - (\{\psi_c^{t-1}, \phi_c^{t-1}\})\|_2$ is the total variation of the comparator sequence and $G \triangleq \sup_{(\mathbf{x}_c^t, \mathbf{y}_c^t)} \big\|\nabla_{\{\psi_c, \phi_c\}} \mathcal{L}\big(\mathcal{H}\big(\{\psi_c, \phi_c\}, \mathbf{x}_c^t\big), \mathbf{y}_c^t\big)\big\|_2$ is an upper bound on the gradient norm.
\end{lemma}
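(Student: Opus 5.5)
The bound is the dynamic-regret inequality for projected online gradient descent with a moving comparator. We therefore follow the dynamic-OGD argument of Zinkevich type. To keep notation light, write $\theta^t=\{\psi_c^t,\phi_c^t\}$ for the iterate produced by \eqref{eq:local_joint_update}, followed by projection onto a feasible set of diameter $\Gamma$. Write $u^t$ for the piecewise-constant comparator, equal to $\{\psi_c^{\mathcal J_m},\phi_c^{\mathcal J_m}\}$ on $\mathcal J_m$, and $g^t=\nabla\widehat{\mathcal F}_c^t(\theta^t)$. The update is $\theta^{t+1}=\Pi(\theta^t-\eta g^t)$, and the goal is to bound $\sum_t\langle g^t,\theta^t-u^t\rangle$.

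\textbf{Per-step inequality.} By nonexpansiveness of the projection,
\[
\|\theta^{t+1}-u^t\|_2^2\le\|\theta^t-u^t\|_2^2-2\eta\langle g^t,\theta^t-u^t\rangle+\eta^2\|g^t\|_2^2 .
\]
Rearranging gives
\[
\langle g^t,\theta^t-u^t\rangle\le\frac{\|\theta^t-u^t\|_2^2-\|\theta^{t+1}-u^t\|_2^2}{2\eta}+\frac{\eta}{2}\|g^t\|_2^2 .
\]

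\textbf{Gradient norm.} From \eqref{eq:estimated_risk}, $g^t=\sum_i[\widehat{\mathbf M}^{-1}\mathbf Q^t_{c,\hat{\mathbf y}}]_i\nabla\widehat{\mathcal F}_c^{0,i}$. Each class-wise gradient has norm at most $G$. By Lemma~\ref{lm:risk}, $\widehat{\mathbf M}=\mathbf M$. Since $\mathbf Q^t_{c,\hat{\mathbf y}}$ is a probability vector, $\|\mathbf M^{-1}\mathbf Q^t_{c,\hat{\mathbf y}}\|_2\le 1/\sigma$. Cauchy–Schwarz over $|\mathcal I|$ coordinates then gives $\|g^t\|_2^2\le |\mathcal I|G^2/\sigma^2$. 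Summing the $\frac{\eta}{2}\|g^t\|_2^2$ terms yields at most $\eta|\mathcal I|G^2T/(2\sigma^2)$, which is inside the claimed $2\eta|\mathcal I|G^2T/\sigma^2$; the slack absorbs constant factors.

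\textbf{Telescoping with a moving comparator (main obstacle).} Because $u^t$ changes, the sum does not telescope directly, so we re-center:
\[
\|\theta^{t+1}-u^{t+1}\|_2^2-\|\theta^{t+1}-u^{t}\|_2^2=\langle u^t-u^{t+1},\,2\theta^{t+1}-u^t-u^{t+1}\rangle\le 2\Gamma\|u^{t+1}-u^t\|_2 .
\]
This uses that all points lie in the feasible set of diameter $\Gamma$. Hence
\[
\sum_t\big(\|\theta^t-u^t\|_2^2-\|\theta^{t+1}-u^t\|_2^2\big)\le\|\theta^1-u^1\|_2^2+2\Gamma\sum_{t\ge2}\|u^t-u^{t-1}\|_2\le\Gamma^2+2\Gamma P_T .
\]
Dividing by $2\eta$ gives the term $(2\Gamma P_T+\Gamma^2)/(2\eta)$. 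The care needed here is the index bookkeeping in the re-centering step. We also need to state explicitly that a projection step with diameter $\Gamma$ is implicit in \eqref{eq:local_joint_update}; this is the role of the projection constant $\Gamma$ in Theorem~\ref{th:regretbound}. For a piecewise-constant comparator only the $M-1$ interval boundaries contribute to $P_T$, but this is not needed for the lemma.

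Adding the two contributions gives the stated inequality.
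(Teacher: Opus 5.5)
The paper gives no proof of this lemma. It is stated inside the proof of Theorem~\ref{th:regretbound} and used directly, apparently carried over from the unbiased online-gradient-descent analysis in the online label shift literature, so there is nothing in the paper to compare your route against. Your argument is the standard projected-OGD dynamic-regret derivation the lemma implicitly relies on, and it is correct. Your gradient term $\eta|\mathcal I|G^2T/(2\sigma^2)$ is in fact four times smaller than the stated $2\eta|\mathcal I|G^2T/\sigma^2$, so it implies the claim. The re-centering identity and the resulting $\Gamma^2+2\Gamma P_T$ bound also check out.

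A few points should be made explicit to make the proof airtight.

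First, the comparator $u^t$ must minimize over the same closed convex set of diameter $\Gamma$ onto which you project. The paper writes an unconstrained $\arg\min$, and without this restriction neither $\|\theta^1-u^1\|_2\le\Gamma$ nor $\|2\theta^{t+1}-u^t-u^{t+1}\|_2\le 2\Gamma$ holds. Convexity of that set is also what makes $\Pi$ nonexpansive.

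Second, your formula for $g^t$ treats the weights $[\widehat{\mathbf M}^{-1}\mathbf Q^t_{c,\hat{\mathbf y}}]_i$ as constants in $\theta$. That is right when the pseudo-label histogram comes from the fixed pre-trained model that defines $\mathbf M$, as BBSE requires. If it were computed with the current client model, $\widehat{\mathcal F}_c^t$ would carry extra gradient terms and your $|\mathcal I|G^2/\sigma^2$ bound would not follow.

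Third, your per-step inequality covers one projected step of \eqref{eq:local_joint_update} per timestep with constant $\eta$. The server averaging $\psi_c\leftarrow\bar\psi$, the second personalized update, and the time-varying $\eta_c^t$ of Algorithm~\ref{alg:fedade} fall outside it. The lemma as stated adopts the same idealization, so this is a scope remark rather than a gap.

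Finally, the left-hand side as literally typeset is identically zero, because iterate and comparator share notation. Your reading $\sum_t\langle g^t,\theta^t-u^t\rangle$ is the intended one.
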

Since the comparator sequence for term (a) changes only $M-1$ times, $P_T \leq \Gamma (M-1) \leq (\Gamma T)/\tau$. Taking expectations gives
\[
\text{term (a)} \leq \frac{2\eta |\mathcal{I}| G^2 T}{\sigma^2} + \frac{2\Gamma^2 T/\tau + \Gamma^2}{2\eta}.
\]

\paragraph{Analysis of term (b)} This term accounts for the error from changing the reference sequence. Following~\cite{besbes}, we get
\begin{align*}
\text{term (b)} &\leq 2\tau \sum_{t=2}^T \sup_{\{\psi_c^t, \phi_c^t\}} |\mathcal{F}_c^t(\{\psi_c^t, \phi_c^t\}) - \mathcal{F}_c^{t-1}(\{\psi_c^t, \phi_c^t\})| \triangleq 2\tau \bar{\mathcal{S}}_c.
\end{align*}
Combining the two terms, we obtain
\begin{align*}
\mathbb{E}_{1:T}\left[\sum_{t=1}^{T} \mathcal{F}_c^t(\{\psi_c^t, \phi_c^t\})\right] - \sum_{t=1}^{T} \mathcal{F}_c^t(\{\psi_c^{*}, \phi_c^{*}\}) 
&\leq \frac{2\eta |\mathcal{I}| G^2 T}{\sigma^2} + \frac{2\Gamma^2 T/\tau + \Gamma^2}{2\eta} + 2B\tau \bar{\mathcal{S}}_c \\
&\leq \left(\frac{2|\mathcal{I}| G^2}{\sigma^2} + 2B^2\right)\eta T + \frac{\Gamma^2}{\eta} + 4(\Gamma+1)\sqrt{\frac{B T \bar{\mathcal{S}}_c}{\eta}},
\end{align*}
where we set $\tau = \left\lceil \sqrt{\Gamma^2 T / (\eta B \bar{\mathcal{S}}_c)} \right\rceil$ for optimal balance and $B \triangleq \sup_{(\mathbf{x}_c^t, \mathbf{y}_c^t)} \big|\mathcal{L}\big(\mathcal{H}\big(\{\psi_c, \phi_c\}, \mathbf{x}_c^t\big), \mathbf{y}_c^t\big)\big|$ is an upper bound on the loss value, and $G$ represents gradient norm upper bound.

Under the distribution shift assumption, the variation $\bar{\mathcal{S}}_c$ can be further bounded by the change in class and representation priors. Thus,
\[
\mathbb{E}\left[\mathrm{Reg}_T\right]
\leq \left(\frac{2|\mathcal{I}| G^2}{\sigma^2} + 2B^2\right)\eta T + \frac{\Gamma^2}{\eta} + 4(\Gamma+1)\sqrt{\frac{B T \bar{\mathcal{S}}_c}{\eta}}.
\]
\end{proof}

\subsection{Min-max Optimality}
\label{appendix:min-max}
For online convex optimization with general convex losses,~\cite{besbes} showed that the dynamic regret has a lower bound of $\Omega\left(\bar{\mathcal{S}}_c^{1/3} T^{2/3}\right)$ when only noisy feedback is observed. Here, $\bar{\mathcal{S}}_c$ measures the total variation of the loss functions. The upper bound in Theorem~\ref{th:regretbound} matches this rate, showing that Fed-ADE achieves min-max optimality up to constants.

\begin{lemma}
    Under the same assumptions as Theorem~\ref{th:regretbound}, Fed-ADE with learning rate $\eta$ satisfies
\[
\mathbb{E}[\mathrm{Reg}_T] \leq 2\left(\frac{|\mathcal{I}| G^2}{\sigma^2} + B^2\right)\eta T + \frac{\Gamma^2}{\eta} + 4(\Gamma + 1)\sqrt{\frac{B \bar{\mathcal{S}}_c T}{\eta}}
= \mathcal{O}\left(\eta T + \frac{1}{\eta} + \frac{\sqrt{\bar{\mathcal{S}}_c T}}{\eta}\right),
\]
where $\sigma > 0$ is the minimum singular value of the confusion matrix $\mathbf{M}$ and $\bar{\mathcal{S}}_c$ is the temporal variation of the loss.
\end{lemma}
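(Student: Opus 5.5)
The plan is to reproduce the two-term decomposition used in the proof of Theorem~\ref{th:regretbound} and then make the choice of the block length $\tau$ explicit. Fix $\tau\ge 1$ and partition $[1,T]$ into $M=\lceil T/\tau\rceil\le T/\tau+1$ intervals $\mathcal{J}_m$. On each interval take as comparator the best fixed parameter $\{\psi_c^{\mathcal{J}_m},\phi_c^{\mathcal{J}_m}\}$. Then split $\mathbb{E}[\mathrm{Reg}_T]$ into:
\begin{itemize}
\item term (a), the regret of Fed-ADE against this piecewise-constant sequence;
\item term (b), the gap between the piecewise-constant comparator and the instantaneous minimizers.
\end{itemize}

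For term (a), first use convexity of $\mathcal{F}_c^t$ to linearize each summand into an inner product with the gradient. By Lemma~\ref{lm:risk}, the risk estimator is conditionally unbiased given the past, so $\nabla\mathcal{F}_c^t$ can be replaced by $\nabla\widehat{\mathcal{F}}_c^t$ inside the expectation. Next, invoke Lemma~\ref{lemma:lrupper}. The factor $|\mathcal{I}|G^2/\sigma^2$ there comes from bounding $\|\widehat{\mathbf{M}}^{-1}\mathbf{Q}_{c,\hat{\mathbf{y}}}^t\|$ through $1/\sigma$. This gives a bound of $2\eta|\mathcal{I}|G^2T/\sigma^2+(2\Gamma P_T+\Gamma^2)/(2\eta)$. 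The comparator moves at most $M-1$ times, each jump bounded by the projection diameter $\Gamma$, so $P_T\le \Gamma T/\tau$.

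For term (b), follow the argument of~\cite{besbes}. Within one block, the best fixed point loses at most $2\tau$ times the sup-variation of the losses over that block. Summing over blocks gives at most $2\tau$ times the total variation of the loss. The step linking this to $\bar{\mathcal{S}}_c$ is where the paper's identification of $\bar{\mathcal{S}}_c$ with the loss variation enters, up to a factor $B$ from the bounded loss. The Combined Cumulative Shift Surrogate discussion justifies this through Theorems~\ref{th:estimate_uncertainty} and~\ref{th:estimate_representation}. The result is term (b) $\le 2B\tau\bar{\mathcal{S}}_c$.

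Adding the two terms gives
\[
\frac{2\eta|\mathcal{I}|G^2T}{\sigma^2}+\frac{\Gamma^2T}{\eta\tau}+\frac{\Gamma^2}{2\eta}+2B\tau\bar{\mathcal{S}}_c .
\]
The final step is to choose $\tau=\lceil\sqrt{\Gamma^2T/(\eta B\bar{\mathcal{S}}_c)}\rceil$, which balances the $\tau$-dependent terms. Each of the two balanced terms then becomes of order $\Gamma\sqrt{BT\bar{\mathcal{S}}_c/\eta}$. The ceiling contributes an extra $2B\bar{\mathcal{S}}_c$, which I absorb, together with the additive slack, into the constant $4(\Gamma+1)$ and the $2B^2\eta T$ term. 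For the absorption I would use $\bar{\mathcal{S}}_c\le T$ and a crude AM--GM bound, so that the displayed constants hold.

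The main obstacle is this last accounting. The $\tau$-balancing naturally produces $\sqrt{\bar{\mathcal{S}}_cT/\eta}$. To reach the $\mathcal{O}(\cdot)$ form stated in the lemma, I would observe that for $\eta\le 1$ we have $\sqrt{1/\eta}\le 1/\eta$, so $\sqrt{\bar{\mathcal{S}}_cT/\eta}\le\sqrt{\bar{\mathcal{S}}_cT}/\eta$. The explicit displayed inequality is then exactly what the computation gives, and the big-$\mathcal{O}$ form is a weaker consequence of it.
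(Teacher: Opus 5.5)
Your proposal is correct and reproduces the paper's own argument (the proof of Theorem~\ref{th:regretbound}) essentially step for step. It uses the same block-wise constant comparator with $M\le T/\tau+1$ blocks, and Lemma~\ref{lemma:lrupper} with $P_T\le \Gamma T/\tau$ for term (a). For term (b) it uses the Besbes-type bound $2B\tau\bar{\mathcal{S}}_c$, which rests on the same identification of $\bar{\mathcal{S}}_c$ with the loss variation that the paper makes. It also takes the same choice $\tau=\lceil\sqrt{\Gamma^2T/(\eta B\bar{\mathcal{S}}_c)}\rceil$.

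You are more explicit than the paper on two points it leaves implicit. First, the ceiling remainder $2B\bar{\mathcal{S}}_c$ can indeed be absorbed using $\bar{\mathcal{S}}_c\le T$. A case split is simpler than AM--GM here: if $B\eta\le 1$, then $2B\bar{\mathcal{S}}_c\le 2B\sqrt{\bar{\mathcal{S}}_cT}\le 2\sqrt{B\bar{\mathcal{S}}_cT/\eta}$; if $B\eta>1$, then $2B\bar{\mathcal{S}}_c\le 2BT\le 2B^2\eta T$. Second, passing to the $\mathcal{O}(\sqrt{\bar{\mathcal{S}}_cT}/\eta)$ form needs $\eta\le 1$, which the paper hides behind an equality sign.
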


This result follows by setting $\eta = \Theta\left(T^{-1/3}\bar{\mathcal{S}}_c^{1/3}\right)$, which yields $\mathcal{O}\left(\bar{\mathcal{S}}_c^{1/3} T^{2/3}\right)$ regret, matching the lower bound. Similar reasoning shows that the algorithm also achieves $\mathcal{O}\left(\max\{\bar{\mathcal{S}}_c^{1/3} T^{2/3}, \sqrt{T}\}\right)$ dynamic regret.

\newpage
\section{Convergence Analyses}
\label{appendix: convergnece}
We next show that our Fed-ADE provably converges when each per-timestep learning rate $\eta_c^t\in [\eta_{min}, \eta_{max}]$ obeys the same upper bound used in Theorem~\ref{th:convergence}. For the proof, we use three commonly used assumptions~\cite{FedAlt/Sim, bottou2018optimization} as follows.
\begin{assumption}
\label{lipschitz}
    (Lipschitz parameter) For every client $c$, the loss function $\mathcal L$ is continuously differentiable and there exist \emph{Lipschitz} constants $K_{\psi}, K_{\phi}, K_{\psi\phi}, K_{\phi\psi}$ holds that 
        \begin{itemize}
    \item $\nabla_{\psi} \mathcal {L}(\bar{\psi}^{(r)},\phi_{c}^{(r)})$ is $K_{\psi}$ \textit{Lipschitz} with respect to $\psi$ and $K_{\psi\phi}$ \textit{Lipschitz} with respect to $\phi_{c}^{(r)}$.
    \item $\nabla_{\phi} \mathcal {L}(\bar{\psi}^{(r)},\phi_{c}^{(r)})$ is $K_{\phi}$ \textit{Lipschitz} with respect to $\phi$ and $K_{\phi\psi}$ \textit{Lipschitz} with respect to $\bar {\psi}^{(r)}$.
\end{itemize}
        When the average loss function of total clients is defined as
        $
            \mathcal L({\bar{\psi}^{(r)}},{\phi}^{(r)}) = \frac{1} {|\mathcal{C}|} \sum_{c \in \mathcal{C}} \mathbb{E}\big[\mathcal{L}(\bar{\psi}^{(r)},\phi_{c}^{(r)})\big]
        $
        , it has \emph{Lipschitz} constant $K_{\psi}$ with respect to $\bar{\psi}^{(r)}$, $K_{\phi\psi}/\sqrt{|\mathcal{C}|}$ with respect to ${\phi}^{(r)}$, and $K_{\psi\phi}/|\mathcal{C}|$ with respect to any ${\phi}_{c}^{(r)}$.
        Further, there exist $\chi$ to measure the relative cross-sensibility of $\nabla_{\phi} \mathcal {L}(\bar \psi^{(r)},\phi_{c}^{(r)})$ and $\nabla_{\psi} \mathcal L(\bar \psi^{(r)},\phi_c^{(r)})$ as follows.

        \begin{equation}
            \chi = \frac {\max{\{K_{\psi\phi},K_{\phi\psi}\}}} {\sqrt{K_{\psi}K_{\phi}}}
        \end{equation}
        \label{asm:lipschitz}
\end{assumption}
    
\begin{assumption}
\label{bounded variance}
    (Bounded variance) The stochastic gradients in the client-side update algorithm are unbiased and have bounded variance as follows.
            \begin{equation}
                \mathbb{E}\big[\Tilde{\nabla}_{\phi} \mathcal L(\bar{\psi}^{(r)},{\phi}_c^{(r)})\big] = {\nabla}_{\phi} \mathcal L(\bar{\psi}^{(r)},{\phi}_c^{(r)})
            \end{equation}
            \begin{equation}
                \mathbb{E}\big[\Tilde{\nabla}_{\psi} \mathcal L(\bar{\psi}^{(r)},{\phi}_c^{(r)})\big] = {\nabla}_{\psi} \mathcal L(\bar{\psi}^{(r)},{\phi}_c^{(r)})
            \end{equation}   
        Furthermore, there exist constants $\sigma_{\phi}$ and $\sigma_{\psi}$ that meet the following inequation.
            \begin{equation}
                \mathbb{E}\big[\|\Tilde{\nabla}_{\phi} \mathcal L(\bar{\psi}^{(r)},{\phi}_c^{(r)})-{\nabla}_{\phi} \mathcal L(\bar{\psi}^{(r)},{\phi}_c^{(r)})\|^2\big] \leq \sigma_{\phi}^2 
            \end{equation} 
            \begin{equation}
                \mathbb{E}\big[\|\Tilde{\nabla}_{\psi} \mathcal L(\bar{\psi}^{(r)},{\phi}_c^{(r)})-{\nabla}_{\psi} \mathcal L(\bar{\psi}^{(r)},{\phi}_c^{(r)})\|^2\big] \leq \sigma_{\psi}^2 
            \end{equation}
        We can consider ${\nabla}_{\psi} \mathcal {L}_c(\bar{\psi}^{(r)},{\phi}_c^{(r)})$ as a stochastic partial gradient of average loss function $\mathcal{L}({\bar{\psi}^{(r)}},{\phi}^{(r)})$ with respect to $\bar{\psi}^{(r)}$.
        \label{asm:bounded}
\end{assumption}
    
\begin{assumption}
\label{partial gradient}
    (Partial gradient diversity) There exist constants $\delta \geq 0 $ and $\rho \geq 0$ such that for $\bar{\psi}^{(r)}$ and all personalized layers ${\phi}^{(r)} = [{\phi}_1^{(r)},{\phi}_2^{(r)},...]$, which meet the following equations.
            \begin{equation}
                \frac{1}{|\mathcal{C}|} \sum_{c \in \mathcal{C}} \| \nabla_{\psi} \mathcal {L}_(\bar{\psi}^{(r)},{\phi}_c^{(r)}) - \nabla_{\psi} \mathcal L(\bar{\psi}^{(r)},{\phi}^{(r)}) \|^2 \leq \delta^2 + \rho^2 \|\nabla_{\psi} \mathcal{L}(\bar{\psi}^{(r)},{\phi}^{(r)}) \|^2 
            \end{equation}
        Additionally, $\mathcal L(\bar{\psi},{\phi}^{(r)})$ is bounded below by $\hat{\mathcal L}$, where $\hat{\mathcal L}$ meets the equation $\Delta \mathcal L^{(0)} = \mathcal L(\bar{\psi}^{(0)},{\phi}^{(0)}) - \hat{\mathcal L}$ at initial communication round.
        \label{asm:partial gradient}
\end{assumption}

\begin{theorem} 
\label{th:convergence}
        The convergence of the Fed-ADE is bounded as follows, where $\Delta \mathcal L^{(0)}$ denotes the difference between a bound and initial value of loss $\mathcal{L}(\bar{\psi},\phi_c)$.\footnote{To simplify the inequality in the following assumptions and theorems, we use following shorthands:
$\phi^{(r)} = \{\phi_{c}^{(r)}|c \in \mathcal{C}^{(r)}\}$, $
        \mathcal{L}(\bar{\psi}^{(r)},\phi_{c}^{(r)})=\mathcal{L}\big(\{\bar{\psi}^{(r)}, \phi_{c}^{(r)}\};\mathbf{x}_c^t\big)$, $
         \mathcal{L}(\bar{\psi}^{(r)},\phi^{(r)})=\sum_{c \in \mathcal{C}} \frac{N_c}{N}\mathcal{L}(\bar{\psi}^{(r)}, \phi_{c}^{(r)}) $, $
        \Delta_{\psi}^{(r)}=\big\|{\nabla}_{\psi} \sum_{c \in \mathcal{C}} \frac{N_c}{N}\mathcal{L}(\bar{\psi}^{(r)}, \phi_{c}^{(r)})\big\|^2 $, $
        \Delta_{\phi}^{(r)}=\frac{1}{|\mathcal{C}|}\sum_{c \in \mathcal{C}}\big\|{\nabla}_{\phi} \mathcal{L}(\bar{\psi}^{(r)}, \phi_{c}^{(r)})\big\|^2 $.}

            \begin{align}
            \label{theorem1}
            \frac{1}{R}\sum_{r=0}^{R-1}\bigg(\frac{1}{K_{\psi}} \mathbb{E}\big[\Delta_{\psi}^{(r)}\big] +\frac{|\mathcal{C}^{(r)}|}{|\mathcal{C}|K_{\phi}}\mathbb{E}\big[\Delta_{\phi}^{(r)}\big]\bigg)
            \leq \frac{\Delta \mathcal L^{(0)}}{\eta R} + \eta \sigma^{2}_{1} + \eta^2 \sigma^{2}_{2}
            \end{align}
        
        Here, the constants $\sigma_{1}^{2}$ and $\sigma_{2}^{2}$ are defined as follows. 
        \begin{align}
            &{\sigma}_{1}^{2} = \frac{\delta^2}{K_{\psi}}\Big(1-\frac{|\mathcal{C}^{(r)}|}{|\mathcal{C}|}\Big) + \frac{\sigma^{2}_{\psi}}{K_{\psi}} + \frac{\sigma^{2}_{\phi}\big(|\mathcal{C}^{(r)}|+\chi^2(|\mathcal{C}|-|\mathcal{C}^{(r)}|)\big)}{K_{\phi}|\mathcal{C}|} \\
            &{\sigma_{2}^{2}} = \big(\frac{\sigma^{2}_{\psi}+\delta^2}{K_{\psi}}+ \frac{\sigma^{2}_{\phi}|\mathcal{C}^{(r)}|}{K_{\phi}|\mathcal{C}|}\big) \big(1-E^{-1}\big) + \frac{{\chi}^2\sigma^{2}_{\phi}}{K_{\phi}}
        \end{align}
        This convergence is bounded when the learning rates are chosen as $\eta_{\psi} = \eta/(K_{\psi}E)$ and $\eta_{\phi} = \eta/(K_{\phi}E)$, where $\eta$ satisfies the following inequality.
        \begin{align}
            \eta \leq \min \biggl\{\frac{1}{24(1+\rho^2)}, \frac{|\mathcal{C}^{(r)}|}{128\chi^2(|\mathcal{C}|-|\mathcal{C}^{(r)}|)}, \sqrt{\frac{|\mathcal{C}^{(r)}|}{\chi^2 |\mathcal{C}|}} \biggr\}
        \end{align}
\end{theorem}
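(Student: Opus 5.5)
This bound has the same form as the FedAlt/FedSim analysis of partial personalization with alternating updates, and I would follow that route. The adaptive schedule \eqref{eq:eta_map} only enters through its bounds: $\eta_c^t\in[\eta_{\min},\eta_{\max}]$. So I would run the argument with step sizes $\eta_\psi=\eta/(K_\psi E)$ and $\eta_\phi=\eta/(K_\phi E)$, where $\eta$ is any value in that interval satisfying the stated cap. The per-step descent inequalities below use only the upper bound $\eta_{\max}$. The telescoped term uses the lower bound $\eta_{\min}$, so the constant $\eta$ in \eqref{theorem1} is read as $\eta_{\min}$ on the left and $\eta_{\max}$ on the right.

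\textbf{Step 1 (personalized-layer phase).} Fix $\bar\psi^{(r)}$. I would apply the $K_\phi$-smoothness of Assumption~\ref{asm:lipschitz} to the $E$ local steps \eqref{eq:local_personal_update} on each participating client. Using unbiasedness and the variance bound $\sigma_\phi^2$ from Assumption~\ref{asm:bounded}, this gives
\[
\mathcal L(\bar\psi^{(r)},\phi^{(r+1)})\le\mathcal L(\bar\psi^{(r)},\phi^{(r)})-\tfrac{\eta}{8K_\phi}\tfrac{|\mathcal C^{(r)}|}{|\mathcal C|}\Delta_\phi^{(r)}+O\!\big(\tfrac{\eta^2\sigma_\phi^2}{K_\phi}\big).
\]
The client-drift error is controlled by the usual bound $\sum_k\|\phi_{c,k}-\phi_c\|^2\lesssim \eta^2(\cdot)$. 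Only the fraction $|\mathcal C^{(r)}|/|\mathcal C|$ of clients moves, which is where that factor comes from.

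\textbf{Step 2 (shared-layer phase).} Next I would treat the aggregated update \eqref{eq:aggregate} as a stochastic step on $\mathcal L(\cdot,\phi^{(r+1)})$ and apply $K_\psi$-smoothness. The update direction is decomposed into three parts: the full gradient, the sampling error due to $\mathcal C^{(r)}\subsetneq\mathcal C$, and the local drift. The sampling error has variance at most $(1-|\mathcal C^{(r)}|/|\mathcal C|)(\delta^2+\rho^2\Delta_\psi)/|\mathcal C^{(r)}|$ by Assumption~\ref{asm:partial gradient}. The $\rho^2$ part is absorbed into the descent term once $\eta\le 1/(24(1+\rho^2))$.

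The main obstacle is the cross term from evaluating $\nabla_\psi$ at $\phi^{(r+1)}$ rather than $\phi^{(r)}$. I would bound it using the cross-Lipschitz constants $K_{\psi\phi}$ and $K_{\phi\psi}$, via Young's inequality with weights $\sqrt{K_\psi K_\phi}$, so that it scales with $\chi^2\|\phi^{(r+1)}-\phi^{(r)}\|^2$. The displacement $\|\phi^{(r+1)}-\phi^{(r)}\|^2$ is itself bounded by $\eta^2(\Delta_\phi+\sigma_\phi^2)/K_\phi^2$. The $\chi^2(|\mathcal C|-|\mathcal C^{(r)}|)$ pieces come from non-sampled clients whose stale $\phi$ still enters the average. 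The remaining two conditions on $\eta$ (the $128\chi^2$ bound and the $\sqrt{|\mathcal C^{(r)}|/(\chi^2|\mathcal C|)}$ bound) are chosen exactly so these terms can be absorbed into half of the descent in $\Delta_\phi$ and $\Delta_\psi$.

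\textbf{Step 3 (combine and telescope).} Adding the two phases gives the per-round inequality
\[
\mathbb E\mathcal L^{(r+1)}\le\mathbb E\mathcal L^{(r)}-\tfrac{\eta}{8}\Big(\tfrac{\Delta_\psi^{(r)}}{K_\psi}+\tfrac{|\mathcal C^{(r)}|\Delta_\phi^{(r)}}{|\mathcal C|K_\phi}\Big)+\eta^2\sigma_1^2+\eta^3\sigma_2^2.
\]
The $(1-E^{-1})$ factors in $\sigma_2^2$ come from the multi-step drift sums. I would then sum over $r=0,\dots,R-1$, use the lower bound $\hat{\mathcal L}$ to replace the telescoped sum by $\Delta\mathcal L^{(0)}$, and divide by $\eta R$. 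After rescaling constants, this gives \eqref{theorem1}.
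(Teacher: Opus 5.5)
Your Steps 1 and 3 follow the paper's FedAlt-style structure. The gap is in Step 2, at exactly the point where the paper uses its key device.

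You treat the sampled aggregate as a stochastic step on $\mathcal L(\cdot,\phi^{(r+1)})$, with sampling error controlled by the partial-gradient-diversity assumption. But $\phi^{(r+1)}$ depends on $\mathcal C^{(r)}$, because only sampled clients move their personal layers. So $\frac{1}{|\mathcal C^{(r)}|}\sum_{c\in\mathcal C^{(r)}}\nabla_\psi\mathcal L(\bar\psi^{(r)},\phi_c^{(r+1)})$ is not unbiased for $\nabla_\psi\mathcal L(\bar\psi^{(r)},\phi^{(r+1)})$. A without-replacement variance bound also does not apply to a population that depends on the sample.

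Your fallback, comparing with $\phi^{(r)}$ via the cross-Lipschitz constants, does remove the dependence. However, it charges $\|\phi_c^{(r+1)}-\phi_c^{(r)}\|^2$, which is nonzero only for \emph{sampled} clients. The resulting $\chi^2$ terms therefore scale like $\chi^2|\mathcal C^{(r)}|/|\mathcal C|$ and survive under full participation. This route cannot give the stated coefficient $\sigma_\phi^2\chi^2(|\mathcal C|-|\mathcal C^{(r)}|)/(K_\phi|\mathcal C|)$ in $\sigma_1^2$, nor the cap $|\mathcal C^{(r)}|/(128\chi^2(|\mathcal C|-|\mathcal C^{(r)}|))$. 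Both of these disappear when $\mathcal C^{(r)}=\mathcal C$. Your remark about ``non-sampled clients whose stale $\phi$ still enters the average'' only makes sense relative to a sequence in which those clients \emph{have} moved, and you never define one.

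The missing idea is the virtual full-participation iterate $\tilde\phi^{(r+1)}$. Every $c\in\mathcal C$ runs the personal update from $\phi_c^{(r)}$ at $\bar\psi^{(r)}$, coupled so that $\tilde\phi_c^{(r+1)}=\phi_c^{(r+1)}$ for $c\in\mathcal C^{(r)}$. Then $\tilde\phi^{(r+1)}$ is independent of the sampling. The sampled $\psi$-update has mean direction $\nabla_\psi\mathcal L(\bar\psi^{(r)},\tilde\phi^{(r+1)})$ up to local drift, so the diversity assumption legitimately gives the $\delta^2(1-|\mathcal C^{(r)}|/|\mathcal C|)$ term.

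Swapping $\phi^{(r+1)}$ for $\tilde\phi^{(r+1)}$ in the smoothness inequality costs $\frac{\chi^2K_\phi}{2|\mathcal C|}\sum_c\|\tilde\phi_c^{(r+1)}-\phi_c^{(r+1)}\|^2$. This is supported only on non-sampled clients, so in expectation it carries the factor $1-|\mathcal C^{(r)}|/|\mathcal C|$. Absorbing its $\Delta_\phi$ part into the personal descent is what the $128\chi^2$ cap is for.

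Your comparison with $\phi^{(r)}$ belongs only at the end. There it converts $\tilde\Delta_\psi^{(r)}$ into $\Delta_\psi^{(r)}$ via $\|\tilde\phi_c^{(r+1)}-\phi_c^{(r)}\|^2\lesssim\eta_\phi^2E^2(\|\nabla_\phi\mathcal L(\bar\psi^{(r)},\phi_c^{(r)})\|^2+\sigma_\phi^2)$. That step produces the $\chi^2\sigma_\phi^2/K_\phi$ term in $\sigma_2^2$ and the square-root cap.

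Separately, your preface on adaptive rates is not needed, since the statement fixes a single $\eta$, and as written it is off. With descent coefficient at least $\eta_{\min}$ and per-round errors up to $\eta_{\max}^2\sigma_1^2$, dividing by $\eta_{\min}$ gives $\eta_{\max}^2\sigma_1^2/\eta_{\min}$, not $\eta_{\max}\sigma_1^2$. Client-dependent rates would also bias the aggregated $\psi$-step.
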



\begin{proof}
The proof of Theorem~\ref{th:convergence} begins with the following equation describing the update from round $r$ to round $r+1$.
    \begin{align}
    \mathcal L\Big( {\psi}^{(r+1)},{\phi}^{(r+1)}\Big) - \mathcal L\Big(\bar{\psi}^{(r)},{\phi}^{(r)}\Big) 
    &= \mathcal L\Big(\bar{\psi}^{(r)},{\phi}^{(r+1)}\Big) - \mathcal L\Big(\bar{\psi}^{(r)},{\phi}^{(r)}\Big) \label{eq:personal} \\
    & + \mathcal L\Big( {\psi}^{(r+1)},{\phi}^{(r+1)}\Big) - \mathcal L\Big(\bar{\psi}^{(r)},{\phi}^{(r+1)}\Big) \label{eq:shared}
    \end{align}
    In the case of the update for the personalized layers in~\eqref{eq:personal}, since each client is training with their own data, it can be demonstrated similarly to the gradient update in conventional deep learning approaches. However, in the case of the update for the shared layers in~\eqref{eq:shared}, as gradient descent is performed for different client models, we aim to establish its convergence.

    The smoothness bound for the updating of the shared layers gives an equation as follows.
    \begin{align}
        \label{eq:shared_part_descent_lemma}
        \mathcal L \Big( {\psi}^{(r+1)},{\phi}^{(r+1)}\Big) - \mathcal L\Big( \bar{\psi}^{(r)},{\phi}^{(r+1)}\Big) &\leq \bigg \langle \nabla_{\psi} \mathcal L\Big( \bar{\psi}^{(r)},{\phi}^{(r+1)}\Big),{\psi}^{(r+1)}-\bar{\psi}^{(r)} \bigg\rangle \notag \\  & + \frac{K_{\psi}}{2}\|{\psi}^{(r+1)}-\bar{\psi}^{(r)}\|^2 
    \end{align}

    According to Lipschitz continuity, we can express the gradient as shown in the~\eqref{eq:shared_part_descent_lemma}, when $\langle A,B \rangle$ means $A^{T}B$. The proof proceeds by demonstrating that for all communication rounds $0<r<R$, the total sum of this gradient does not diverge but rather remains bounded, as is well known.
    However, a key distinction between the conventional deep learning approach and the personalized federated learning method is that only a subset of clients among the total clients participate in training during a single round. In these conditions, the virtual full participants $\Tilde{{\phi}}^{(r+1)}$ are used to move all the dependencies of the model update on the participants $\mathcal{C}^{(r)} \in \mathcal{C}$~\cite{FedAlt/Sim}. When applying the virtual full participants to the~\eqref{eq:shared_part_descent_lemma}, we can get an equation as follows. 
    \begin{align} \label{eq:shared_part_virtual}
        \bigg \langle \nabla_{\psi} \mathcal L\Big( \bar{\psi}^{(r)},{\phi}^{(r+1)}\Big),{\psi}^{(r+1)}-\bar{\psi}^{(r)} \bigg\rangle &=\bigg \langle \nabla_{\psi} \mathcal L\Big(\bar{\psi}^{(r)},{\Tilde{\phi}}^{(r+1)}\Big), {\psi}^{(r+1)}-{\psi}^{(r)} \bigg\rangle \notag \\ &+ \bigg \langle \nabla_{\psi} \mathcal L\Big(\bar{\psi}^{(r)},{\Tilde{\phi}}^{(r+1)}\Big) -\nabla_{\psi}  \mathcal L\Big({\psi}^{(r)},{\phi}^{(r+1)}\Big),{\psi}^{(r+1)}-\bar{\psi}^{(r)} \bigg\rangle
    \end{align}

    Applying Young's inequality and Lipschitz inequality to the~\eqref{eq:shared_part_virtual}, the~\eqref{eq:shared_part_descent_lemma} can be expressed as follows.
    \begin{align}
        \mathcal L\Big( {\psi}^{(r+1)},{\phi}^{(r+1)}\Big) - \mathcal L\Big( \bar{\psi}^{(r)},{\phi}^{(r+1)}\Big) &\leq \bigg \langle \nabla_{\psi} \mathcal L\Big( \bar{\psi}^{(r)},{\phi}^{(r+1)}\Big),{\psi}^{(r+1)}-\bar{\psi}^{(r)} \bigg\rangle \notag\\
        &+ \frac{1}{2K_{\psi}}\|\nabla_{\psi} \mathcal L\Big(\bar{\psi}^{(r)},{\Tilde{\phi}}^{(r+1)}\Big) -\nabla_{\psi}  \mathcal L\Big( \bar{\psi}^{(r)},{\phi}^{(r+1)}\Big)\|^2 \notag\\
        &+ K_{\psi}\|{\psi}^{(r+1)}-\bar{\psi}^{(r)}\|^2 
    \end{align}
    
    This allows us to take an expectation. Therefore, the final bounded form can be described as follows with Assumption~\ref{asm:partial gradient}, where the $\Tilde{\Delta}_{\psi}^{(r)}$ is the analog of ${\Delta}_{\psi}^{(r)}$ with the virtual variable $\Tilde{{\phi}}^{(r+1)}$.
    \begin{align}
         &\frac{1}{R} \sum_{r=0}^{R-1}\Bigg(\frac{\eta_{\psi} E}{8}\mathbb{E}\big[\Tilde{\Delta}_{\psi}^{(r)}\big] + \frac{\eta_{\phi} E c}{16|\mathcal{C}|}\mathbb{E}\big[\Delta_{\phi}^{(r)}\big]\Bigg)\leq \frac{\Delta \mathcal L^{(0)}}{R} + \mathcal O\big(\eta_{\psi}^2+\eta_{\phi}^2\big).
         \label{eqn:virtual}
    \end{align} 

    \paragraph{Bound of shared layers update}
    When using \eqref{eqn:virtual}, the update of the shared layers can be bounded as follows.

    \begin{align}
    &\mathcal L\Big({\psi}^{(r+1)},{\phi}^{(r+1)} \Big) -\mathcal L\Big( {\psi}^{(r)},{\phi}^{(r+1)}\Big) \notag \\
    \leq & \bigg \langle \nabla_p \mathcal L\Big( {\psi}^{(r)},\Tilde{{\phi}}^{(r+1)}\Big ), {\psi}^{(r+1)} - {\psi}^{(r)} \bigg \rangle K_{\psi}\|{\psi}^{(r+1)}-{\psi}^{(r)}\|^2 + \frac{\chi^2 K_{\phi}}{2|\mathcal{C}|} \sum_{c \in \mathcal{C}} \|\Tilde{{\phi}}^{(r+1)}_c-{\phi}^{(r+1)}_c\|^2
\end{align}
Thus, the dependence of the second line term on ${\phi}^{(r+1)}$ is successfully eliminated. To bind the third line term, expectation can be used as follows.
    \begin{align}
    &\mathbb{E}\bigg [\mathcal L\Big({\psi}^{(r+1)},{\phi}^{(r+1)} \Big) - \mathcal L\Big({\psi}^{(r)},{\phi}^{(r+1)}, \Big)\bigg ] \notag \\
    &\leq -\frac{\eta_{\psi} E}{4} \mathbb{E}\big[\Tilde{\Delta}_\psi^{(r)}\big] 
    + \frac{2 \eta_{\psi} K_{\psi}^2}{|\mathcal{C}|} \sum_{c \in \mathcal{C}} \sum_{e=0}^E \mathbb{E}\big\|\Tilde{\psi}^{(r)}_{c,e} - {\psi}^{(r)}\big\|^2 + 4 \eta_{\phi}^2 E^2 K_{\phi} \sigma_{\phi}^2 \chi^2 \bigg(1-\frac{|\mathcal{C}^{(r)}|}{|\mathcal{C}|}\bigg) \notag \\
    &\quad + \frac{K_{\psi} \eta_{\psi}^2 E^2}{|\mathcal{C}^{(r)}|} \Big(\sigma_{\psi}^2+3\delta^2\big(1-\frac{|\mathcal{C}^{(r)}|}{|\mathcal{C}|}\big)\Big)+ 8 \eta_{\psi}^2 E^2 K_{\phi} \chi^2 \bigg(1-\frac{|\mathcal{C}^{(r)}|}{|\mathcal{C}|}\bigg)\Delta_{\phi}^{(r)}
    \label{eqn:expec_bound}
\end{align}
    Note that $24K_{\psi} \eta_{\psi} E (1+\rho^2)\leq 1$ is used to simplify the coefficients of some of the terms above. The term in the third line is referred to as client shift in the literature. This term can be described using the virtual variable $\Tilde{{\phi}}^{(r+1)}$ as follows.
    \begin{align}
        &\frac{2 \eta_{\psi} K_{\psi}^2}{|\mathcal{C}|} \sum_{c \in \mathcal{C}} \sum_{e=0}^E \mathbb{E}\|\Tilde{\psi}^{(r)}_{c,e} - {\psi}^{(r)}\|^2 
        \notag \\ & \leq \frac{16\eta_{\psi}^3 K_{\psi}^2 E (E-1)}{|\mathcal{C}|} \Big(\delta^2 + \rho^2\mathbb{E}\|\nabla_{\psi} \mathcal L\big( {\psi}^{(r)},\Tilde{{\phi}}^{(r+1)}\big)\|^2\Big) + 8 \eta_{\psi}^3 K_{\psi}^2 E^2 (E-1) \sigma_{\psi}^2
        \label{eqn:client_drift}
    \end{align}
    By inputting \eqref{eqn:client_drift} into the previous inequality of expectation \eqref{eqn:expec_bound}, the inequality can be described as follows.
    \begin{align}
        &\mathbb{E}[\mathcal L({\psi}^{(r+1)},{\phi}^{(r+1)}) - \mathcal L( {\psi}^{(r)},{\phi}^{(r+1)})]\notag\\
        \leq &-\frac{\eta_{\psi} E}{8} \mathbb{E}[\Tilde{\Delta}_\psi^{(r)}] + \frac{K_{\psi} \eta_{\psi}^2 E^2}{|\mathcal{C}^{(r)}|}\Big(\sigma_{\psi}^2+2\delta^2(1-\frac{|\mathcal{C}^{(r)}|}{|\mathcal{C}|})\Big)+4 \eta_{\phi}^2 E^2 K_{\phi} \sigma_{\phi}^2 \chi^2 \Big(1-\frac{|\mathcal{C}^{(r)}|}{|\mathcal{C}|}\Big) \notag \\
        +& 8 \eta_{\phi}^2 E^2 K_{\phi} \chi^2 \Big(1-\frac{|\mathcal{C}^{(r)}|}{|\mathcal{C}|}\Big)\Delta_{\phi}^{(r)} 
        +8 \eta_{\psi}^2 K_{\psi}^3 E^2 \Big(1-\frac{|\mathcal{C}^{(r)}|}{|\mathcal{C}|}\Big)(\sigma_{\psi}^2 + 2\delta_{\psi})
    \end{align}

    \paragraph{Bound of personalized layers}
    The above analysis for updating the shared layers can be applied to updating the personalized layers. This analysis provides the following, which displays the bound of the update of the personalized layers. It can be simplified some coefficients using $128 \eta_{\phi} E K_{\phi} \chi^2 (\frac{|\mathcal{C}|}{|\mathcal{C}^{(r)}|}-1) \leq 1.$

    \begin{align}
        &\mathbb{E}\Big[\mathcal L\big( {\phi}^{(r+1)},{\psi}^{(r+1)}\big)-\mathcal L\big({\psi}^{(r)},{\phi}^{(r)}\big)\Big] \notag\\
        \leq &-\frac{\eta_{\psi} E}{8} \mathbb{E}\big[{\Tilde{\Delta}_\psi^{(r)}}\big] - \frac{\eta_{\phi} E |\mathcal{C}^{(r)}|}{16|\mathcal{C}|} \mathbb{E}\big[\Delta_{\phi}^{(r)}\big]  
        + 4\eta_{\phi}^2 K_{\phi} E^2 \sigma_{\phi}^2 \bigg(\frac{|\mathcal{C}^{(r)}|}{|\mathcal{C}|} + \chi^2\Big(1-\frac{|\mathcal{C}^{(r)}|}{|\mathcal{C}|}\Big)\bigg)  \notag\\
        + & \frac{\eta^2_{\psi} K_{\psi} E^2}{|\mathcal{C}^{(r)}|} \bigg(\sigma^2_{\psi} + 2\delta^2\Big(1-\frac{|\mathcal{C}^{(r)}|}{|\mathcal{C}|}\Big)\bigg)
        + 8 \eta^2_{\psi} K_{\psi}^2 E^2(E-1)(\sigma^2_{\psi} + 2\delta^2)
        + \frac{|\mathcal{C}^{(r)}|}{|\mathcal{C}|} 4\eta_{\phi}^3 K_{\phi}^2 E^2 (E-1)\sigma^2_{\phi} 
        \label{personal_bound}
    \end{align}
    The summation of inequality \eqref{personal_bound} for considering the communication round is given below.

    \begin{align}
        &\frac{1}{R} \sum^{R-1}_{r=0}\bigg( \frac{\eta_{\psi} E}{8} \mathbb{E} \big[\Tilde{\Delta}_\psi^{(r)}\big] + \frac{\eta_{\phi} E |\mathcal{C}^{(r)}|}{|\mathcal{C}|} \mathbb{E}\big[\Delta_{\phi}^{(r)}\big] \bigg) \notag\\
        \leq &\frac{\Delta \mathcal L^{(0)}}{R} + 4\eta_{\phi}^2 K_{\phi} E^2 \sigma_{\phi}^2 \bigg(\frac{|\mathcal{C}^{(r)}|}{|\mathcal{C}|} + \chi^2\Big(1-\frac{|\mathcal{C}^{(r)}|}{|\mathcal{C}|}\Big)\bigg) 
        + \frac{\eta_{\psi}^2 K_{\psi} E^2}{|\mathcal{C}^{(r)}|} \bigg(\sigma^2_{\psi} + 2\delta^2\Big(1-\frac{|\mathcal{C}^{(r)}|}{|\mathcal{C}|}\Big)\bigg)  \notag \\
        + & 8\eta^3_{\psi} K_{\psi}^2 E^2 (E-1) (\sigma^2_{\psi} + 2\delta^2) + \frac{|\mathcal{C}^{(r)}|}{|\mathcal{C}|}4\eta_{\phi}^3 K_{\phi}^2 E^2(E-1)\sigma_{\phi}^2
    \end{align}
    This is the bound about virtual variable $\Tilde{\phi}^{(r+1)}$. To describe this bound with the participants ${\phi}^{(r+1)}$, the assumption of the Lipschitz parameter is used as follows.

    \begin{align}
    &\mathbb{E} \big\| \nabla_{\psi} \mathcal L({\psi}^{(r)},{\phi}^{(r)}) - \nabla_{\psi} \mathcal L( {\psi}^{(r)},\Tilde{{\phi}}^{(r+1)}) \big\|^2 \notag \\ &\leq \frac{1}{|\mathcal{C}|} \sum_{c \in \mathcal{C}} \mathbb{E} \big\| \nabla_{\psi} \mathcal L( {\psi}^{(r)},{\phi}_c^{(r)}) - \nabla_{\psi} \mathcal L({\psi}^{(r)},\tilde{\phi}_c^{(r+1)}) \big\|^2 \notag \\
     &\leq \frac{\chi^2 K_{\phi} K_{\psi}}{|\mathcal{C}|} \sum_{c \in \mathcal{C}} \mathbb{E} \big\| \tilde{\phi}_c^{(r+1)} - {\phi}_c^{(r)} \big\|^2 \notag \\
   &\leq \frac{\chi^2 K_{\phi} K_{\psi}}{|\mathcal{C}|} \sum_{c \in \mathcal{C}}
    \Big(16\eta_{\phi}^2 E^2 \big\| \nabla_p \mathcal L({\psi}^{(r)},{\phi}_c^{(r)}) \big\|^2 + 8\eta_{\phi}^2 E^2 \sigma^2_{\phi} \Big) = 8 \eta_{\phi}^2 E^2 \chi^2 K_{\phi} K_{\psi} \Big(\sigma_{\phi}^2 + 2\mathbb{E}\big[\Delta_{\phi}^{(r)}\big]\Big)
    \end{align}

    \noindent Using the Cauchy–Schwarz inequality described as follows.
    \begin{align}
        \label{eqn:cauchy}
        &\|\nabla_{\psi} \mathcal L({\psi}^{(r)},{\phi}^{(r)})\|^2\leq 2\|\nabla_{\psi} \mathcal L({\psi}^{(r)},{\phi}^{(r)})- \nabla_{\psi} \mathcal L({\psi}^{(r)},\Tilde{{\phi}}^{(r+1)})\|^2+ 2\|\nabla_{\psi} \mathcal L({\psi}^{(r)},\Tilde{{\phi}}^{(r+1)})\|^2
    \end{align}
    By utilizing inequality \eqref{eqn:cauchy}, we obtain the following result.
    \begin{align}
        \mathbb{E}\big[\Delta_{\phi}^{r}\big]
        \leq \mathbb{E}[\Tilde{\Delta}_\psi^{(r)}] + 16  \eta^2_{\phi} E^2 \chi^2 K_{\phi} K_{\psi} (\sigma_{\phi}^2 + 2\mathbb{E}[{\Delta}_{\phi}^{(r)}])
        \label{eqn:expec_shared}
    \end{align}
    Through inequality \eqref{eqn:expec_shared}, we can get the following inequality.
    \begin{align}
        &\frac{\eta_{\psi} E} {16} \mathbb{E}[\Delta_\psi^{(r)}] + \frac{\eta_{\phi} E |\mathcal{C}^{(r)}|}{32|\mathcal{C}|} \mathbb{E}[\Delta_{\phi}^{(r)}] 
        \leq \frac{\eta_{\psi} E}{8}\mathbb{E}[\Tilde{\Delta}_\psi^{(r)}]+ \frac{\eta_{\phi} E |\mathcal{C}^{(r)}|}{16 |\mathcal{C}|}\mathbb{E}[\Delta_{\phi}^{(r)}] 
        + \eta_{\psi} \eta_{\phi}^2 E^3 \sigma^2_{\phi} \chi^2 K_{\phi} K_{\psi}
        \label{eqn:final}
    \end{align}
    By summing inequality \eqref{eqn:final} over the total number of communication rounds and applying the parameter settings $\eta_{\phi} = \eta/(K_{\phi} E)$ and $\eta_{\psi} = \eta/(K_{\psi} E)$, the proof is completed.
\end{proof}

To ensure that the convergence analysis above remains valid when using an adaptive learning rate in \eqref{eq:eta_map}, we require that the value of the adaptive learning rate $\eta_c$ across all clients and rounds is bounded by the same upper bound used in Theorem~\ref {th:convergence}. Since our proposed label shift adaptive learning rate can not have value outside of the bound $[\eta_{min}, \eta_{max}]$ (where $\eta_c^t \in [\eta_{min}, \eta_{max}]$), by appropriately choosing $[\eta_{min}, \eta_{max}] \leq \min \biggl\{\frac{1}{24(1+\rho^2)}, \frac{|\mathcal{C}^{(r)}|}{128\chi^2(|\mathcal{C}|-|\mathcal{C}^{(r)}|)}, \sqrt{\frac{|\mathcal{C}^{(r)}|}{\chi^2 |\mathcal{C}|}} \biggr\}$, we guarantee that all learning rates used in the adaptive scheme satisfy the conditions required for Theorem~\ref{th:convergence}.

\newpage
\section{Simulation Settings}
\label{appeindix:simulation setup}
\subsection{Datasets} 

We evaluate on various benchmarks including five image-based benchmarks- Tiny ImageNet~\cite{Tinyimagenet}, CIFAR-10~\cite{CIFAR10}, CIFAR-10-C~\cite{CIFARC}, CIFAR-100~\cite{CIFAR10}, and CIFAR-100-C~\cite{CIFARC}, and a text-based benchmark LAMA~\cite{LAMA}. The following is a detailed description of the datasets used in this work.
\begin{itemize}
    \item \textbf{Tiny ImageNet~\cite{Tinyimagenet}:} Tiny ImageNet is a subset of the ImageNet dataset, containing 200 object classes, each with 500 training images, 50 validation images, and 50 test images. All images are RGB and resized to $64 \times 64$ pixels.
    \item \textbf{CIFAR-10~\cite{CIFAR10}:} The CIFAR-10 dataset consists of 60,000 RGB images of size $32 \times 32$ across 10 classes, with 50,000 images for training and 10,000 for testing. We applied standard preprocessing including normalization with dataset-wide mean and standard deviation. 
    \item \textbf{CIFAR-10-C~\cite{CIFARC}:} CIFAR-10-C is an extension of the CIFAR-10 dataset designed to evaluate model robustness against common corruptions. It contains 950,000 images created by applying 19 different types of corruption—such as noise, blur, weather effects, and digital distortions—at 5 severity levels to the original CIFAR-10 test set of 10,000 images.
    \item \textbf{CIFAR-100~\cite{CIFAR10}:} CIFAR-100 is a fine-grained variant of CIFAR-10 containing 100 object classes. It consists of 60,000 color images at 32×32 pixels, split into 50,000 training and 10,000 test images. The 100 classes are grouped into 20 superclasses for hierarchical classification research.
    \item \textbf{CIFAR-100-C~\cite{CIFARC}:} CIFAR-100C applies the same 19 corruption types with 5 severity levels from CIFAR-10-C onto the CIFAR-100 dataset. It consists of 950,000 corrupted images used as a benchmark for analyzing model robustness on fine-grained image classification tasks. 
    \item \textbf{LAMA~\cite{LAMA}:} LAnguage Model Analysis is a benchmark dataset designed to evaluate the factual and commonsense knowledge acquired by language models. In our experiments, we focus on the T-REx subset of LAMA, which consists of (subject, relation, object) triples derived from Wikidata. This subset consists of 41 relational tasks, with 100 samples randomly sampled per task. To simulate heterogeneous client environments, we assign disjoint subsets of tasks to different clients.

\end{itemize}

\subsection{Online Distribution Shift Modeling}
To emulate online distribution shift in \eqref{eqn:label_shift}, each client’s data distribution moves from a common initial uniform vector $\mathbf{Q}_c^0$  toward a client-specific target  $\mathbf{Q}_c^T$  according to a time-dependent weight $\omega(t)$.  We evaluate under four prototypical distribution shift schedules—linear (Lin.), sine (Sin.), square (Squ.), and Bernoulli (Ber.), each driving a client’s distribution from a uniform start toward a client‐specific target~\cite{ols_data1,ols_data2,ols_data3,ols_data4,ols_data5}. 
Following is a detailed description of four prototypical label-shift schedules according to different $\omega(t)$ functions, and Figure~\ref{fig:shift} shows how the parameter $\omega(t)$ changes over time for various shift types.
\begin{itemize}
    \item {Linear shift (Lin.)}: $\omega(t) = \frac{t}{T}$, where $t$ is the current timestep and $T$ is the total number of timesteps. This provides a smooth and gradual transition from $\mathbf{Q}_c^0$ to $\mathbf{Q}_c^T$ over time.
    \item {Sine shift (Sin.)}: $\omega(t) = \sin\left(\frac{\pi t}{\sqrt{T}}\right)$. This introduces cyclical variations, capturing periodic shifts in the data distribution.
    \item {Square Shift (Squ.)}: $\omega(t)$ alternates between $0$ and $1$ every $\frac{\sqrt{T}}{2}$ timestep, resulting in abrupt, periodic changes in the class distribution.
    \item {Bernoulli shift (Bern.)}: $\omega(t)$ retains its previous value $\omega(t-1)$ with a probability of $\frac{1}{\sqrt{T}}$, or flips to $1 -\omega(t-1)$. This configuration captures stochastic variations, ensuring that class priors change randomly while maintaining a scale proportional to $\sqrt{T}$.
\end{itemize}

For $\mathbf{Q}_c^T$ , we used the Dirichlet distribution to consider heterogeneous data distributions among clients. The distribution factor $\alpha$ of the Dirichlet distribution controls the level of heterogeneity among clients. Smaller $\alpha$ leads more heterogeneous distribution, and Larger $\alpha$ leads to Independent and identically distributed data system. In this study, we set $Dir(\alpha=0.1)$. Figure~\ref{fig:dirichlet} visualizes the heterogeneity among clients according to Dirichlet distribution factor $\alpha$.

\subsection{Distribution Shift Scenarios}
In all experiments, each target distribution of client $\mathbf{Q}_{\mathbf{y}_c^T}$ in~\eqref{eqn:label_shift} is defined according to the specific type of distribution shift being simulated. We summarize below the construction and datasets used for each shift scenario.
\begin{itemize}[leftmargin=*, topsep=0pt, itemsep=1pt]
    \item \textbf{Label Shift.} Label shift simulations are conducted on Tiny ImageNet~\cite{Tinyimagenet}, CIFAR-10~\cite{CIFAR10}, and LAMA~\cite{LAMA}, where each client’s class prior evolves over time according to one of four temporal schedules (Lin., Sin, Squ., Ber.) which follows~\eqref{eqn:label_shift}. 
    \item \textbf{Covariate Shift.} The target feature distribution $\mathbf{Q}_{c,\mathbf{x}}^T$ reflects corruption-induced changes to the input domain. Each client is assigned a distinct and fixed corruption type (e.g., Gaussian noise, blur), while the corruption severity level shifts dynamically over time according to one of four temporal schedules. We use the CIFAR-10~\cite{CIFAR10} $\rightarrow$ CIFAR-10-C~\cite{CIFARC} and CIFAR-100~\cite{CIFAR10} $\rightarrow$ CIFAR-100-C~\cite{CIFARC} benchmarks to simulate these covariate-shift environments.
\end{itemize}

\begin{figure}[t]
\centering
\includegraphics[width=\textwidth]{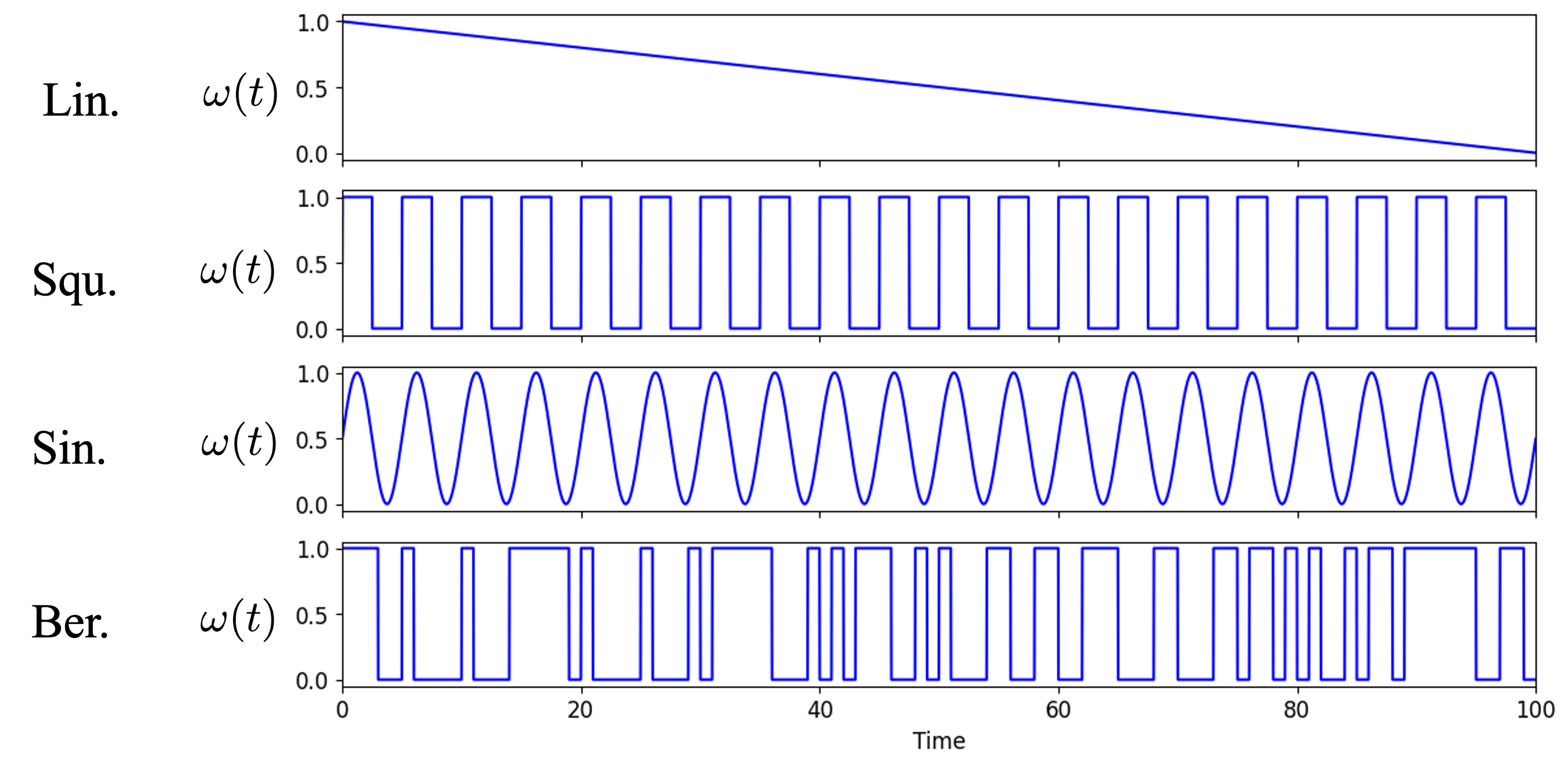} 
\hfill     
\caption{
Visualization of how the parameter $\omega(t)$ changes over time for various shift types. The Lin. presents a steady and continuous increase or decrease, reflecting a gradual and predictable change. Both the Squ. and Sin. display periodic patterns. The Squ. alternates sharply between two values at regular intervals, while the Sin. oscillates smoothly in a wave-like manner. The Ber. introduces randomness at each timestep, resulting in a stochastic and less predictable trajectory for $\omega(t)$.}
\label{fig:shift}
\end{figure}


\begin{figure}[t]

    \centering
    \begin{subfigure}[b]{0.44\textwidth}
        \includegraphics[width=\linewidth]{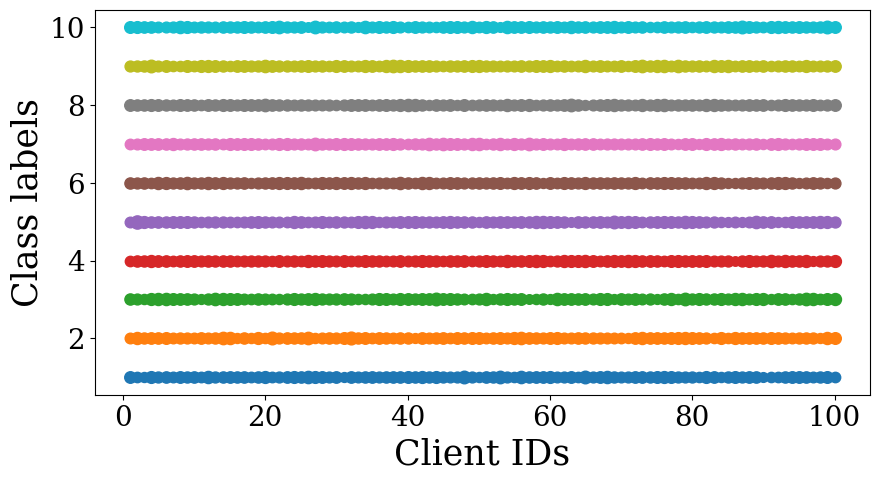}
        \caption{Dir($\alpha=100$)}
    \end{subfigure}
    \begin{subfigure}[b]{0.44\textwidth}
        \includegraphics[width=\linewidth]{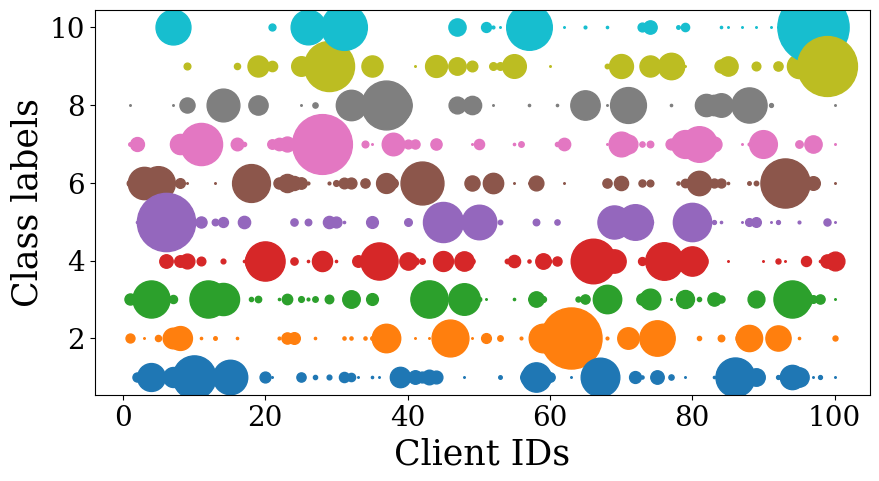}
        \caption{Dir($\alpha=0.1$)}
        
    \end{subfigure}
    
    \caption{Visualization of heterogeneity among clients according to Dirichlet distribution factor $\alpha$ in CIFAR-10 dataset. The $x$-axis represents the ID of the client, while the $y$-axis represents the class of the data. The color of the circles varies with each class type, and the size of the circles indicates the size of the data. This level of heterogeneity is not dependent on the data set but on $\alpha$ values.}
\label{fig:dirichlet}
\end{figure}

\subsection{Compared Methods}
We evaluate our proposed Fed-ADE against six existing unsupervised online adaptation baselines:

\begin{itemize}


\item \textbf{FTH}~\cite{FTH}: FTH maintains an average of estimated label distributions across all previous timesteps. By aggregating the entire history of predictions, FTH assumes that past distributional information is informative for current adaptation, making it robust to gradual, monotonic shifts but potentially less responsive to abrupt changes.


\item \textbf{ATLAS}~\cite{ATLAS}: ATLAS maintains an ensemble of base learners, each optimized with a distinct learning rate. Using an exponential weighting scheme, ATLAS dynamically aggregates these learners according to their recent performance, enabling robust adaptation to various patterns and magnitudes of label shift.

\item \textbf{UDA}~\cite{UDA}: UDA is a technique where a model is trained on labeled data from a source domain and adapted to work well on a target domain that has no labels, even though the data distributions are different. The main idea is to reduce the gap between the source and target domains by aligning their feature representations or adjusting the model so it can generalize to the target data, all without using any target domain labels.

\item \textbf{UNIDA}~\cite{UNIDA}: UNiDA is a method designed to handle domain adaptation scenarios. This assumes the same class set across domains, which aims to adapt models by distinguishing between common, source-private, and target-private classes, often using prototype-based or confidence-based techniques to avoid misclassifying unknown target classes as known ones.

\item \textbf{Fed-POE}~\cite{FedPOE}: Fed-POE is a federated online adaptation framework in which each client combines local fine-tuning with an ensemble of periodically aggregated global models. While this approach supports client personalization and leverages federated knowledge, it employs a fixed learning rate and incurs computational overhead from managing multiple model instances.

\item \textbf{FedCCFA}~\cite{concept_neurips}: FedCCFA is a federated learning framework for concept shift, where clients may experience different distributions. It performs class-level classifier clustering to share aggregated class-wise classifiers among similar clients, and aligns feature spaces using clustered feature anchors with an entropy-based adaptive weighting.

\end{itemize}

\subsection{Hyperparameters}
In this section, we summarize hyperparameters used in the simulations.

\begin{table}[h]
\centering
\caption{Hyperparameters for Tiny ImageNet, CIFAR-10, and CIFAR-100.}
\label{tab:hyperparam-a}
\scriptsize
\begin{tabular}{@{}lccc@{}}
\toprule
\textbf{Hyperparameter} & \textbf{Tiny ImageNet} & \textbf{CIFAR-10} & \textbf{CIFAR-100} \\
\midrule
Number of clients            & 100 & 100 & 100 \\
Communication rounds ($R$)  & 10  & 10  & 10 \\
Local epochs                 & 4   & 4   & 4 \\
Optimizer                    & SGD & SGD & SGD \\
Batch size                   & 128 & 32  & 32 \\
Participant rate             & 10\% & 10\% & 10\% \\
Model architecture           & ResNet18 & CNN w/ 3 residual blocks & CNN w/ 3 residual blocks \\
Shared layers ($\psi_c$)     & Representation layers & Representation layers & Representation layers \\
Lowest learning rate         & $5 \times 10^{-6}$ & $5 \times 10^{-6}$ & $5 \times 10^{-6}$ \\
Highest learning rate        & $10^{-4}$ & $10^{-4}$ & $10^{-4}$ \\
\bottomrule
\end{tabular}
\end{table}

\begin{table}[h]
\centering
\caption{Hyperparameters for CIFAR-10-C, CIFAR-100-C, and LAMA}
\label{tab:hyperparam-b}
\scriptsize
\begin{tabular}{@{}lccc@{}}
\toprule
\textbf{Hyperparameter} & \textbf{CIFAR-10-C} & \textbf{CIFAR-100-C} & \textbf{LAMA} \\
\midrule
Number of clients            & 100 & 100 & 5 \\
Communication rounds ($R$)  & 10  & 10  & 5 \\
Local epochs                 & 4   & 4   & 1 \\
Optimizer                    & SGD & SGD & AdamW \\
Batch size                   & 32  & 32  & 4 \\
Participant rate             & 10\% & 10\% & 100\% \\
Model architecture           & CNN w/ 3 residual blocks & CNN w/ 3 residual blocks & Llama-3.2-3B \\
Shared layers ($\psi_c$)     & Representation layers & Representation layers & LoRA on q/k/v (layers 0--13) \\
Lowest learning rate         & $5 \times 10^{-6}$ & $5 \times 10^{-6}$ & $1 \times 10^{-3}$ \\
Highest learning rate        & $1 \times 10^{-4}$ & $1 \times 10^{-4}$ & $1 \times 10^{-2}$ \\
\bottomrule
\end{tabular}
\end{table}

\subsection{Hardware Specifications}

\begin{table}[H]
\small
\centering
\caption{Hardware specifications used for all simulations.}
\label{tab:HW}
\begin{tabular}{@{}ll@{}}
\toprule
\textbf{Component} & \textbf{Specification} \\
\midrule
CPU    & AMD Ryzen 9 7950X, 16-Core, 32 Threads \\
GPU    & NVIDIA GeForce RTX 3090X2 \\
RAM    & 256 GB DDR5 \\
Storage & 1.8 TB NVMe SSD \\
\bottomrule
\end{tabular}
\end{table}

\newpage
\section{Extra Simulation Results}
\label{app:extra}

\subsection{Impact of Learning Rate Selection }

\begin{figure*}[t]
\centering

\begin{subfigure}{0.6\linewidth}
  \centering
  \includegraphics[width=\linewidth]{Fig/lr_legend.PNG} 
\end{subfigure}

\begin{subfigure}{0.24\linewidth}
  \centering
  \includegraphics[width=\linewidth]{Fig/tinymin.png}
  \caption{Tiny ImageNet $\eta_{\min}$}
\end{subfigure}
\begin{subfigure}{0.24\linewidth}
  \centering
  \includegraphics[width=\linewidth]{Fig/tinymax.png}
  \caption{Tiny ImageNet $\eta_{\max}$}
\end{subfigure}
\begin{subfigure}{0.24\linewidth}
  \centering
  \includegraphics[width=\linewidth]{Fig/cifarmin.png}
  \caption{CIFAR-10 $\eta_{\min}$}
\end{subfigure}
\begin{subfigure}{0.24\linewidth}
  \centering
  \includegraphics[width=\linewidth]{Fig/cifarmax.png}
  \caption{CIFAR-10 $\eta_{\max}$}
\end{subfigure}
\begin{subfigure}{0.24\linewidth}
  \centering
  \includegraphics[width=\linewidth]{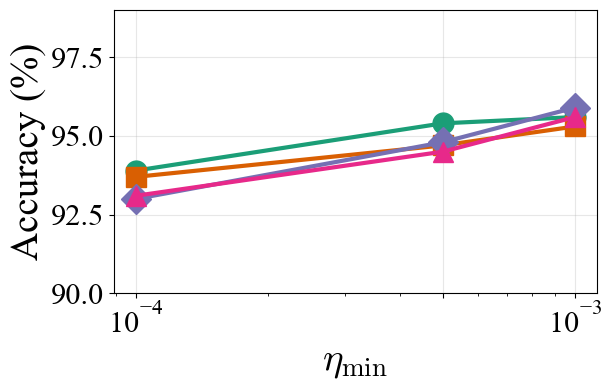}
  \caption{LAMA $\eta_{\min}$}
\end{subfigure}
\begin{subfigure}{0.24\linewidth}
  \centering
  \includegraphics[width=\linewidth]{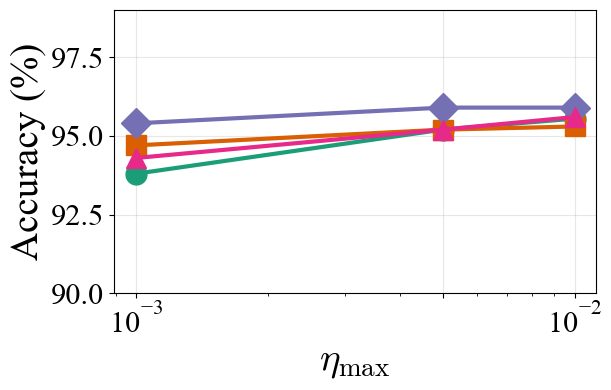}
  \caption{LAMA $\eta_{\max}$}
\end{subfigure}
\begin{subfigure}{0.24\linewidth}
  \centering
  \includegraphics[width=\linewidth]{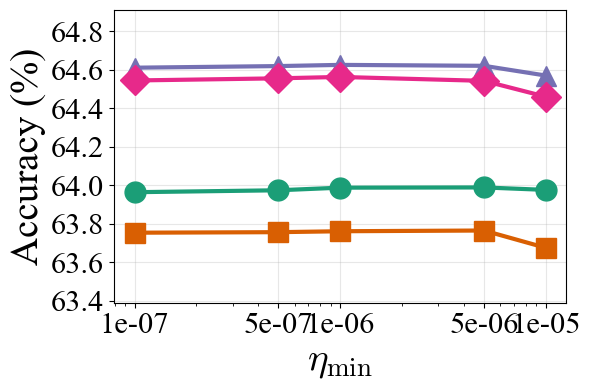}
  \caption{CIFAR-10-C $\eta_{\min}$}
\end{subfigure}
\begin{subfigure}{0.24\linewidth}
  \centering
  \includegraphics[width=\linewidth]{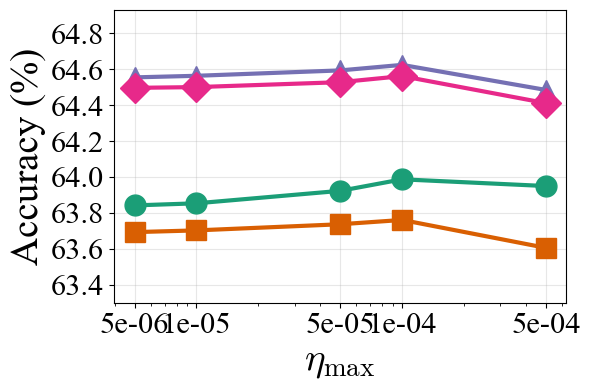}
  \caption{CIFAR-10-C $\eta_{\max}$}
\end{subfigure}
\begin{subfigure}{0.24\linewidth}
  \centering
  \includegraphics[width=\linewidth]{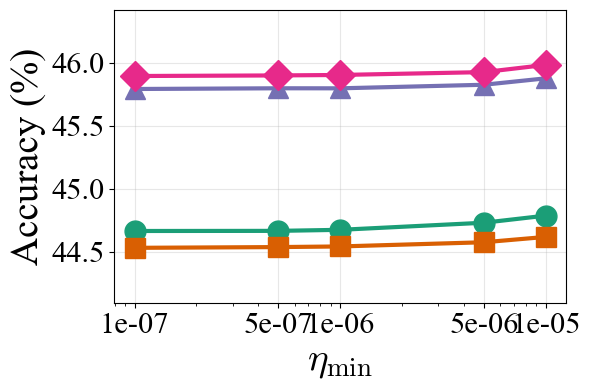}
  \caption{CIFAR-100-C $\eta_{\min}$}
\end{subfigure}
\begin{subfigure}{0.24\linewidth}
  \centering
  \includegraphics[width=\linewidth]{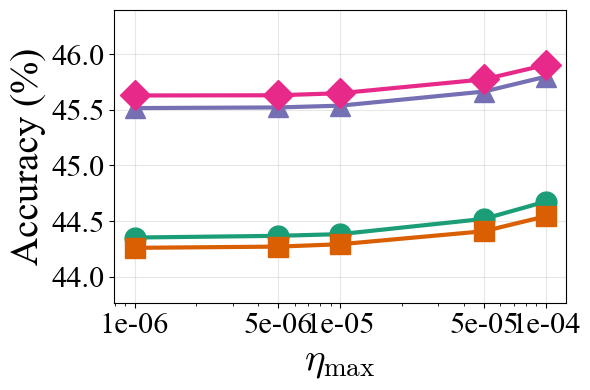}
  \caption{CIFAR-100-C $\eta_{\max}$}
\end{subfigure}


\caption{Impact of learning rate bounds on Fed-ADE under label shift scenarios.}
\label{fig:lr2}
\end{figure*}

Figure~\ref{fig:lr2} examines the sensitivity of Fed-ADE to the choice of learning rate bounds under all scenarios and benchmarks. 
For image benchmarks, we vary $\eta_{\min}$ while fixing $\eta_{\max}=10^{-4}$, and vary $\eta_{\max}$ while fixing $\eta_{\min}=5\times10^{-6}$. For text benchmark LAMA, we vary $\eta_{\min}$ while fixing $\eta_{\max}=10^{-2}$, and vary $\eta_{\max}$ while fixing $\eta_{\min}=10^{-4}$.
Across all cases, Fed-ADE consistently maintains high accuracy within a broad range of hyperparameter choices. 
While extreme values lead to slight performance degradation, the overall trend shows that our dynamics-driven adaptation is far less sensitive to the exact learning rate bounds compared to fixed-rate baselines. 
This robustness highlights that Fed-ADE reduces the need for extensive hyperparameter tuning, an essential property in federated and online learning where search budgets are limited.

\newpage
\subsection{Impact of Distribution of Pre-training Data}
\label{appendix:simulation_dist}

In practical federated deployments, the exact distribution of the pre-training data on the server is often unknown to clients. To examine the robustness of Fed-ADE under such realistic conditions, we evaluate post-adaptation performance on image benchmarks when the pre-training data follows two non-uniform distributions: \emph{Gaussian} and \emph{Exponential Decay}. As shown in Table~\ref{sim:performance_compare_g}, and Table~\ref{sim:performance_compare_e}, Fed-ADE maintains consistently strong performance across all distribution settings and shift types, with only a minor variation compared to the uniform case (cf. Table~\ref{sim:performance_compare}). These results demonstrate that our dynamics-based adaptation effectively generalizes even when the pre-training distribution differs from the assumed prior, validating its robustness under practical deployment scenarios.

\begin{table*}[t]
\centering
\caption{Performance comparison of different online distribution shift adaptation methods across datasets and shift types with pretrained model from Gaussian distribution (average accuracy (\%) and average wall time (sec.))}
\scriptsize
\setlength{\tabcolsep}{2pt} 
\resizebox{\textwidth}{!}{%
\begin{tabular}{@{}c c | c c c c| c c c c c c}
\toprule
\multicolumn{2}{c|}{\textbf{}} & \multicolumn{4}{c|}{\textbf{Localized Learning}} & \multicolumn{6}{c}{\textbf{Federated Learning}} \\
\cmidrule(r){3-6} \cmidrule(l){7-12}
\textbf{Dataset} & \textbf{Shift} & FTH & ATLAS & UNIDA & UDA & Fed-POE & FedCCFA &FixLR(Low) & FixLR(Mid) & FixLR(High) & \textbf{Fed-ADE} \\
\midrule
\rowcolor{gray!15}  
\multicolumn{12}{c}{\textbf{(i) Label Shift Scenarios}} \\\midrule 

\multirow{4}{*}{\shortstack{Tiny\\ImageNet}}
& Lin. &  74.1{\tiny$\pm$2.3} & 70.7{\tiny$\pm$2.1}& 75.1{\tiny$\pm$2.1} & 63.1{\tiny$\pm$2.3} & 81.5{\tiny$\pm$1.2} & 81.7{\tiny$\pm$1.8} &81.5{\tiny$\pm$1.1} & 81.2{\tiny$\pm$1.3} & 78.1{\tiny$\pm$1.5} &\textbf{84.5{\tiny$\pm$1.1}}  \\
& Sin. &  73.7{\tiny$\pm$2.4} & 71.1{\tiny$\pm$1.9} & 74.8{\tiny$\pm$2.5} & 61.3{\tiny$\pm$2.1} & 82.1{\tiny$\pm$1.2} & 80.9{\tiny$\pm$1.1} &81.9{\tiny$\pm$1.7} & 81.0{\tiny$\pm$2.1}& 77.5{\tiny$\pm$2.1} & \textbf{84.7{\tiny$\pm$0.7}} \\
& Squ. & 72.6{\tiny$\pm$2.4} & 70.9{\tiny$\pm$1.8} &  74.6{\tiny$\pm$2.2} & 61.7{\tiny$\pm$2.1} & 81.5{\tiny$\pm$1.3} & 78.8{\tiny$\pm$1.9} & 82.4{\tiny$\pm$1.2} & 80.9{\tiny$\pm$1.6} & 77.7{\tiny$\pm$1.3} &\textbf{83.9{\tiny$\pm$0.9}} \\
& Ber. & 72.1{\tiny$\pm$2.3} & 71.0{\tiny$\pm$2.0} & 74.7{\tiny$\pm$2.1} & 61.1{\tiny$\pm$1.7} & 80.9{\tiny$\pm$1.6} & 79.1{\tiny$\pm$1.1} & 82.1{\tiny$\pm$0.9} & 80.8{\tiny$\pm$1.7} & 77.8{\tiny$\pm$1.5} &\textbf{84.2{\tiny$\pm$0.4}} \\\midrule

\multirow{4}{*}{CIFAR-10}
& Lin. & 28.4{\tiny$\pm$1.7} & 22.6{\tiny$\pm$3.3} &17.2{\tiny$\pm$4.1} & 27.4{\tiny$\pm$4.2} & 66.1{\tiny$\pm$1.2} & 53.6{\tiny$\pm$2.1} &64.8{\tiny$\pm$2.2} & 65.7{\tiny$\pm$2.1} & 51.3{\tiny$\pm$3.1}& \textbf{68.0{\tiny$\pm$1.7}} \\
& Sin. & 28.9{\tiny$\pm$2.1} & 23.1{\tiny$\pm$3.1} &18.3{\tiny$\pm$4.5} & 27.1{\tiny$\pm$3.7} & 66.7{\tiny$\pm$1.2} & 53.3{\tiny$\pm$2.1} & 64.7{\tiny$\pm$1.9} & 65.1{\tiny$\pm$1.8} & 51.6{\tiny$\pm$4.1} & \textbf{67.7{\tiny$\pm$1.7}}\\
& Squ. & 25.9{\tiny$\pm$1.7} & 21.6{\tiny$\pm$2.6} &15.8{\tiny$\pm$3.7} & 25.1{\tiny$\pm$2.2} & 67.1{\tiny$\pm$1.5} &54.3{\tiny$\pm$1.3} & 64.8{\tiny$\pm$1.2} & 65.7{\tiny$\pm$1.6} & 52.3{\tiny$\pm$3.5} & \textbf{66.4{\tiny$\pm$1.6}}\\
& Ber. & 26.1{\tiny$\pm$2.1} & 21.7{\tiny$\pm$2.9} &16.1{\tiny$\pm$4.1} & 24.5{\tiny$\pm$2.5} &67.0{\tiny$\pm$1.3} & 54.1{\tiny$\pm$1.3} & 64.3{\tiny$\pm$1.7} & 65.4{\tiny$\pm$1.4} & 52.6{\tiny$\pm$2.8} & \textbf{67.1{\tiny$\pm$1.3}} \\\midrule

\rowcolor{gray!15}  
\multicolumn{12}{c}{\textbf{(ii) Covariate Shift Scenarios}} \\\midrule 
\multirow{4}{*}{\shortstack{CIFAR-10\\CIFAR-10-C}}
& Lin. &  15.4{\tiny$\pm$0.4} & 10.1{\tiny$\pm$0.8} & 39.1{\tiny$\pm$0.5} & 37.8{\tiny$\pm$1.1} & 41.6{\tiny$\pm$0.4} &35.2{\tiny$\pm$0.5}&58.0{\tiny$\pm$0.6} & 58.1{\tiny$\pm$0.6} & 36.7{\tiny$\pm$2.4} & \textbf{59.5{\tiny$\pm$0.5}} \\
& Sin. &  15.2{\tiny$\pm$0.5} & 11.9{\tiny$\pm$0.7} & 39.2{\tiny$\pm$0.4} & 37.2{\tiny$\pm$0.9} & 41.7{\tiny$\pm$0.6} &33.1{\tiny$\pm$0.4} &57.1{\tiny$\pm$0.5} & 57.8{\tiny$\pm$0.7} & 35.3{\tiny$\pm$1.7} & \textbf{59.3{\tiny$\pm$0.5}} \\
& Squ. &  13.5{\tiny$\pm$0.5} & 13.1{\tiny$\pm$0.6} & 38.3{\tiny$\pm$0.4} & 36.8{\tiny$\pm$1.2} & 43.1{\tiny$\pm$0.5}  &32.1{\tiny$\pm$0.7} &56.9{\tiny$\pm$0.9} & 57.3{\tiny$\pm$1.1} & 35.6{\tiny$\pm$2.1} &\textbf{60.4{\tiny$\pm$0.5}} \\
& Ber. &  14.8{\tiny$\pm$0.5} & 12.2{\tiny$\pm$1.1} & 38.7{\tiny$\pm$0.4} & 36.6{\tiny$\pm$1.1} & 43.2{\tiny$\pm$0.5} &33.0{\tiny$\pm$0.5}&56.7{\tiny$\pm$0.7} & 67.1{\tiny$\pm$1.0} & 46.1{\tiny$\pm$2.7} & \textbf{60.3{\tiny$\pm$0.4}}\\\midrule

\multirow{4}{*}{\shortstack{CIFAR-100\\CIFAR-100-C}}
& Lin. &  4.2{\tiny$\pm$1.7} & 2.4{\tiny$\pm$0.2} & 29.1{\tiny$\pm$0.1} & 31.3{\tiny$\pm$0.3} & 20.2{\tiny$\pm$0.4} & 20.7{\tiny$\pm$0.6} &42.0{\tiny$\pm$0.4} & 40.8{\tiny$\pm$0.2} &37.1{\tiny$\pm$1.1} & \textbf{43.1{\tiny$\pm$0.5}} \\
& Sin. & 5.6 {\tiny$\pm$2.2} & 2.5{\tiny$\pm$0.1} & 29.5{\tiny$\pm$0.3} & 30.7{\tiny$\pm$0.8} & 20.3{\tiny$\pm$0.5} & 20.7{\tiny$\pm$0.6} & 41.9{\tiny$\pm$0.4} &40.9{\tiny$\pm$0.3} &36.6{\tiny$\pm$1.3} & \textbf{44.4{\tiny$\pm$0.4}}  \\
& Squ. & 5.7{\tiny$\pm$2.1}  & 3.2{\tiny$\pm$0.3} & 27.1{\tiny$\pm$0.4}  & 29.1{\tiny$\pm$0.7}  & 22.9{\tiny$\pm$0.4} &  18.6{\tiny$\pm$0.8} & 41.6{\tiny$\pm$0.3}& 41.2{\tiny$\pm$0.5}&37.1{\tiny$\pm$1.5} & \textbf{45.1{\tiny$\pm$0.5}}  \\
& Ber. & 6.1{\tiny$\pm$1.8}  & 3.1{\tiny$\pm$0.4} & 27.8{\tiny$\pm$0.4}  & 29.6{\tiny$\pm$0.8}  & 21.8{\tiny$\pm$0.8} &  19.3{\tiny$\pm$0.8} & 40.3{\tiny$\pm$0.5}&41.4{\tiny$\pm$0.5} &34.1{\tiny$\pm$1.4} &\textbf{44.8{\tiny$\pm$0.6}}  \\




\bottomrule
\end{tabular}}
\vspace{-0.3cm}
\label{sim:performance_compare_g}
\end{table*}

\begin{table*}[t]
\centering
\caption{Performance comparison of different online distribution shift adaptation methods across datasets and shift types  with pretrained model from exponential decay distribution(average accuracy (\%) and average wall time (sec.))}
\scriptsize
\setlength{\tabcolsep}{2pt} 
\resizebox{\textwidth}{!}{%
\begin{tabular}{@{}c c | c c c c| c c c c c c}
\toprule
\multicolumn{2}{c|}{\textbf{}} & \multicolumn{4}{c|}{\textbf{Localized Learning}} & \multicolumn{6}{c}{\textbf{Federated Learning}} \\
\cmidrule(r){3-6} \cmidrule(l){7-12}
\textbf{Dataset} & \textbf{Shift} & FTH & ATLAS & UNIDA & UDA & Fed-POE & FedCCFA &FixLR(Low) & FixLR(Mid) & FixLR(High) & \textbf{Fed-ADE} \\
\midrule
\rowcolor{gray!15}  
\multicolumn{12}{c}{\textbf{(i) Label Shift Scenarios}} \\\midrule 

\multirow{4}{*}{\shortstack{Tiny\\ImageNet}}
& Lin. &  78.2{\tiny$\pm$1.0} & 70.2{\tiny$\pm$2.1} & 77.3{\tiny$\pm$0.8} & 69.0{\tiny$\pm$0.6} & 79.9{\tiny$\pm$0.3} & 82.1{\tiny$\pm$0.4} &81.8{\tiny$\pm$0.5} & 82.5{\tiny$\pm$0.3} & 81.4{\tiny$\pm$0.2} &\textbf{83.1{\tiny$\pm$0.4}}  \\
& Sin. &  77.9{\tiny$\pm$0.8} & 71.4{\tiny$\pm$1.6} & 77.1{\tiny$\pm$0.6} & 69.4{\tiny$\pm$0.5} & 80.4{\tiny$\pm$0.6} & 81.5{\tiny$\pm$0.4} &82.1{\tiny$\pm$0.2} & 82.7{\tiny$\pm$0.5}& 81.3{\tiny$\pm$0.3} & \textbf{82.7{\tiny$\pm$0.5}} \\
& Squ. & 77.2{\tiny$\pm$0.8} &  71.1{\tiny$\pm$1.8} &  76.9{\tiny$\pm$0.6} & 69.4{\tiny$\pm$0.5} & 80.9{\tiny$\pm$0.5} & 81.2{\tiny$\pm$0.3} & 82.3{\tiny$\pm$0.4} & 82.1{\tiny$\pm$0.4} & 80.1{\tiny$\pm$0.3} &\textbf{82.6{\tiny$\pm$0.3}} \\
& Ber. & 78.2{\tiny$\pm$1.1} &  70.7{\tiny$\pm$1.6} &  76.6{\tiny$\pm$0.7} & 69.7{\tiny$\pm$0.4} & 80.4{\tiny$\pm$0.5} & 80.4{\tiny$\pm$0.3} & 81.1{\tiny$\pm$0.4} & 82.1{\tiny$\pm$0.2} & 80.4{\tiny$\pm$0.3} &\textbf{81.8{\tiny$\pm$0.4}} \\\midrule

\multirow{4}{*}{CIFAR-10}
& Lin. & 27.7{\tiny$\pm$1.4} & 29.0{\tiny$\pm$2.0} &20.2{\tiny$\pm$1.3} & 27.6{\tiny$\pm$1.8} & 65.1{\tiny$\pm$1.0} & 63.8{\tiny$\pm$0.9} &64.1{\tiny$\pm$1.4} & 64.8{\tiny$\pm$1.3} & 60.1{\tiny$\pm$1.3}& \textbf{66.2{\tiny$\pm$1.7}} \\
& Sin. &28.4{\tiny$\pm$1.2} & 28.3{\tiny$\pm$2.0} &20.3{\tiny$\pm$1.1} & 28.1{\tiny$\pm$1.6} & 64.7{\tiny$\pm$0.6} & 63.1{\tiny$\pm$1.3} & 63.8{\tiny$\pm$1.2} & 64.3{\tiny$\pm$1.4} & 60.3{\tiny$\pm$0.8} & \textbf{65.9{\tiny$\pm$1.7}}\\
& Squ. & 27.1{\tiny$\pm$1.2} & 28.5{\tiny$\pm$1.8} &21.1{\tiny$\pm$0.9} & 24.8{\tiny$\pm$1.6} & 64.6{\tiny$\pm$0.9} &62.8{\tiny$\pm$1.1} & 64.5{\tiny$\pm$1.1} & 64.1{\tiny$\pm$1.6} & 61.1{\tiny$\pm$0.9} & \textbf{66.4{\tiny$\pm$1.5}}\\
& Ber. & 26.6{\tiny$\pm$1.5} & 27.1{\tiny$\pm$1.9} &20.7{\tiny$\pm$1.1} & 25.1{\tiny$\pm$2.1} &64.3{\tiny$\pm$1.1} & 62.3{\tiny$\pm$0.7} & 64.7{\tiny$\pm$1.4} & 63.8{\tiny$\pm$1.5} & 60.8{\tiny$\pm$1.2} & \textbf{67.1{\tiny$\pm$1.3}} \\\midrule

\rowcolor{gray!15}  
\multicolumn{12}{c}{\textbf{(ii) Covariate Shift Scenarios}} \\\midrule 
\multirow{4}{*}{\shortstack{CIFAR-10\\CIFAR-10-C}}
& Lin. &  23.7{\tiny$\pm$0.2} & 13.9{\tiny$\pm$0.2} & 43.3{\tiny$\pm$0.6} & 45.7{\tiny$\pm$0.8} & 44.5{\tiny$\pm$0.8} & 43.1{\tiny$\pm$0.5} &63.4{\tiny$\pm$0.2} & 63.9{\tiny$\pm$0.3} & 40.6{\tiny$\pm$2.1} & \textbf{58.3{\tiny$\pm$0.5}} \\
& Sin. &  22.9{\tiny$\pm$0.2} & 13.2{\tiny$\pm$0.2} & 44.8{\tiny$\pm$0.1} & 43.5{\tiny$\pm$0.3} & 44.7{\tiny$\pm$1.2} &43.3{\tiny$\pm$0.4} &63.5{\tiny$\pm$0.3} & 63.7{\tiny$\pm$0.3} & 38.1{\tiny$\pm$3.1} & \textbf{58.2{\tiny$\pm$0.5}} \\
& Squ. & 23.8{\tiny$\pm$0.1} & 14.1{\tiny$\pm$0.4} & 42.2{\tiny$\pm$0.1} & 43.3{\tiny$\pm$0.2} & 48.5{\tiny$\pm$1.2} & 41.6{\tiny$\pm$0.7} & 62.7{\tiny$\pm$2.1} & 64.5{\tiny$\pm$2.1} & 39.6{\tiny$\pm$2.8} &\textbf{59.0{\tiny$\pm$0.5}} \\
& Ber. &  23.6{\tiny$\pm$0.2} & 14.2{\tiny$\pm$0.3} & 42.7{\tiny$\pm$0.2} & 42.3{\tiny$\pm$0.1} & 48.7{\tiny$\pm$0.9} & 42.0{\tiny$\pm$0.5} &64.1{\tiny$\pm$2.1} & 64.4{\tiny$\pm$2.1} & 40.8{\tiny$\pm$2.2} & \textbf{59.0{\tiny$\pm$0.4}}\\\midrule

\multirow{4}{*}{\shortstack{CIFAR-100\\CIFAR-100-C}}
& Lin. & 5.1{\tiny$\pm$1.8} & 6.4{\tiny$\pm$0.4} & 24.4{\tiny$\pm$0.5} & 27.6{\tiny$\pm$0.5} & 19.1{\tiny$\pm$2.1} & 17.9{\tiny$\pm$1.1}   &40.8{\tiny$\pm$0.5} & 39.1{\tiny$\pm$0.6} &37.7{\tiny$\pm$1.2} & \textbf{42.7{\tiny$\pm$0.5}} \\
& Sin. & 5.3 {\tiny$\pm$1.7} & 5.1{\tiny$\pm$0.4} & 24.3{\tiny$\pm$0.6} & 28.1{\tiny$\pm$0.5} & 20.1{\tiny$\pm$1.9} & 18.3{\tiny$\pm$1.1}   & 40.7{\tiny$\pm$0.6} &38.6{\tiny$\pm$0.7} &37.6{\tiny$\pm$1.2} & \textbf{42.4{\tiny$\pm$0.6}}  \\
& Squ. & 4.8{\tiny$\pm$1.5}  & 4.7{\tiny$\pm$0.5} & 24.1{\tiny$\pm$0.4}  & 28.3{\tiny$\pm$0.7}  & 19.8{\tiny$\pm$2.1} &  18.1{\tiny$\pm$0.9} & 39.4{\tiny$\pm$0.6}& 38.2{\tiny$\pm$0.7}&38.1{\tiny$\pm$1.1} & \textbf{42.1{\tiny$\pm$0.4}}  \\
& Ber. & 4.3{\tiny$\pm$1.3}  & 5.1{\tiny$\pm$0.3} & 23.5{\tiny$\pm$0.4}  & 27.8{\tiny$\pm$0.6}  & 19.7{\tiny$\pm$2.3} &  16.9{\tiny$\pm$1.0} & 40.1{\tiny$\pm$0.4}&37.7{\tiny$\pm$0.6} &37.8{\tiny$\pm$1.2} &\textbf{42.0{\tiny$\pm$0.5}}  \\



\bottomrule
\end{tabular}}
\vspace{-0.3cm}
\label{sim:performance_compare_e}
\end{table*}

\subsection{Why Cosine Similarity? Alternatives and Ablations}
\label{app:cosin}

In this paper, our Fed-ADE must be (i) online \& per-client (no extra communication), (ii) label-free (only pseudo-labels/representations), (iii) robust to pseudo-label noise and transient shifts, (iv) lightweight in compute and memory, and (v) bounded and interpretable so that we can stably map it to a learning rate in $[\eta_{\min},\eta_{\max}]$ across heterogeneous clients.

\paragraph{Cosine similarity meets these requirements}

\begin{itemize}[leftmargin=1.2em,itemsep=0.2ex]
\item \textbf{Bounded}: $S_c^t$ is bounded in [0,1], so it leads to a stable mapping to learning across clients and timestep
\item \textbf{Efficient}: Linear-time, no optimization/probabilistic inference
\item \textbf{Robust}: Scale-invariant and less sensitive to sparsity/outliers in pseudo-label histograms; temperature rescaling or mild miscalibration does not change directions.

\end{itemize}

\paragraph{Alternatives considered}
We implemented three commonly suggested alternatives and found consistent drawbacks in our setting.

\noindent\textbf{(1) KL divergence on batch histograms}:  
\emph{Pros:} classical information-theoretic distance.  
\emph{Cons:} asymmetric and \emph{unbounded}; becomes unstable/infinite under zeros/sparsity—common with noisy pseudo-labels on small batches. Requires smoothing/regularization and hyperparameters (e.g., $\epsilon$-floor), which are highly dataset/client dependent and harm cross-client consistency. Mapping an unbounded statistic to a stable learning rate is delicate.

\noindent\textbf{(2) Wasserstein distance}:  
Requires a ground metric on classes or high-dim features; even with Sinkhorn regularization, it needs iterative optimization (per batch pair) and careful tuning. \emph{Pros:} captures geometric discrepancies. \emph{Cons:} higher compute and memory; sensitive to transient fluctuations; scale depends on ground metric $\Rightarrow$ learning rate scaling across clients becomes ill-conditioned. In high-dim feature space, pairwise costs/transport exacerbate overhead.

\noindent\textbf{(3) Bayesian change-point detection}:  
\emph{Pros:} probabilistic treatment of regime switches. \emph{Cons:} assumes i.i.d.\ within segments and requires prior modeling; posterior updates/inference add computation; fragile to batch-level noise (false alarms/delays). Typically designed for detection (binary), not for per-step \emph{continuous} shift magnitudes needed to drive learning rate.

\paragraph{Performance comparison of cosine similarity vs. alternatives}
We compared cosine similarity with KL, Wasserstein, and Bayesian CPD under four dynamic shift patterns (Lin./Sin./Squ./Ber.) on CIFAR-10. Cosine consistently achieved the best accuracy:

\begin{center}
\begin{tabular}{lcccc}
\toprule
\textbf{Measure} & \textbf{Lin.} & \textbf{Sin.} & \textbf{Squ.} & \textbf{Ber.} \\
\midrule
Fed-ADE (Cosine similarity) & \textbf{73.8 $\pm$ 0.6} & \textbf{73.6 $\pm$ 0.5} & \textbf{72.2$\pm$ 1.6} & \textbf{72.9 $\pm$ 2.2} \\
KL divergence & 63.2 $\pm$ 3.8 & 62.4 $\pm$ 2.3 & 61.1 $\pm$ 3.4 & 61.4 $\pm$ 3.6 \\
Wasserstein & 70.8 $\pm$ 0.6 & 69.9 $\pm$ 1.1 & 68.8 $\pm$ 1.3 & 66.5 $\pm$ 2.1 \\
Bayesian CPD & 71.5 $\pm$ 0.5 & 70.8 $\pm$ 0.8 & 69.7 $\pm$ 2.4 & 69.0 $\pm$ 1.1\\
\bottomrule
\end{tabular}
\end{center}

Gains of the cosine similarity method are largest under label noise and transient shifts, where bounded, direction-only comparison stabilizes learning rate scaling across clients.

\end{document}